\pgfplotsset{compat=1.17}
\setlist[enumerate]{leftmargin=.5in}
\setlist[itemize]{leftmargin=.5in}
\crefname{hypothesis}{Hypothesis}{Hypotheses}
\title{Conditionally Elicitable Dynamic Risk Measures for Deep Reinforcement Learning\thanks{\today.
\funding{AC acknowledges support from the Natural Sciences and Engineering Research Council of Canada (grants CGSD3-2019-534435). SJ acknowledges support from the Natural Sciences and Engineering Research Council of Canada (grants RGPIN-2018-05705 and RGPAS-2018-522715) and the Data Science Institute Catalyst Grant.}}}
\author{Anthony Coache\thanks{Department of Statistical Sciences, University of Toronto, Canada
  (\email{anthony.coache@mail.utoronto.ca}, \url{anthonycoache.ca}; \email{sebastian.jaimungal@utoronto.ca}, \url{sebastian.statistics.utoronto.ca}).}
\and Sebastian Jaimungal\footnotemark[2]
\and Álvaro Cartea\thanks{Oxford-Man Institute of Quantitative Finance, Mathematical Institute, University of Oxford, United Kingdom 
  (\email{alvaro.cartea@maths.ox.ac.uk}, \url{sites.google.com/site/alvarocartea/home}).}}
\newcommand{\ML}{ML} % machine learning
\newcommand{\RL}{RL} % reinforcement learning
\newcommand{\MDP}{MDP} % markov decision process
\newcommand{\DP}{DP} % dynamic programming
\newcommand{\ANN}{ANN} % artificial neural network
\newcommand{\BSDE}{BSDE} % backward stochastic differential equation
\newcommand{\SDE}{SDE} % stochastic differential equation
\newcommand{\CVaR}{\text{CVaR}} % conditional value-at-risk
\newcommand{\VaR}{\text{VaR}} % value-at-risk
\newcommand{\EM}{EM} % expectation-maximization
\newcommand{\PnL}{P\&L} % profit and loss
\newcommand{\CDF}{CDF} % cumulative distribution function
\newcommand{\triangleineq}{$\triangle$ ineq.} % triangle inequality
\newcommand{\statespace}{\Ss}
\newcommand{\state}{s}
\newcommand{\statedum}{\state'}
\newcommand{\actionspace}{\Aa}
\newcommand{\action}{a}
\newcommand{\costspace}{\Cc}
\newcommand{\costfunc}{c}
\newcommand{\periodspace}{\Tt}
\newcommand{\eplength}{T}
\newcommand{\timeidx}{t}
\newcommand{\policy}{\pi}
\newcommand{\policyparams}{\theta}
\newcommand{\valuefunc}{V}
\newcommand{\valueparams}{\psi}
\newcommand{\Nbatchs}{B}
\newcommand{\batchidx}{b}
\newcommand{\lrate}{\eta}
\newcommand{\lagrangian}{L}
\newcommand{\grad}[1]{\nabla_{#1}}
\newcommand{\riskmeas}{\rho}
\newcommand{\weight}{\xi}
\newcommand{\spectrum}{\varphi}
\newcommand{\Lpspace}{\Yy}
\newcommand{\rvdum}{Z}
\newcommand{\rv}{Y}
\newcommand{\rvsupp}{\YY}
\newcommand{\estim}{\mathfrak{a}}
\newcommand{\estimsupp}{\AA}
\newcommand{\score}{S}
\newcommand{\statmap}{M}
\newcommand{\price}{S}
\newcommand{\pricespace}{\PosReals}
\newcommand{\Nats}{\NN}
\newcommand{\Reals}{\RR}
\newcommand{\PosReals}{\Reals_+}
\newcommand{\dee}{\mathrm{d}}
\DeclareMathOperator*{\argmin}{\arg\min}
\newcommand{\Ind}{\mathds 1}
\newcommand{\suchthat}{\;\ifnum\currentgrouptype=16 \middle\fi|\;}
\DeclareMathOperator*{\esssup}{ess\sup}
\newcommand{\normaldist}{\Nn}
\DeclareMathOperator{\diag}{diag} % diagonal
\newcommand{\tr}{^\top} % transpose
\renewcommand{\AA}{\mathbb{A}}
\newcommand{\EE}{\mathbb{E}}
\newcommand{\FF}{\mathbb{F}}
\newcommand{\NN}{\mathbb{N}}
\newcommand{\PP}{\mathbb{P}}
\newcommand{\RR}{\mathbb{R}}
\newcommand{\YY}{\mathbb{Y}}
\newcommand{\Aa}{\mathcal{A}}
\newcommand{\Cc}{\mathcal{C}}
\newcommand{\Ff}{\mathcal{F}}
\newcommand{\Ll}{\mathcal{L}}
\newcommand{\Nn}{\mathcal{N}}
\newcommand{\Pp}{\mathcal{P}}
\newcommand{\Ss}{\mathcal{S}}
\newcommand{\Tt}{\mathcal{T}}
\newcommand{\Uu}{\mathcal{U}}
\newcommand{\Yy}{\mathcal{Y}}
\definecolor{mblue}{rgb}{0.098,0.18,0.357}
\definecolor{mred}{rgb}{0.902,0.4157,0.0196}
\begin{document}

\newcommand{\new}[1]{#1}
\newcommand{\newmath}[1]{#1}

\maketitle

% \allowdisplaybreaks

% REQUIRED
%!TEX root = ../main.tex

% --------------------------------------------------------------
%                         Abstract
% --------------------------------------------------------------

\begin{abstract}
    We propose a novel framework to solve risk-sensitive reinforcement learning (\RL{}) problems where the agent optimises time-consistent dynamic spectral risk measures.
    Based on the notion of conditional elicitability, our methodology constructs (strictly consistent) scoring functions that are used as penalizers in the estimation procedure.
    Our contribution is threefold: we (i) devise an efficient approach to estimate a class of dynamic spectral risk measures with deep neural networks, (ii) prove that these dynamic spectral risk measures may be approximated to any arbitrary accuracy using deep neural networks, and (iii) develop a risk-sensitive actor-critic algorithm that uses full episodes and does not require any additional nested transitions.
    We compare our conceptually improved reinforcement learning algorithm with the nested simulation approach and illustrate its performance in two settings: statistical arbitrage and portfolio allocation on both simulated and real data.
\end{abstract}

% SIAM Keywords
\begin{keywords}
Dynamic Risk Measures, Reinforcement Learning, Elicitability, Consistent Scoring Functions, Time-Consistency, Actor-Critic Algorithm, Portfolio Allocation, Statistical Arbitrage
\end{keywords}

% REQUIRED
\begin{MSCcodes}
68T37, 91-08, 91G10, 91G70, 93E35.
\end{MSCcodes}

% introduction and related work
%!TEX root = ../main.tex

% --------------------------------------------------------------
%                   Introduction
% --------------------------------------------------------------
\section{Introduction}
\label{sec:introduction}

One principled model-free framework for learning-based control is reinforcement learning (\RL{}) \cite{sutton2018reinforcement}.
In \RL{}, the agent observes feedback in the forms of costs from interactions with an environment, uses this information to update its current behaviour, and aims to discover the best course of action based on a certain objective.
In recent years, deep learning -- i.e. relying on neural network structures to approximate complex functions -- has shown remarkable success in \RL{} applications, ranging from mastering Atari 2600 video games \cite{mnih2015human} to developing autonomous image-learning robots \cite{levine2016end} and defeating world champions Go players \cite{silver2016mastering}.
It also has become an appealing alternative in several financial decision making problems, where one wishes to learn optimal strategies with no explicit assumptions about the environment.
For a thorough survey of recent advances in \RL{} applied to financial problems, see e.g. \cite{hambly2021recent,jaimungal2022reinforcement,hu2022recent}.

In \RL{}, the agent's optimisation problem must take into account the additional randomness due to the uncertainty in the environment.
One can think of, for instance, a trader concerned by the risks associated with financial assets, an autonomous vehicle which must pay attention to weather and road conditions, or a medical worker whose actions and treatments impact the life of its patients.
In most (if not all) real-life applications, there exists inherent uncertainty in the environment, and the agent must adapt its strategy to avoid potentially catastrophic consequences.
For an overview and outlook of safety concerns in \RL{} algorithms, see e.g. \cite{garcia2015comprehensive}.

There are numerous proposals that accounts for risk sensitivity in the literature, often called risk-aware or risk-sensitive \RL{} frameworks (see e.g. \cite{di2019practical,nass2019entropic,kalogerias2020better,bauerle2021minimizing,jaimungal2022robust}).
Risk-aware \RL{} proposes to quantify risk of random costs through risk measures (instead of the usual expectation) to account for environmental uncertainties.
This risk-awareness provides robustness to low-probability but high-cost outcomes in the environment, and allows more flexibility than traditional approaches because the agent may choose the risk measure considering its goals and risk tolerances.

In the extant literature, several authors address risk evaluation for sequential decision making problems by applying risk measures recursively to a sequence of cost random variables, and by optimising the risk in a \new{time-consistent and} model-free dynamic framework as additional information becomes available.
For instance, \cite{ruszczynski2010risk} evaluates the risk at each period using \emph{dynamic Markov coherent risk measures}, \cite{chu2014markov} and \cite{bauerle2021markov} propose iterated coherent risk measures, where they both derive risk-aware \emph{dynamic programming (\DP{}) equations} and provide \emph{policy iteration algorithms}, \cite{ahmadi2021constrained} investigate bounded policy iteration algorithms for \emph{partially observable Markov decision processes}, \cite{kose2021risk} prove the convergence of \emph{temporal difference algorithms} optimising dynamic Markov coherent risk measures, and \cite{cheng2022markov} derive a \DP{} principle for \emph{Kusuoka-type conditional risk mappings}.
However, these works require computing the value function for every possible state of the environment, limiting their applicability to problems with a small number of state-action pairs.
\new{In contrast, here we aim to develop computational approaches to optimise \RL{} problems that allows for both continuous or discrete states and actions.}

A recent development in risk-aware \RL{} is that in \cite{coache2021reinforcement}.
The authors use \emph{dynamic convex risk measures} and devise a model-free approach to solve \emph{finite-horizon} \RL{} problems in a \emph{time-consistent manner}.
This extends the work from \cite{tamar2016sequential} that studies optimal stationary policies under dynamic coherent risk measures. 
They also demonstrate the performance and flexibility of their approach on several benchmark examples, which\new{, by generating strategies that mitigate risk and not simply maximising expectation,} effectively accounts for uncertainty in the data-generating processes.
\new{In both works, one downside of the proposed actor-critic algorithms is the use of a \emph{nested simulation} or \emph{simulation upon simulation} approach.}
Such nested simulation approaches, where one generates (outer) episodes and (inner) transitions for every visited state, are computationally expensive, as it requires a large number of simulations to obtain accurate results.
In many real-world applications, simulations are costly -- the acquisition of new observations may not be possible, for instance in trading markets and medical trials -- making this methodology highly ineffective (in terms of memory and speed) or even impracticable.
One of our motivations is to develop a novel framework which circumvents simulating additional transitions.

If we restrict the broad class of convex risk measures to a narrower class of risk mappings that are \emph{elicitable}, one can achieve significant improvements in the estimation of the risk.
Although the term ``elicitability'' was established recently by \cite{lambert2008eliciting}, the general idea originates from the seminal work of \cite{osband1985providing}.
Elicitable (and conditionally elicitable) statistical functions have a corresponding loss function, often called \emph{scoring function}, that can be used as a penalizer when updating its point estimate -- a common example is the expectation which can be estimated using the mean squared error.
In the \RL{} literature, \cite{shen2014risk} suggest a Q-learning algorithm for a \new{class of elicitable mappings, i.e. \emph{utility-based shortfall risk measures}, in case of tabular Markovian decision processes}, while \cite{marzban2021deep} propose a modification for \emph{dynamic expectile risk measures} and continuous spaces.
Recently, \cite{fissler2016higher} provided characterisations of those scoring functions for many risk measures, including a class of static spectral risk measures.
To the best of our knowledge, there is no work that explores the notion of (strictly consistent) scoring functions for \new{a large class of conditionally elicitable} risk measures in a dynamic framework.

In this paper, we develop a practical and efficient framework to estimate dynamic spectral risk measures, which allows us to devise a conceptually improved methodology for actor-critic algorithms in risk-aware \RL{} problems.
\new{Our contributions are:
(i) devise a composite model regression framework using artificial neural networks (\ANN{}s) to estimate a class of elicitable dynamic spectral risk measures;
(ii) prove that our estimation method can approximate the dynamic risk to an arbitrary accuracy given sufficiently large neural network structures;
(iii) implement a risk-aware actor-critic algorithm using our proposed framework that exclusively uses full (outer) episodes, i.e. does not require additional (inner) transitions; and
(iv) validate our approach on a benchmark example, and illustrate its performance on a portfolio allocation problem using both simulated and real data.}

The remainder of the paper is organised as follows.
We provide a review of risk evaluation in a dynamic framework in \cref{sec:dynamic-risk}.
\Cref{sec:reinforcement-learning} formalises the \RL{} problems we investigate.
We then introduce the key concepts on elicitability in \cref{sec:elicitability}, and explain how one uses deep composite regression models to estimate elicitable mappings in \cref{sec:deep-regression}.
\Cref{sec:algo-framework} presents our proposed actor-critic algorithm, and \cref{sec:experiments} illustrates the performance of our novel framework.
Finally, we discuss the generalisation of the methodology to arbitrary dynamic spectral risk measures in \cref{sec:generalization-spectral}, and present future directions in \cref{sec:conclusion}.

% risk notation
%!TEX root = ../main.tex

% --------------------------------------------------------------
%                   Dynamic risk setting
% --------------------------------------------------------------
\section{Dynamic Risk Setting}
\label{sec:dynamic-risk}

In this section, we provide a brief overview of dynamic risk measures, which, as additional information becomes available, assess the risk of sequences of random variables, such as cash-flows, in a dynamic framework.

Let $(\Omega, \Ff, \PP)$ be a probability space.
A \emph{static risk measure} is a mapping $\riskmeas: \Lpspace \rightarrow \bar{\Reals}$, quantifying the risk of a certain random variable $\rv \in \Lpspace$, that satisfies additional assumptions.
In what follows, we assume that $\Lpspace$ is the space of bounded $p$-integrable $\Ff$-measurable random variables, with $p \in [1,\infty]$, and all random variables are interpreted as random costs.
Next, we introduce some well-known risk measures commonly used in the literature.
\begin{definition}
    \label{def:value-at-risk}
    The \emph{value-at-risk} \cite{artzner1999coherent} with threshold $\alpha \in (0,1)$ of $\rv$, denoted $\VaR_{\alpha}(\rv)$, is given by the $\alpha$-quantile of the distribution of $\rv$.
\end{definition}
\begin{definition}
    \label{def:conditional-value-at-risk}
    The \emph{conditional value-at-risk} \cite{rockafellar2000optimization} with threshold $\alpha \in (0,1)$ of $\rv$ is given by 
    \begin{equation}
        \CVaR_{\alpha}(\rv) = \frac{1}{1-\alpha} \int_{\alpha}^{1} \VaR_{u}(\rv) \, \dee u.
    \end{equation}
\end{definition}
\begin{definition}
    \label{def:spectral-risk}
    A \emph{spectral risk measure} \cite{kusuoka2001law} of the random variable of $\rv$, denoted $\riskmeas^{\spectrum}$, is defined as
    \begin{equation}
        \riskmeas^{\spectrum} (\rv) = \int_{[0,1]} \CVaR_{\alpha}(\rv) \, \spectrum (\dee \alpha),
    \end{equation}
    where $\spectrum$ is a nonnegative, nonincreasing measure such that $\int_{[0,1]} \spectrum(\dee \alpha) = 1$ (also known as the spectrum).
\end{definition}
Spectral risk measures satisfy several properties shared with other risk measures in the literature.
We list some properties in the following proposition.
\begin{proposition}
    Let $\gamma \in \Reals$, $\beta > 0$ and $\rv,\rvdum \in \Lpspace$. A spectral risk measure $\riskmeas^{\spectrum}$ is said to be
    \begin{enumerate}
    	\item \emph{monotone}, i.e. $\rv \leq \rvdum$ implies $\riskmeas^{\spectrum}(\rv) \leq \riskmeas^{\spectrum}(\rvdum)$;
    	\item \emph{translation invariant}, i.e. $\riskmeas^{\spectrum}(\rv+\gamma) = \riskmeas^{\spectrum}(\rv)+\gamma$;
    	\item \emph{positive homogeneous}, i.e. $\riskmeas^{\spectrum}(\beta \rv) = \beta\, \riskmeas^{\spectrum}(\rv)$; and
    	\item \emph{subadditive}, i.e. $\riskmeas^{\spectrum}(\rv + \rvdum) \leq \riskmeas^{\spectrum}(\rv) + \riskmeas^{\spectrum}(\rvdum)$.
    \end{enumerate}    
\end{proposition}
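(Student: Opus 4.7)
The plan is to reduce the four properties for the spectral risk measure $\riskmeas^{\spectrum}$ to the corresponding properties for $\CVaR_{\alpha}$, and then lift them through the integral against $\spectrum$. Since $\riskmeas^{\spectrum}(\rv) = \int_{[0,1]} \CVaR_{\alpha}(\rv)\, \spectrum(\dee \alpha)$ and $\spectrum$ is a nonnegative probability measure, any inequality or identity that holds for $\CVaR_{\alpha}(\rv)$ pointwise in $\alpha$ will be inherited by $\riskmeas^{\spectrum}$ after integration. So the work really amounts to establishing the four claims for CVaR.

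Properties (i)–(iii) are straightforward and I would dispatch them first by pushing the properties down to $\VaR_{u}$. For monotonicity, $\rv \leq \rvdum$ pointwise implies $\VaR_{u}(\rv) \leq \VaR_{u}(\rvdum)$ directly from the definition of a quantile; integrating over $u \in [\alpha,1]$ and dividing by $1-\alpha$ gives $\CVaR_{\alpha}(\rv) \leq \CVaR_{\alpha}(\rvdum)$, and then integrating against $\spectrum \geq 0$ preserves the inequality. For translation invariance and positive homogeneity, the quantile identities $\VaR_{u}(\rv + \gamma) = \VaR_{u}(\rv) + \gamma$ and $\VaR_{u}(\beta \rv) = \beta \VaR_{u}(\rv)$ (for $\beta > 0$) propagate through the $\CVaR$ integral, and then the normalization $\int_{[0,1]} \spectrum(\dee \alpha) = 1$ gives the constant term in (ii) and the scalar $\beta$ in (iii).

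The genuine content is (iv), subadditivity, since $\VaR$ itself is \emph{not} subadditive and so the argument cannot be pushed all the way down to quantiles. The cleanest route I would take is the Rockafellar–Uryasev variational representation
\[
\CVaR_{\alpha}(\rv) = \inf_{c \in \Reals} \cbra{ c + \tfrac{1}{1-\alpha}\, \E\sbra{(\rv - c)_{+}} },
\]
from which subadditivity follows because $(\rv + \rvdum - c_{1} - c_{2})_{+} \leq (\rv - c_{1})_{+} + (\rvdum - c_{2})_{+}$; taking expectations, dividing by $1-\alpha$, adding $c_{1} + c_{2}$, and infimising over $c_{1}, c_{2}$ separately yields $\CVaR_{\alpha}(\rv + \rvdum) \leq \CVaR_{\alpha}(\rv) + \CVaR_{\alpha}(\rvdum)$. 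Integrating against the nonnegative measure $\spectrum$ then gives $\riskmeas^{\spectrum}(\rv + \rvdum) \leq \riskmeas^{\spectrum}(\rv) + \riskmeas^{\spectrum}(\rvdum)$.

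The main obstacle is, as expected, subadditivity: the other three properties are essentially bookkeeping around quantile identities, but subadditivity requires stepping outside the quantile representation. An alternative would be the dual (coherent) representation $\CVaR_{\alpha}(\rv) = \sup_{Q \in \Qq_{\alpha}} \E^{Q}[\rv]$ over a suitable set of probability measures, for which subadditivity is immediate from the sublinearity of the supremum; I would choose between the two proofs based on how much background the reader is assumed to have. Finally, I would note in passing that (i)–(iv) together with the $L^{p}$ continuity embedded in the $\CVaR$ definition show that $\riskmeas^{\spectrum}$ is a coherent risk measure in the sense of Artzner et al.
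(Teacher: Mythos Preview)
Your proposal is correct. The paper itself does not actually prove this proposition: it is stated as a known fact, followed immediately by the remark that any risk measure satisfying (i)--(iv) is coherent in the sense of \citet{artzner1999coherent}, and that spectral risk measures are coherent. So there is no proof in the paper to compare against beyond this implicit appeal to the literature.

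Your argument---pushing (i)--(iii) down to quantile identities, lifting through the $\CVaR$ integral, and then through the $\spectrum$-integral using nonnegativity and the normalisation $\int_{[0,1]}\spectrum(\dee\alpha)=1$---is the standard route and is carried out correctly. For (iv) you rightly observe that one cannot descend all the way to $\VaR$, and the Rockafellar--Uryasev variational formula is the clean workaround; the pointwise inequality $(\rv+\rvdum-c_1-c_2)_+\le(\rv-c_1)_++(\rvdum-c_2)_+$ does the job. The dual-representation alternative you mention is in fact the one the paper leans on elsewhere (see the proof of \cref{thm:gradient-V}, which invokes the representation theorem for coherent risk measures from \citet{shapiro2014lectures}), so either choice would be consistent with the paper's toolkit.
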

Any risk measure satisfying these four properties is said to be \emph{coherent} \cite{artzner1999coherent}.
Spectral risk measures are coherent, while the converse does not always hold.

Employing static risk measures as an objective function in sequential decision making problems does not account for the temporal structure of intermediate costs that generates the terminal costs, and moreover leads to time-inconsistent solutions -- we give an illustrative example below.
Therefore, to properly monitor random variables at different times, we must adapt risk assessment to a dynamic framework.
Dynamic risk measures have the advantage of being time-consistent, a property that has a general appeal especially in finance applications, so optimising them leads to time-consistent solutions.

Various classes of risk measures have been extended to the dynamic case, such as coherent risk measures \cite{riedel2004dynamic}, distribution-invariant risk measures \cite{weber2006distribution}, convex risk measures \cite{frittelli2004dynamic}, and dynamic assessment indices \cite{bielecki2016dynamic}, among others.
There also exists an equivalence between backward stochastic differential equations (\BSDE{}s) and dynamic risk measures (see e.g. \cite{peng1997backward,cohen2011backward}), which is exploited by some authors to compute dynamic risk measures via a deep \BSDE{} method (see e.g. \cite{han2020convergence}).
However, the dual representation from \BSDE{}s does not directly help to optimise a dynamic risk measure over policies in a model-agnostic manner.
Here, we closely follow the work of \cite{ruszczynski2010risk}
which derives a recursive equation for dynamic risk from general principles.
For a thorough exploration of dynamic risk measures, see e.g. \cite{acciaio2011dynamic}.

Let $\periodspace := \{0, \ldots, \eplength\}$ denote a sequence of periods.
Consider a filtration $\Ff_{0} \subseteq \Ff_{1} \subseteq \ldots \subseteq \Ff_{\eplength} \subseteq \Ff$ on a filtered probability space $(\Omega, \Ff, \{\Ff_{\timeidx}\}_{\timeidx\in\periodspace}, \PP)$\new{, with $\Ff_{0}=\{\emptyset,\Omega\}$  being the trivial $\sigma$-algebra,} and $(\Lpspace_{\timeidx})_{\timeidx \in \periodspace}$ the spaces of bounded $p$-integrable $\Ff_{\timeidx}$-measurable random variables.
Define \new{$\Lpspace_{\timeidx_1,\timeidx_2} := \Lpspace_{\timeidx_1} \times \cdots \times \Lpspace_{\timeidx_2}$ and $\riskmeas_{\timeidx_1,\timeidx_2}: \Lpspace_{\timeidx_1,\timeidx_2} \rightarrow \Lpspace_{\timeidx_1}$}.
\begin{definition}
    \label{def:dynamic-risk-measure}
    A \emph{dynamic risk measure} \cite{ruszczynski2010risk} is a sequence $\{\riskmeas_{\timeidx,\eplength}\}_{\timeidx \in \periodspace}$, where each $\riskmeas_{\timeidx,\eplength}$ satisfies the monotonicity property $\riskmeas_{\timeidx,\eplength}(\rv) \leq \riskmeas_{\timeidx,\eplength} (\rvdum)$ for all $\rv,\rvdum \in \Lpspace_{\timeidx,\eplength}$ such that $\rv \leq \rvdum$.
\end{definition}
Here, \new{inequalities are taken almost surely, and} inequalities between sequences of costs are to be understood component-wise, i.e. $\rv_{\tau} \leq \rvdum_{\tau}$ for all $\timeidx \leq \tau \leq \eplength$.
The mappings $\riskmeas_{\timeidx,\eplength}$, often referred to as \emph{conditional risk measures}, can be interpreted as $\Ff_{\timeidx}$-measurable charges one would be willing to incur at time $\timeidx$ instead of the sequence of future costs.
A crucial property of dynamic risk measures relies on the notion of \emph{time-consistency}.
Indeed, one wishes to evaluate the risk of future outcomes, but it must not lead to inconsistencies at different points in time.
\begin{definition}
	\label{def:time-consistency}
	A dynamic risk measure $\{\riskmeas_{\timeidx,\eplength}\}_{\timeidx \in \periodspace}$ is said to be \emph{time-consistent} \cite{ruszczynski2010risk} iff for any sequence $\rv,\rvdum \in \Lpspace_{\timeidx_1,\eplength}$ and any $\timeidx_1,\timeidx_2 \in \periodspace$ such that $0 \leq \timeidx_1 < \timeidx_2 \leq \eplength$,
	\begin{equation*}
		\rv_{\tau} = \rvdum_{\tau}, \, \forall \tau = \timeidx_1, \ldots, \timeidx_2-1
		\quad \text{and} \quad
		\riskmeas_{\timeidx_2,\eplength}(\rv_{\timeidx_2}, \ldots, \rv_{\eplength}) \leq \riskmeas_{\timeidx_2,\eplength}(\rvdum_{\timeidx_2}, \ldots, \rvdum_{\eplength})
	\end{equation*}
	implies that $\riskmeas_{\timeidx_1,\eplength}(\rv_{\timeidx_1}, \ldots, \rv_{\eplength}) \leq \riskmeas_{\timeidx_1,\eplength}(\rvdum_{\timeidx_1}, \ldots, \rvdum_{\eplength})$.
\end{definition}
\cref{def:time-consistency} may be interpreted as follows: if $\rv$ will be at least as good as $\rvdum$ (in terms of the dynamic risk $\riskmeas_{\timeidx,\eplength}$) at time $\timeidx_2$ and they are identical between $\timeidx_1$ and $\timeidx_2$, then $\rv$ should not be worse than $\rvdum$ (in terms of $\riskmeas_{\timeidx,\eplength}$) at time $\timeidx_1$.
A key result to derive a recursive relationship for time-consistent dynamic risk measures is the following characterisation (see Theorem 1 of \cite{ruszczynski2010risk}).
\begin{theorem}
    \label{thm:time-consistency}
    Let $\{\riskmeas_{\timeidx,\eplength}\}_{\timeidx \in \periodspace}$ be a dynamic risk measure satisfying $\riskmeas_{\timeidx,\eplength}(\rv_{\timeidx}, \rv_{\timeidx+1}, \ldots, \rv_{\eplength}) = \rv_{\timeidx} + \riskmeas_{\timeidx,\eplength}(0, \rv_{\timeidx+1}, \ldots, \rv_{\eplength})$, and $\riskmeas_{\timeidx,\eplength} (0,\ldots,0) = 0$, for any $\rv\in\Lpspace_{\timeidx,\eplength}, \ \timeidx\in\periodspace$.
    Then $\{\riskmeas_{\timeidx,\eplength}\}_{\timeidx \in \periodspace}$ is time-consistent iff for any $0 \leq \timeidx_{1} \leq \timeidx_{2} \leq \eplength$ and $\rv \in \Lpspace_{0,\eplength}$, we have
    \begin{equation*}
    	\riskmeas_{\timeidx_1,\eplength} (\rv_{\timeidx_1}, \ldots, \rv_{\eplength}) = \riskmeas_{\timeidx_1,\timeidx_2} \Big(
    	\rv_{\timeidx_1}, \ldots, \rv_{\timeidx_2 - 1},
    	\riskmeas_{\timeidx_2, \eplength} \big(
    	\rv_{\timeidx_2}, \ldots, \rv_{\eplength}
    	\big) \Big).
    \end{equation*}
\end{theorem}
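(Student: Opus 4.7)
The equivalence has a trivial direction and a constructive direction; the argument hinges on translation invariance together with the normalization $\riskmeas_{\timeidx,\eplength}(0,\ldots,0)=0$ to ``absorb'' the tail of a sequence into a single $\Ff_{\timeidx_2}$-measurable charge.

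First I would prove the ``if'' direction. Suppose the recursion holds and let $\rv,\rvdum\in\Lpspace_{\timeidx_1,\eplength}$ satisfy the two hypotheses of \cref{def:time-consistency}. Applying the recursion at both sequences yields
\[\riskmeas_{\timeidx_1,\eplength}(\rv) = \riskmeas_{\timeidx_1,\timeidx_2}\bigl(\rv_{\timeidx_1},\ldots,\rv_{\timeidx_2-1},\riskmeas_{\timeidx_2,\eplength}(\rv_{\timeidx_2},\ldots,\rv_{\eplength})\bigr),\] and similarly for $\rvdum$. Since $\rv$ and $\rvdum$ agree on $[\timeidx_1,\timeidx_2-1]$, the two right-hand sides differ only in their last coordinate, where the inequality is provided by assumption. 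Monotonicity of $\riskmeas_{\timeidx_1,\timeidx_2}$ (in \cref{def:dynamic-risk-measure}) closes this direction.

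For the ``only if'' direction I would reverse-engineer the recursion. Fix $\timeidx_1<\timeidx_2$ and $\rv\in\Lpspace_{0,\eplength}$, and construct the auxiliary sequence
\[\rvdum := \bigl(\rv_{\timeidx_1},\ldots,\rv_{\timeidx_2-1},\;\riskmeas_{\timeidx_2,\eplength}(\rv_{\timeidx_2},\ldots,\rv_{\eplength}),\;0,\ldots,0\bigr).\]
Using translation invariance followed by the normalization $\riskmeas_{\timeidx_2,\eplength}(0,\ldots,0)=0$, one computes $\riskmeas_{\timeidx_2,\eplength}(\rvdum_{\timeidx_2},0,\ldots,0) = \rvdum_{\timeidx_2} = \riskmeas_{\timeidx_2,\eplength}(\rv_{\timeidx_2},\ldots,\rv_{\eplength})$. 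By construction $\rvdum$ and $\rv$ agree on $[\timeidx_1,\timeidx_2-1]$, so applying time-consistency in both directions (with $\leq$ and $\geq$ in the tail) yields the equality $\riskmeas_{\timeidx_1,\eplength}(\rv) = \riskmeas_{\timeidx_1,\eplength}(\rvdum)$.

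It then remains to reduce $\riskmeas_{\timeidx_1,\eplength}(\rvdum)$ to $\riskmeas_{\timeidx_1,\timeidx_2}$ applied to its nontrivial coordinates. Iteratively peeling off the trailing zeros via translation invariance, $\riskmeas_{\timeidx_1,\eplength}(\cdots,\rvdum_{\timeidx_2},0,\ldots,0) = \riskmeas_{\timeidx_1,\eplength}(\cdots,\rvdum_{\timeidx_2},0,\ldots,0,0)$, and using the normalization to collapse the tail of zeros, we obtain $\riskmeas_{\timeidx_1,\eplength}(\rvdum) = \riskmeas_{\timeidx_1,\timeidx_2}(\rv_{\timeidx_1},\ldots,\rv_{\timeidx_2-1},\riskmeas_{\timeidx_2,\eplength}(\rv_{\timeidx_2},\ldots,\rv_{\eplength}))$, which is the desired recursion.

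The main obstacle is the last step: justifying the identification $\riskmeas_{\timeidx_1,\eplength}(\cdot,0,\ldots,0) = \riskmeas_{\timeidx_1,\timeidx_2}(\cdot)$. This is exactly where translation invariance and normalization are indispensable -- without them, appending zeros past $\timeidx_2$ could alter the value, and the recursion would fail. The delicate bookkeeping needed to peel off each trailing zero (while treating the randomness as $\Ff_{\timeidx_2}$-measurable so that translation invariance applies at each intermediate step) is the technically subtle piece; the symmetric application of time-consistency in both inequality directions is the conceptual novelty that transforms a monotonicity hypothesis into an equality needed for the recursion.
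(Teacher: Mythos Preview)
The paper does not give its own proof of this theorem; it is stated as a citation of Theorem~1 in \cite{ruszczynski2010risk}. Your sketch follows precisely the argument in that reference: the ``if'' direction via monotonicity of $\riskmeas_{\timeidx_1,\timeidx_2}$, and the ``only if'' direction via the auxiliary sequence $\rvdum=(\rv_{\timeidx_1},\ldots,\rv_{\timeidx_2-1},\riskmeas_{\timeidx_2,\eplength}(\rv_{\timeidx_2},\ldots,\rv_{\eplength}),0,\ldots,0)$ combined with time-consistency applied in both inequality directions.

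The one point worth flagging is the step you yourself call the ``main obstacle'': the identification $\riskmeas_{\timeidx_1,\eplength}(\,\cdot\,,0,\ldots,0)=\riskmeas_{\timeidx_1,\timeidx_2}(\,\cdot\,)$. Translation invariance as stated in the hypotheses acts only on the \emph{first} coordinate, so ``peeling off trailing zeros'' is not literally what that property gives you. In Ruszczy\'nski's original setup this identification is obtained by applying time-consistency itself once more (with $\timeidx_2$ playing the role of the splitting time and the tail $(0,\ldots,0)$ having $\riskmeas_{\timeidx_2,\eplength}$-value zero by normalization), together with the convention that $\riskmeas_{\timeidx_1,\timeidx_2}$ is part of the same family satisfying the same axioms. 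Your plan is correct in spirit, but the mechanism you name for that last reduction is slightly off; it is time-consistency plus normalization, not translation invariance, that collapses the zero tail.
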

A time-consistent dynamic risk measure at time $\timeidx_1$ can be seen to include the sequence of costs up to a certain time $\timeidx_2$, and the $\Ff_{\timeidx_2}$-measurable charge for the remaining costs.
As a consequence of \cref{thm:time-consistency}, for any $\timeidx \in \periodspace$, we have the recursive relationship
\begin{equation}
	\riskmeas_{\timeidx,\eplength} (\rv_{\timeidx}, \ldots, \rv_{\eplength}) = \rv_{\timeidx} +
	\riskmeas_{\timeidx} \Bigg( \rv_{\timeidx+1} +
	\riskmeas_{\timeidx+1} \bigg( \rv_{\timeidx+2} +
	\cdots +
	\riskmeas_{\eplength-2} \Big( \rv_{\eplength-1} +
	\riskmeas_{\eplength-1} \big( \rv_{\eplength} \big) \Big) \cdots \bigg) \Bigg), \label{eq:dynamic-risk}
\end{equation}
where the \emph{one-step conditional risk measures} $\riskmeas_{\timeidx}: \Lpspace_{\timeidx+1} \rightarrow \Lpspace_{\timeidx}$ satisfy $\riskmeas_{\timeidx} (\rv) = \riskmeas_{\timeidx, \timeidx+1} (0, \rv)$ for any $\rv\in\Lpspace_{\timeidx+1}$.

Instead of assuming that the one-step conditional risk measures $\riskmeas_{\timeidx}$ are convex (see e.g. \cite{coache2021reinforcement}) or coherent (see e.g. \cite{ruszczynski2010risk,tamar2016sequential}), we impose stronger properties to focus on a narrower class of risk measures, so that we can develop more efficient learning methodologies that do not require nested simulations.
In what follows, we assume that the one-step conditional risk measures $\riskmeas_{\timeidx}$ are static spectral risk measures as in \cref{def:spectral-risk}.
\new{Furthermore, we define the one-step conditional spectral risk measure, with spectrum $\varphi$, for any $\rv\in\Lpspace_{\timeidx+1}$ as $\riskmeas_{\timeidx}^{\spectrum}(\rv | \Ff_{\timeidx})$, which outputs an $\Ff_{\timeidx}$-measurable random variable obtained by conditioning on $\Ff_{\timeidx}$.}

\subsection{Time-Consistency Issue}

\tikzstyle{states}=[shape=circle,draw=mblue,fill=mblue!10,minimum width=1.1cm]
\tikzstyle{costs}=[shape=circle,draw=mred,fill=mred!10,minimum width=1.1cm]		
% \begin{wrapfigure}[15]{r}{0.5\textwidth}
\begin{wrapfigure}{r}{0.5\textwidth}
    \centering
    \vspace*{-1em}
    \begin{tikzpicture}[scale=0.65,every node/.style={transform shape}]
    % all nodes
    \node (temp0) at (0.75,1.875) {};
    \node (temp1) at (4.25,-0.75) {};
    \node[states] (x0) at (0,0) {$\state_{0}$};
    \node[states] (x11) at (3.5,3) {$\state_{1}^{\text{up}}$};
    \node[states] (x12) at (3.5,0.75) {$\state_{1}^{\text{up}'}$};
    \node[states] (x13) at (3.5,-3) {$\state_{1}^{\text{down}}$};
    \node[costs] (x21) at (7,3) {$-2$};
    \node[costs] (x22) at (7,1.5) {$0$};
    \node[costs] (x23) at (7,0) {$-1$};
    \node[costs] (x24) at (7,-1.5) {$2$};
    \node[costs] (x25) at (7,-3) {$4$};
    % arrows
    \draw[-,out=90,in=180] (x0.north) to (temp0.center);
    \draw [->,out=0, in=180] (temp0.center) to node[above=0.2em,pos=0.75] {$0.9$} (x11.west);
    \draw [->,out=0, in=180] (temp0.center) to node[above=0.2em,pos=0.75] {$0.1$} (x12.west);
    \draw [->] (x0.south) to [out=-90, in=180] (x13.west);
    \draw [->] (x11.north) to [out=90, in=180] (x21.west);
    \draw [->] (x11.south) to [out=-90, in=180] (x21.west);
    \draw [->] (x12.north) to [out=90, in=180] (x22.west);
    \draw[-,out=-90,in=180] (x12.south) to (temp1.center);
    \draw [->,out=0, in=180] (temp1.center) to node[above=0.2em,pos=0.75] {$0.9$} (x23.west);
    \draw [->,out=0, in=180] (temp1.center) to node[above=0.2em,pos=0.75] {$0.1$} (x24.west);
    \draw [->] (x13.north) to [out=90, in=180] (x25.west);
    \draw [->] (x13.south) to [out=-90, in=180] (x25.west);
    \end{tikzpicture}
    \caption{Two-period optimisation problem.}
    \label{fig:tikz-time-consistency}
    % \vspace*{-1em}
\end{wrapfigure}
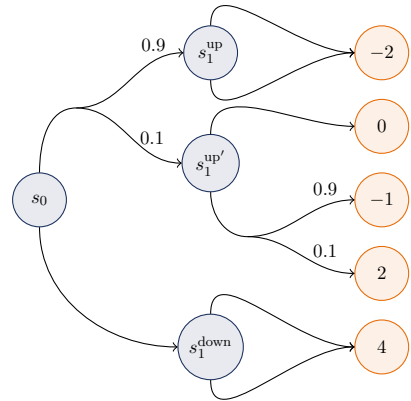
A well-known issue with static risk measures is their time-inconsistency, i.e. an optimal behaviour planned for a future state of the environment when optimising a static risk measure may not be optimal anymore once the agent visits the state.
We provide a motivational example to illustrate the consequences of using static risk measures in sequential decision making problems.

Suppose we face the two-period optimisation problem shown in \cref{fig:tikz-time-consistency} with transition probabilities indicated on each branch, where we start at $\state_{0}$, choose between moving up or down at each period, and obtain a terminal cost $\rv$.
If we aim to minimise the static \CVaR{} with threshold $\alpha=0.9$ of the terminal cost, the optimal actions are to move up and then down, which lead to
\begin{align*}
    \CVaR_{0.9} (\rv) &= \frac{1}{1-0.9} \int_{0.9}^{1} \VaR_{u}(\rv) \, \dee u \\
    &= \frac{1}{0.1} \Big( 2 \times 0.01 - 1 \times 0.09 \Big) = -0.7,
\end{align*}
while the second contender is to move up twice with a \CVaR{} of $0$.
The time-consistency issue arises if we land at $\state_{1}^{\text{up}'}$, because the \CVaR{} with threshold $\alpha=0.9$ of moving up (resp. down) is now $0$ (resp. $2$).
In this scenario, the optimal action at $\state_{1}^{\text{up}'}$ is to move up, which contradicts the initial optimal strategy starting from $\state_{0}$.
If we instead aim to minimise the dynamic \CVaR{} (i.e. one-step conditional risk measures are static \CVaR{}s at threshold $\alpha$), it is clear that the optimal actions are to move up twice; in fact, these are the optimal actions for any $\alpha \in [0.7,1)$.

To overcome this issue with static risk measures, one must solve optimisation problems for every possible state of the environment, which is highly ineffective when dealing with sequential decision making problems of several periods with a large number of states. 
This motivates the need to consider dynamic risk measures.

% RL notation
%!TEX root = ../main.tex

% --------------------------------------------------------------
%                   RL notation
% --------------------------------------------------------------
\section{Reinforcement Learning}
\label{sec:reinforcement-learning}

In this section, we introduce the \RL{} problems we study, and give the motivation behind our work.
We describe each problem as an \emph{agent} who tries to learn an optimal behavior, or \emph{agent's policy}, by interacting with a certain \emph{environment} in a model-agnostic manner.

Let $\statespace$ and $\actionspace$ be arbitrary state and action spaces respectively, and let $\costspace \subset \Reals$ be a cost space.
The environment is often represented as a \emph{Markov decision process} (\MDP{}) with the tuple $(\statespace, \actionspace, \costfunc, \PP)$, where $\costfunc(\state, \action, \statedum) \in \costspace$ is a cost function and $\PP$ characterises the transition probabilities $\PP(\state_{\timeidx+1} = \statedum \suchthat \state_{\timeidx} = \state, \action_{\timeidx} = \action)$.
The transition probability is assumed stationary, although time may be a component of the state.
\new{In what follows,} we assume that time is embedded in the state space\new{, which can be interpreted as an augmented state space where the first dimension corresponds to the time.}
An \emph{episode} consists of a sequence of \new{$\eplength+1$} \emph{periods}, denoted \new{$\periodspace := \{0, \ldots, \eplength\}$}, where $\eplength \in \Nats$ is known and finite.
At each period \new{$\timeidx\in\periodspace$}, the agent begins in a state $\state_{\timeidx} \in \statespace$ and takes an action $\action_{\timeidx} \in \actionspace$ according to a randomised \new{time-dependent} \emph{policy} $\policy^{\policyparams}: \statespace \rightarrow \Pp(\actionspace)$ parameterised by some $\policyparams$.
Here, $\Pp(\actionspace)$ represents the space of probability measures on a certain $\sigma$-algebra generated by the action space $\actionspace$, and therefore the agent selects the action $\action$ with probability $\policy^{\policyparams}(\action | \state_{\timeidx} = \state)$ \new{when being in state $\state$ at time $\timeidx$}.
The agent then moves to the next state $\state_{\timeidx+1} \in \statespace$, and receives a cost $\costfunc_{\timeidx} = \costfunc(\state_{\timeidx}, \action_{\timeidx}, \state_{\timeidx+1}) < \infty$.
We view the cost function as a deterministic mapping of the states and actions, but we can easily generalise to include other sources of randomness.

We consider time-consistent, Markov, dynamic risk measures $\{\riskmeas_{\timeidx,\eplength}\}_{\timeidx \in \periodspace}$ where we assume the one-step conditional risk measures $\riskmeas_{\timeidx}$ are \new{one-step conditional} spectral risk measures.
Using \cref{eq:dynamic-risk}, we aim to solve \new{$(\eplength+1)$}-period risk-sensitive \RL{} problems of the form
\begin{equation}
	\min_{\policyparams} \, \riskmeas_{0,\eplength} \Big( \{ \costfunc^{\policyparams}_{\timeidx} \}_{\timeidx \in \periodspace} \Big) = \min_{\policyparams} \, 
	\riskmeas_{0} \Bigg(\costfunc^{\policyparams}_{0} +
	\riskmeas_{1} \bigg(\costfunc^{\policyparams}_{1} +
	\cdots +
	\newmath{\riskmeas_{\eplength-1} \Big(\costfunc^{\policyparams}_{\eplength-1} +
	\riskmeas_{\eplength} \big(\costfunc^{\policyparams}_{\eplength}
	\big) \Big)} \cdots \bigg) \Bigg), \label{eq:optim-problem1} \tag{P}
\end{equation}
where $\costfunc^{\policyparams}_{\timeidx} = \costfunc(\newmath{\state_{\timeidx}^{\policyparams}}, \action^{\policyparams}_{\timeidx}, \state_{\timeidx+1}^{\policyparams})$ is a bounded $\Ff_{\timeidx+1}$-measurable random cost modulated by the policy $\policy^{\policyparams}$ -- that is why we include a $\policyparams$ index. \new{Note that the policy dependence on $\state_{\timeidx}^{\policyparams}$ holds for all periods except the initial state at $\timeidx=0$.} One might incorporate a deterministic discount factor in \cref{eq:optim-problem1}, which can be moved in and out of the risk measures $\riskmeas_{\timeidx}$ due to the positive homogeneity property.
It differs from: (i) standard risk-neutral \RL{}, which typically deals with an expectation of the cumulative (discounted) costs as the objective function; and (ii) risk-aware \RL{} with static risk measures, which provides optimal precommitment strategies.

Next, we derive dynamic programming (\DP{}) equations for the class of problems of the form \cref{eq:optim-problem1}.
We define the \emph{value function} as the running risk-to-go
\begin{equation}
    \valuefunc_{\timeidx}(\state;\policyparams) :=
    \riskmeas_{\timeidx} \bigg(\costfunc_{\timeidx}^{\policyparams} +
    \riskmeas_{\timeidx+1} \Big(\costfunc_{\timeidx+1}^{\policyparams} +
    \dots +
    \newmath{\riskmeas_{\eplength} \big(\costfunc_{\eplength}^{\policyparams} \big)} 
    \Big) \biggm| \state_{\timeidx}=\state \bigg),
    \label{eq:value-func}
\end{equation}
for all $\state \in \statespace$ and $\timeidx \in \periodspace$.
The value function outputs the dynamic spectral risk at a certain time $\timeidx$ when being in a specific state $\state$ and behaving according to the policy $\policy^{\policyparams}$; it represents the performance criterion that the agent seeks to minimise.
Using \cref{eq:value-func}, the \DP{} equations for a specific policy $\policy^{\policyparams}$ are
\begin{subequations}
\begin{align}
	\newmath{\valuefunc_{\eplength}(\state;\policyparams)} &=
    \newmath{\riskmeas_{\eplength} \Big(\costfunc_{\eplength}^{\policyparams} \Bigm|\state_{\eplength}=\state \Big)},
    \quad \text{and}
    \label{eq:value-func0-1} 
    \\
	\valuefunc_{\timeidx}(\state;\policyparams) &=
    \riskmeas_{\timeidx} \Big(\costfunc_{\timeidx}^{\policyparams} +
    \valuefunc_{\timeidx+1}(\state_{\timeidx+1}^{\policyparams};\policyparams)
    \Bigm|\state_{\timeidx}=\state \Big). \label{eq:value-func0-2}
\end{align}
\end{subequations}
The \DP{} equations for the value function allow the agent to evaluate the risk of a fixed policy $\policy^{\policyparams}$ recursively.

One common approach for solving \RL{} problems of the form \cref{eq:optim-problem1} is to optimise the value function over policies.
It then requires an accurate estimation of the value function in \cref{eq:value-func0-1,eq:value-func0-2} at every period and state of the environment, which is not straightforward at first sight.
A proposed methodology \new{in the class of actor-critic algorithms} consists of approximating the value function at a certain state by simulating several transitions, which we refer to as \emph{nested approach} (see e.g. \cite{coache2021reinforcement}).
In that work, the authors develop a \RL{} algorithm that estimates the value function by generating additional (inner) transitions for every visited state of an (outer) episode -- e.g. given a random variable, simulating many realisations and computing the average of the worst realisations gives a naive estimator of its tail expectation.
However, those nested simulations are costly, and the acquisition of new observations may not be possible when working with real datasets.
We instead wish to find an alternative methodology \new{that alleviates this curse of dimensionality, i.e.} that is \emph{improved memory-wise}, \emph{comparable in computational performance}, and \new{more importantly} \emph{using only full episodes}.

At this point, our research question is twofold:
(i) how to devise an efficient approach to estimate the value function, which is effectively a dynamic risk measure; and
(ii) how to adapt a \RL{} algorithm that avoids simulating transitions for every visited state to solve the class of \RL{} problems described above.
We use the notion of \emph{conditional elicitability} as a tool to answer the former question, and take advantage of the \emph{gradient formulae for spectral risk measures with a finite support spectrum} to solve the latter, which leads us to a computationally efficient risk-aware \RL{} algorithm.

% elicitability
%!TEX root = ../main.tex

% --------------------------------------------------------------
%                   Elicitability
% --------------------------------------------------------------
\section{Elicitability and Consistent Scoring Functions}
\label{sec:elicitability}

In this section, we introduce the necessary theoretical background for point estimation using scoring functions, following the work from \cite{gneiting2011making}.
We keep the results as general as possible, even though we often consider $\FF$ as the set of cumulative distribution functions (\CDF{}s), and $\estimsupp, \rvsupp$ as (subset of) the reals\new{, i.e. $\estimsupp \subseteq \Reals^{k}$ and $\rvsupp \subseteq \Reals^{d}$.}

Suppose we have a $d$-dimensional \emph{random variable} $\rv$ with support on $\rvsupp$ and a \CDF{} $F := F_{\rv} \in \FF$.
We are interested in finding a $k$-dimensional \emph{point approximation} $\estim \in \estimsupp$ of a certain \emph{statistical mapping} of the random variable $\statmap(\rv)$, where $\statmap:\rvsupp \rightarrow \estimsupp$.
The main goal is to find a \emph{scoring function} $\score:\estimsupp \times \rvsupp \rightarrow \Reals$ such that when observing a realisation $y \in \rvsupp$, our current point forecast $\estim \in \estimsupp$ is penalized by $\score(\estim, y)$.
The score function can be interpreted as a loss function.

To draw parallels between this notation and our objective, recall that we aim to provide an estimation (i.e. point approximation $\estim$) of the value function $\valuefunc_{\timeidx}(\state; \policyparams)$ at every state $\state \in \statespace$ and time $\timeidx \in \periodspace$ (i.e. statistical mapping $\statmap(\rv)$) for sequences of costs $\{ \costfunc^{\policyparams}_{\timeidx} \}_{\timeidx \in \periodspace}$ induced by the agent's policy $\policy^{\policyparams}$ (i.e. random variable $\rv$).
We wish to update our estimate of the value function using a single realisation of costs from an episode and an appropriate loss function (i.e. scoring function $\score$).

Next, we define a key property of scoring functions that makes them appealing for estimating mappings $\statmap$.

\begin{definition}
    \label{def:F-consistency}
    A scoring function $\score:\estimsupp \times \rvsupp \rightarrow \Reals$ is said to be \emph{$\FF$-consistent for $\statmap$}
    %\citep{gneiting2011making}
    iff for any $F \in \FF$ and $\estim \in \estimsupp$, we have
    \begin{equation}
        \EE_{\rv \sim F} \Big[ \score \big( \statmap(\rv), \rv \big) \Big]
        \leq
        \EE_{\rv \sim F} \Big[ \score \big( \estim, \rv \big) \Big],
        \label{eq:consistent-scoring}
    \end{equation}
    where $\EE_{\rv \sim F} [Y] = \int y \, \dee F(y)$.
    Furthermore, the scoring function $\score$ is \emph{strictly} $\FF$-consistent for $\statmap$ if the equality in \cref{eq:consistent-scoring} implies that $\estim = \statmap(\rv)$.
\end{definition}
Here, the choice of a scoring function $\score$ is closely related to the choice of a mapping $\statmap$, as demonstrated by \cite{gneiting2011making}.
For instance, one can show that if the mapping of interest is the mean, then $\score(\estim,y) = (\estim-y)^2$ is (resp. strictly) consistent for the class of \CDF{}s with finite first moments  (resp. second moments).
\cref{def:elicitable} describes all mappings $\statmap$ admitting strictly consistent scoring functions.
\begin{definition}
    \label{def:elicitable}
    A $k$-dimensional mapping $\statmap:\rvsupp \rightarrow \estimsupp$ is said to be \emph{$k$-elicitable} wrt $\FF$
    %\citep{gneiting2011making}
    iff there exists a scoring function $\score$ that is strictly $\FF$-consistent for $\statmap$.
\end{definition}
Therefore, a mapping $\statmap$ is elicitable iff there exists a scoring function such that its estimate is the unique minimiser of the expected score, i.e.
\begin{equation*}
    \statmap(\rv) = \argmin_{\estim \in \estimsupp} \; \EE_{\rv \sim F} \Big[ \score(\estim,\rv) \Big].
\end{equation*}
Much research has been done to determine which mappings are elicitable.
Under certain regularity assumptions, all moments, quantiles, and expectiles are $1$-elicitable (see e.g. \cite{savage1971elicitation,thomson1979eliciting,saerens2000building}).
For instance, a strictly consistent scoring function $\score$ for the mean is necessarily of the form
\begin{equation*}
	\score(\estim,y) = G(y) - G(\estim) + G'(\estim)(\estim-y),
\end{equation*}
where $G:\rvsupp \rightarrow \Reals$ is strictly convex with subgradient $G'$; taking $G(x) = x^2$ leads to the squared error.
For the $\alpha$-quantile, and thus the value-at-risk $\VaR_{\alpha}$ as in \cref{def:value-at-risk}, any strictly consistent $\score$ must be of the form 
\begin{equation*}
	\score(\estim,y) = \Big( 
 {\color{blue}\Ind_{y \leq \estim}} 
 - \alpha \Big) \Big( G(\estim) - G(y) \Big),
\end{equation*}
where $G:\rvsupp \rightarrow \Reals$ is a nondecreasing function.
On the other hand, several researchers proved the non $1$-elicitability of well-known functionals, such as the variance, expected-shortfall, also known as conditional value-at-risk (\CVaR{}), and any spectral risk measure (see e.g. \cite{gneiting2011making}).

\subsection{Conditional Elicitability}

The value function we wish to estimate is a (dynamic) spectral risk measure, which is not $1$-elicitable \cite{ziegel2016coherence}.
However, \cite{emmer2015best} proved that despite not being elicitable on its own, the \CVaR{} is elicitable conditionally on the \VaR{}.
It originates from the work of \cite{osband1985providing} and the concept of \emph{higher order elicitability} or \emph{conditional elicitability}, where components of a $k$-elicitable vector-valued mapping can fail to be $1$-elicitable.
As a common example, the variance is not $1$-elicitable, but the vector composed of the mean and variance is $2$-elicitable using the so-called revelation principle.
It relies heavily on the fact that the variance can be connected to a mapping with two $1$-elicitable components, i.e. the first two moments, via a bijection.

\cite{fissler2016higher} characterises (with necessary and sufficient conditions) the class of strictly consistent scoring functions for several statistical mappings.
One of their main results (see Theorem 5.2 of \cite{fissler2016higher}) states that spectral risk measures having a spectrum $\spectrum$ with finite support can be a component of a $k$-elicitable mapping, although a spectral risk measure is not elicitable on its own.
We thus reformulate their result and adjust the notation for dynamic risk measures.
First, we add the conditioning on the $\sigma$-algebra $\Ff_{\timeidx}$, since we are working in a dynamic framework.
We also consider random variables that represent costs, and hence \CVaR{}s that are upper tail expectations.
\begin{theorem}[Conditional elicitability of spectral risk measures]
    \label{thm:elicitable-spectral}
    Let \CDF{}s of $\rv \, |_{\Ff_{\timeidx}}$ for $\rv \in \Lpspace_{\timeidx+1}$, denoted by $\FF$, have finite first moments and be supported on $\rvsupp \subseteq \Reals$.
    Let $\riskmeas^{\spectrum}$ be a spectral risk measure, where $\spectrum$ is given by
    \begin{equation*}
        \spectrum = \sum_{m=1}^{k-1} p_m \delta_{\alpha_m},
    \end{equation*}
    with $p_{m} \in (0,1]$, $\sum_{m=1}^{k-1} p_m = 1$, and $0 < \alpha_{1} < \alpha_{2} < \cdots < \alpha_{k-1} < 1$.
    Define the mapping
    \begin{equation*}
        \statmap(\rv) = \Big(
        \VaR_{\alpha_1}(\rv), \ldots, \VaR_{\alpha_{k-1}}(\rv), \,
        \riskmeas^{\spectrum} (\rv) \Big).
    \end{equation*}
    If the \CDF{}s in $\FF$ have unique $\alpha_m$-quantiles, then the mapping $\statmap$ is $k$-elicitable wrt $\FF$.
    Furthermore, define the set of estimates $\estimsupp = \{\estim \in \rvsupp^{k} \suchthat \estim_1 \leq \ldots \leq \estim_{k-1}\}$.
    Then a scoring function $\score: \estimsupp \times \rvsupp \rightarrow \Reals$ of the form
    \begin{equation}
    \begin{split}
        & \score(\estim_1, \ldots, \estim_{k-1}, \estim_k, y) \\
        &\quad = \Bigg[ \sum_{m=1}^{k-1} \Big( \Ind_{y \leq \estim_m} - \alpha_m \Big) \Big( G_m(\estim_m) - G_m(y) \Big) \Bigg] - G_k(\estim_k) + G_k(y) \\
        &\qquad\quad + G_k'(\estim_k) \Bigg[ \estim_k + \sum_{m=1}^{k-1} \frac{1}{1-\alpha_m} \bigg( \estim_m \Big( \Ind_{y > \estim_m} - (1-\alpha_m) \Big) - y \, \Ind_{y > \estim_m} \bigg) \Bigg] \label{eq:scoring-dynamic-spectral}
    \end{split}
    \end{equation}
    is (resp. strictly) $\FF$-consistent for $\statmap$ if (i) $G_k: \rvsupp \rightarrow \Reals$ is (resp. strictly) convex with subgradient $G_k'$ such that for a specific choice of mappings $G_1,\ldots,G_{k-1}: \rvsupp \rightarrow \Reals$, the functions
	\begin{equation}
	    x \mapsto G_{m}(x) - \frac{x p_m}{1-\alpha_m} G_{k}'(\estim_k), \quad m = 1, \ldots, k-1 \label{eq:scoring-dynamic-spectral-condition}
	\end{equation}
	are (resp. strictly) increasing for any $\estim_k \in \rvsupp$; and (ii) $\EE_{Y \sim F}[|G_m(\rv)|] < \infty$, $\forall \, m=1, \ldots, k$.
\end{theorem}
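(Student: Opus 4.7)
The plan is to verify $\FF$-consistency of $\score$ by showing that, for every $F\in\FF$, the map $\estim\mapsto\EE_{\rv\sim F}[\score(\estim,\rv)]$ attains its (unique, in the strict case) minimum on $\estimsupp$ at $\statmap(\rv) = (\VaR_{\alpha_1}(\rv),\ldots,\VaR_{\alpha_{k-1}}(\rv),\riskmeas^{\spectrum}(\rv))$. The dynamic (conditional) version then follows by applying the unconditional argument pathwise to the conditional distribution of $\rv\mid\Ff_{\timeidx}$. The two algebraic inputs I will rely on are the Rockafellar--Uryasev representation $\CVaR_{\alpha_m}(\rv)=\min_{\estim_m}\{\estim_m+\tfrac{1}{1-\alpha_m}\EE[(\rv-\estim_m)_+]\}$ with unique minimiser $\VaR_{\alpha_m}(\rv)$ (by the quantile-uniqueness hypothesis), and the decomposition $\riskmeas^{\spectrum}(\rv)=\sum_{m=1}^{k-1}p_m\CVaR_{\alpha_m}(\rv)$ following from the finite-support spectrum. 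Structurally, I would split $\EE[\score]$ into a coupled quantile-scoring part, minimised in each $\estim_m$ at $\VaR_{\alpha_m}(\rv)$, and a Bregman-divergence-type part in $\estim_k$, minimised at $\riskmeas^{\spectrum}(\rv)$.

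First, I would take the expectation of the coupling bracket multiplying $G_k'(\estim_k)$. Using $\Ind(\rv>\estim_m)=1-\Ind(\rv\leq\estim_m)$ and $\EE[\rv\Ind(\rv>\estim_m)]=\EE[(\rv-\estim_m)_+]+\estim_m(1-F(\estim_m))$, the bracket simplifies (with the weights $p_m$ implicit in condition \eqref{eq:scoring-dynamic-spectral-condition}) to $\estim_k-\sum_m p_m u_m(\estim_m)$, where $u_m(\estim_m):=\estim_m+\tfrac{1}{1-\alpha_m}\EE[(\rv-\estim_m)_+]$ is the Rockafellar--Uryasev functional. At the candidate optimum $(\estim_m,\estim_k)=(\VaR_{\alpha_m}(\rv),\riskmeas^{\spectrum}(\rv))$ the bracket vanishes, which is the identity driving the rest of the argument.

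Second, for each fixed $\estim_k$ I would minimise $\EE[\score]$ in the $\estim_m$ variables. The density-free identity for quantile scoring,
\[
\EE\big[(\Ind(\rv\leq\estim_m)-\alpha_m)(g(\estim_m)-g(\rv))\big] - \EE\big[(\Ind(\rv\leq q_m)-\alpha_m)(g(q_m)-g(\rv))\big] = \int_{q_m}^{\estim_m}(F(x)-\alpha_m)\,\dee g(x),
\]
with $q_m:=\VaR_{\alpha_m}(\rv)$, is nonnegative for any (strictly) nondecreasing $g$ because $F(x)-\alpha_m$ changes sign at $q_m$ under the quantile-uniqueness hypothesis. Taking $g=g_m$ with $g_m(x):=G_m(x)-\tfrac{p_m x}{1-\alpha_m}G_k'(\estim_k)$, made (strictly) nondecreasing by condition \eqref{eq:scoring-dynamic-spectral-condition}, the combined $\estim_m$-dependent contribution to $\EE[\score]$ reduces, up to an $\estim_m$-independent constant, to exactly this quantile-scoring functional, whose (unique) minimiser is $q_m=\VaR_{\alpha_m}(\rv)$.

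Third, substituting $\estim_m=\VaR_{\alpha_m}(\rv)$ collapses $\EE[\score]$ to $\text{const}-G_k(\estim_k)+G_k'(\estim_k)(\estim_k-\riskmeas^{\spectrum}(\rv))$, which is a Bregman-type expression $D_{G_k}(\riskmeas^{\spectrum}(\rv),\estim_k)$ up to constants; (strict) convexity of $G_k$ with subgradient $G_k'$ then yields the (unique) minimiser $\estim_k=\riskmeas^{\spectrum}(\rv)$. Integrability from hypothesis (ii) ensures every expectation is finite and the decomposition valid. The main obstacle is precisely the coupling between $\estim_m$ and $\estim_k$ through the multiplier $G_k'(\estim_k)$: the monotonicity of $g_m$ needed for the quantile-scoring step must hold \emph{uniformly} in $\estim_k$, and condition \eqref{eq:scoring-dynamic-spectral-condition} is tailored exactly to deliver this uniformity so that the sequential minimisation in Steps 2 and 3 is legitimate.
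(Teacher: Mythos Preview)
The paper does not give its own proof of this theorem: it is stated as a reformulation of Theorem~5.2 of \cite{fissler2016higher}, with notation adapted to the conditional setting. So there is no in-paper argument to compare against; your proposal is effectively a sketch of the Fissler--Ziegel proof, and the two-step decomposition (quantile scoring in the $\estim_m$ variables, Bregman divergence in $\estim_k$) together with the sequential minimisation (inner minimiser independent of $\estim_k$, hence globally valid) is exactly the standard route.

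There is, however, a genuine gap in your first step. You write that the expectation of the bracket multiplying $G_k'(\estim_k)$ simplifies ``with the weights $p_m$ implicit in condition \eqref{eq:scoring-dynamic-spectral-condition}'' to $\estim_k-\sum_m p_m u_m(\estim_m)$. That is not what the scoring function in \eqref{eq:scoring-dynamic-spectral} actually gives: a direct computation of the $m$-th summand yields $-\bigl(\estim_m+\tfrac{1}{1-\alpha_m}\EE[(\rv-\estim_m)_+]\bigr)$ with no $p_m$, so the bracket is $\estim_k-\sum_m u_m(\estim_m)$, which does \emph{not} vanish at $(\VaR_{\alpha_m}(\rv),\riskmeas^{\spectrum}(\rv))$ unless all $p_m=1$. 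The appearance of $p_m$ in condition \eqref{eq:scoring-dynamic-spectral-condition} strongly suggests that the intended scoring function carries $\tfrac{p_m}{1-\alpha_m}$ rather than $\tfrac{1}{1-\alpha_m}$ in the bracket (and indeed this is how the original Fissler--Ziegel result is stated); but you cannot paper over this by calling the weights ``implicit.'' Either flag the discrepancy explicitly and work with the corrected scoring function, or your subsequent steps---in particular the Bregman reduction in Step~3, which relies on the bracket vanishing at the true functional---do not go through for $k\ge 3$.

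A smaller point: strict consistency of \emph{some} scoring function is what delivers $k$-elicitability. Your argument establishes (strict) consistency of \eqref{eq:scoring-dynamic-spectral} under the stated conditions, but to close the elicitability claim you should note that those conditions are non-vacuous, e.g.\ by exhibiting one admissible choice of $G_1,\ldots,G_k$.
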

This result states that, under some regularity assumptions, for any spectral risk measure with a finite support spectrum of a random variable conditioned on a $\sigma$-algebra, there exists a strictly consistent scoring function for the mapping $\statmap$ composed of the spectral risk measure $\riskmeas^{\spectrum}$ and all $\alpha_{m}$-quantiles with nonzero weights in its spectrum.
One particularly interesting spectral risk measure is the \CVaR{}, where the spectrum $\spectrum$ has a weight on a single quantile.
\cref{thm:elicitable-cvar} (see Corollary 5.5 of \cite{fissler2016higher}) which follows from \cref{thm:elicitable-spectral}, gives the characterisation of (strictly) consistent scoring functions for the \CVaR{}.
\begin{corollary}[Conditional elicitability of the \CVaR{}]
    \label{thm:elicitable-cvar}
    Let \CDF{}s of $\rv \, |_{\Ff_{\timeidx}}$ for $\rv \in \Lpspace_{\timeidx+1}$, denoted by $\FF$, have finite first moments, and be supported on $\rvsupp \subseteq \Reals$.
    Define the mapping
    \begin{equation*}
        \statmap(\rv) = \Big(
        \VaR_{\alpha}(\rv), \,
        \CVaR_{\alpha}(\rv) \Big).
    \end{equation*}
    If the \CDF{}s in $\FF$ have unique $\alpha$-quantiles, then the mapping $\statmap$ is $2$-elicitable wrt $\FF$.
    Furthermore, define the set of estimates $\estimsupp = \{\estim \in \rvsupp^{2} \suchthat \estim_1 \leq \estim_2 \}$.
    Then a scoring function $\score: \estimsupp \times \rvsupp \rightarrow \Reals$ of the form
    \begin{equation}
    \begin{split}
        \score(\estim_1, \estim_2, y) &= 
        \Big( \Ind_{y \leq \estim_1} - \alpha \Big) \Big( G_1(\estim_1) - G_1(y) \Big) - G_2(\estim_2) + G_2(y) \\
        &\qquad + G_2'(\estim_2) \Bigg[ \estim_2 + \frac{1}{1-\alpha} \bigg( \estim_1 \Big( \Ind_{y > \estim_1} - (1-\alpha) \Big) - y \, \Ind_{y > \estim_1} \bigg) \Bigg] 
    \end{split} \label{eq:scoring-dynamic-cvar}
    \end{equation}
    is (resp. strictly) $\FF$-consistent for $\statmap$ if (i) $G_2: \rvsupp \rightarrow \Reals$ is (resp. strictly) convex with subgradient $G_2'$ such that for a specific choice of mapping $G_1: \rvsupp \rightarrow \Reals$, the function
	\begin{equation}
	    x \mapsto G_{1}(x) - \frac{x}{1-\alpha} G_{2}'(\estim_2)
	    \label{eq:scoring-dynamic-cvar-condition}
	\end{equation}
	is (resp. strictly) increasing for any $\estim_2 \in \rvsupp$; and (ii) $\EE_{Y \sim F}[|G_1(\rv)|], \, \EE_{Y \sim F}[|G_2(\rv)|] < \infty$.
\end{corollary}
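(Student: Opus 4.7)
The plan is to derive this corollary as a direct specialization of \cref{thm:elicitable-spectral} with $k = 2$, a single quantile threshold $\alpha_1 = \alpha$, and weight $p_1 = 1$. Under this choice, the spectrum reduces to $\spectrum = \delta_{\alpha}$, and by the sifting property of the Dirac measure together with \cref{def:spectral-risk},
\[
\riskmeas^{\spectrum}(\rv) = \int_{[0,1]} \CVaR_u(\rv)\, \delta_\alpha(\dee u) = \CVaR_\alpha(\rv),
\]
so the statistical mapping $\statmap$ in \cref{thm:elicitable-spectral} becomes exactly $\statmap(\rv) = (\VaR_\alpha(\rv), \CVaR_\alpha(\rv))$, and the estimate set $\{\estim \in \rvsupp^k \suchthat \estim_1 \leq \ldots \leq \estim_{k-1}\}$ becomes $\{\estim \in \rvsupp^{2} \suchthat \estim_1 \leq \estim_2\}$, matching the statement to be proved.

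Next, I would substitute $k=2$ directly into the general scoring function \eqref{eq:scoring-dynamic-spectral}. The sum $\sum_{m=1}^{k-1}$ collapses to a single term indexed by $m=1$, and the coefficient $\tfrac{p_1}{1-\alpha_1}$ reduces to $\tfrac{1}{1-\alpha}$, giving exactly \eqref{eq:scoring-dynamic-cvar}. Similarly, the condition \eqref{eq:scoring-dynamic-spectral-condition} at $m=1$ with $p_1 = 1$ becomes
\[
x \mapsto G_1(x) - \frac{x}{1-\alpha} G_2'(\estim_2),
\]
which is exactly \eqref{eq:scoring-dynamic-cvar-condition}. The integrability assumption $\EE_{\rv\sim F}[|G_m(\rv)|] < \infty$ for $m=1,2$ is identical to that of the parent theorem, and the uniqueness-of-quantiles assumption on $\FF$ is also unchanged.

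With the dictionary between the two statements in place, both the $2$-elicitability of $\statmap$ and the (strict) $\FF$-consistency of $\score$ follow immediately from \cref{thm:elicitable-spectral} applied to this specific choice of parameters. Because the corollary is a pure specialization, there is no genuine obstacle: the only care needed is in bookkeeping the indices and weights to confirm that the scoring function and the monotonicity condition of the theorem match their one-quantile counterparts verbatim, which is a short verification rather than a new argument.
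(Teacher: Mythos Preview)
Your approach is correct and is exactly what the paper intends: the corollary is stated immediately after \cref{thm:elicitable-spectral} as the specialization $k=2$, $p_1=1$, $\alpha_1=\alpha$, with no separate proof given. One small slip: the estimate set in \cref{thm:elicitable-spectral} orders only the first $k-1$ components, so for $k=2$ it collapses to all of $\rvsupp^2$, not to $\{\estim_1\le\estim_2\}$; the additional constraint $\estim_1\le\estim_2$ in the corollary is a harmless restriction (since $\VaR_\alpha\le\CVaR_\alpha$ always, the true minimiser lies in the smaller set and strict consistency is preserved), but it does not arise from the specialization as you claim.
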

We restrict the set of estimates to $\{\estim \in \rvsupp^{2} \suchthat \estim_1 \leq \estim_2 \}$ because $\CVaR_{\alpha}$ must be greater than $\VaR_{\alpha}$. 
\cref{thm:elicitable-cvar} is a key result as it states that one can find the \CVaR{} using the \VaR{} as an intermediate step in the optimisation problem
\begin{equation*}
    \Big( \VaR_{\alpha}(\rv), \CVaR_{\alpha}(\rv) \Big) = \argmin_{(\estim_1,\estim_2) \in \estimsupp} \; \EE_{Y \sim F} \Big[ \score(\estim_1, \estim_2, Y) \Big],
\end{equation*}
with $\score$ given in \cref{eq:scoring-dynamic-cvar}.

We emphasise here that there exist infinitely many characterisations for the scoring function $\score$, and the choice of $G_1,G_2$ strongly depends on the support $\rvsupp$.
For instance, assume that $\rv$ are bounded costs, i.e. $|\rv| < C$, for some constant $C > 0$.
Then if we use $G_1(x) = C$ and $G_2(x) = -\log(x+C)$, which leads to $G_2'(x) = \frac{-1}{x+C}$ in the strictly consistent scoring function in \cref{eq:scoring-dynamic-cvar}, we obtain that
\begin{align}
    \begin{split}
    \score(\estim_1, \estim_2, y) &= 
            \Big( \Ind_{y \leq \estim_1} - \alpha \Big) \Big( C - C \Big) + \log(\estim_2+C) - \log(y+C) \\
            &\qquad\quad - \frac{1}{\estim_2+C} \Bigg[ \estim_2 + \frac{1}{1-\alpha} \bigg( \estim_1 \Big( \Ind_{y > \estim_1} - (1-\alpha) \Big) - y \, \Ind_{y > \estim_1} \bigg) \Bigg]
    \end{split} \nonumber\\
    &= \log \left( \frac{\estim_2+C}{y+C} \right) - \frac{\estim_2}{\estim_2+C} + \frac{ \estim_1 \left( \Ind_{y \leq \estim_1} - \alpha \right) + y \, \Ind_{y > \estim_1} }{(\estim_2+C) (1-\alpha)}. \label{eq:score-experiments}
\end{align}
This choice respects the condition in \cref{eq:scoring-dynamic-cvar-condition}, as $x \mapsto C + \frac{x}{(\estim_2+C) \alpha}$ is strictly increasing wrt $x$.
This specific characterisation is a canonical choice in the literature, as the first term of \cref{eq:scoring-dynamic-cvar} vanishes.

If one is interested in nonnegative random variables, a more appropriate choice is $G_2(x) = -\log(x)$, which leads to
\begin{equation}
    S(\estim_1, \estim_2, y) = \log \left( \frac{\estim_2}{y} \right) - 1 + \frac{ \estim_1 \left( \Ind_{y \leq \estim_1} - \alpha \right) + y \, \Ind_{y > \estim_1} }{\estim_2 (1-\alpha)}.
\end{equation}

Although we do not focus here on the choice of these functions, it would be interesting from a theoretical perspective to understand if characterisations of scoring functions affect optimisation performances.
In our experiments, we use scoring functions of the form \cref{eq:score-experiments} as we work with bounded random costs.

% deep regression with scoring functions
%!TEX root = ../main.tex

% --------------------------------------------------------------
%                   Deep regression
% --------------------------------------------------------------
\section{Deep Composite Model Regression}
\label{sec:deep-regression}

Assume the framework outlined in \cref{sec:elicitability}, but now the random variable $\rv \, |_{\Ff_{\timeidx}}$ is explained by $q$ \emph{covariates} or \emph{features} $x \in \Reals^{q}$.
In our \RL{} problem, recall that the sequences of costs $\{ \costfunc^{\policyparams}_{\timeidx} \}_{\timeidx \in \periodspace}$ induced by the agent's policy $\policy^{\policyparams}$ (i.e. random variable $\rv$) are explained by the states $\state \in \statespace$ and times $\timeidx \in \periodspace$ (i.e. observable features $x$).
We then take an interest in $k$-elicitable mappings of the \emph{conditional} \CDF{} $F:= F_{\rv \mid X=x} \in \FF$, such as $\statmap(\rv | X=x) = \CVaR_{\alpha}(\rv | X=x)$.
The objective is to use strictly consistent scoring functions to approximate these mappings.
In this section, we work with a $1$-elicitable mapping $\statmap$ and a corresponding strictly consistent scoring function $\score$ for readability purposes.
However, since the setting is valid for any strictly consistent scoring function, we can replace $\statmap$ with any $k$-elicitable mapping.

We wish to find a function of the features $h : \Reals^{q} \rightarrow \estimsupp$, as opposed to a point approximate $\estim \in \estimsupp$, that minimises the expectation of the strictly consistent scoring function, i.e.
\begin{equation}
    \statmap(\rv | X=x) = \argmin_{h \, : \, \Reals^{q} \rightarrow \estimsupp} \; \EE_{\rv \sim F} \Big[ \score(h(x),\rv) \Big].
\end{equation}
As the space of all functions is infinite dimensional, the usual approach to obtaining approximations for $M(Y|X=x)$ consists of assuming a certain parametric form for the function $h$, and optimising over its parameters.
It is closely related to the notion of quantile and composite model regression.

Quantile regression and composite models have shown a lot of potential in the machine learning (\ML{}) community (see e.g. \cite{meinshausen2006quantile,takeuchi2006nonparametric,zhang2019extending,rodrigues2020beyond,guillen2021joint}).
These approaches provide more desirable alternatives to the previous methods, which utilise the expectation-maximisation (\EM{}) algorithm in the modelling procedure and require specific distribution assumptions.

The original quantile regression approach in \cite{koenker1978regression} assumes there exists a prespecified link function $h:\Reals \rightarrow \Reals$ such that
\begin{equation}
    \statmap(\rv | X=x) = h^{-1} \Big( \beta_0 + \sum_{j=1}^{q} \beta_j x_j \Big).
    \label{eq:coef-quantreg}
\end{equation}
After determining the appropriate scoring function, one finds the optimal coefficients of \cref{eq:coef-quantreg} by solving for
\begin{equation}
    \beta^{*} = \argmin_{\beta \, \in \, \Reals^{q+1}} \; \EE_{Y \sim F} \Bigg[ \score \bigg(h^{-1} \Big( \beta_0 + \sum_{j=1}^{q} \beta_j X_j \Big), \rv \bigg) \Bigg].
    \label{eq:opt-coef-quantreg}
\end{equation}
As the true distribution is often unknown, the expectation in \cref{eq:opt-coef-quantreg} is replaced by the empirical mean based on observed data $(x^{(i)}, y^{(i)})$, $i=1,\ldots,n$, to obtain the estimator
\begin{equation}
    \hat{\beta} = \argmin_{\beta \, \in \, \Reals^{q+1}} \; \frac{1}{n} \sum_{i=1}^{n} \Bigg[ \score \bigg(h^{-1} \Big( \beta_0 + \sum_{j=1}^{q} \beta_j x_j^{(i)} \Big), y^{(i)} \bigg) \Bigg].
    \label{eq:est-coef-quantreg}
\end{equation}
This approach \new{with a strictly consistent scoring function} gives an M-estimator of $\statmap(\rv | X=x)$ that converges in probability to the true value under certain regularity assumptions (see \new{Theorem 1 of \cite{dimitriadis2022characterizing}} for relations between consistency and conditional model-consistency of scoring functions).

Instead of the generalised linear model assumption in \cref{eq:est-coef-quantreg}, which is quite restrictive, \cite{fissler2021deep} generalises this idea and uses a more powerful function approximator to perform deep quantile regression: neural network structures.
The authors model the mapping $\statmap$ as a linear combination of outputs from a feed-forward \ANN{}, denoted $H^{\psi}:\Reals^{q} \rightarrow \Reals^{q}$, and update the weights and biases of the \ANN{} as well as the coefficients $\beta$ to optimise the expected score. They solve
\begin{equation}
    (\hat{\beta},\hat{\psi}) = \argmin_{\beta, \psi} \; \frac{1}{n} \sum_{i=1}^{n} \Bigg[ \score \bigg(h^{-1} \Big( \beta_0 + \sum_{j=1}^{q} \beta_j H^{\psi}(x^{(i)})_{j} \Big), y^{(i)} \bigg) \Bigg],
    \label{eq:est-coef-compositereg}
\end{equation}
where they use a linear combination of the outputs of $H$, and then perform a transformation using the link function $h$.
This framework is implemented for deep composite model regression where the mapping $\statmap$ is the tuple consisting of a $\alpha$-quantile, its lower and upper tail expectations.
They illustrate the applicability of their approach on a real dataset, and show better performance than that of classical approaches.
Their works extend the paper from \cite{richman2021mind}, in which joint estimation of quantiles and the expectation is performed using neural network structures.

\cite{fissler2021deep} do not, however, study how their framework can be applied dynamically.
Also, although this deep modelling is quite powerful, their approach can be improved.
Indeed, using a linear combination with outputs of an \ANN{} in \cref{eq:est-coef-compositereg} is unnecessary and adds more parameters in the optimisation process.
\ANN{}s are known to be universal approximators, and a sufficiently large neural network structure can adequately approximate any linear or nonlinear relationship.
One could instead directly model the statistical mapping $\statmap$ with an \ANN{}, e.g. $H^{\psi}:\Reals^{q} \rightarrow \Reals$, and only optimise its weights and biases:
\begin{equation}
    \hat{\psi} = \argmin_{\psi} \; \frac{1}{n} \sum_{i=1}^{n} \bigg[ \score \Big( H^{\psi}(x^{(i)}), y^{(i)} \Big) \bigg]. \label{eq:est-coef-deepreg}
\end{equation}
The approach in \cref{eq:est-coef-deepreg} is applied by \cite{fissler2022sensitivity} to estimate (static) sensitivity measures.
We make use of this form in our subsequent analysis.

% algorithm: estimate of V and update of policy
%!TEX root = ../main.tex

% --------------------------------------------------------------
%                   Algo framework
% --------------------------------------------------------------
\section{Algorithm}
\label{sec:algo-framework}

We now provide some implementation details on our proposed actor-critic style \cite{konda2000actor} algorithm, shown in \cref{algo:actor-critic}.\footnote{Our Python code is publicly available in the \href{https://github.com/acoache/RL-ElicitableDynamicRisk}{Github repository RL-ElicitableDynamicRisk}.}
More specifically, we highlight the main steps of the two procedures: the \emph{critic} estimating the value function with a \emph{deep composite modelling procedure} in \cref{ssec:estimate-V}, and the \emph{actor} updating the policy via a \emph{policy gradient method} in \cref{ssec:update-policy}.
We interpret it as follows: the actor decides which actions to perform, while the critic evaluates the performance of these actions and how they should be adjusted to obtain better results.
Note here that the method is thoroughly described for dynamic \CVaR{}, but one may be interested in generalising the framework to arbitrary $k$-elicitable dynamic spectral risk measures -- we do so in \cref{sec:generalization-spectral}.

% SIAM VERSION
\begin{algorithm}
\caption{Risk-aware actor-critic algorithm for dynamic $\CVaR_{\alpha}$}
\label{algo:actor-critic}
{\footnotesize
\begin{algorithm2e}[H]
	\KwIn{\ANN{}s $\policy^{\policyparams},H_{1,\timeidx}^{\psi_1},H_{2,\timeidx}^{\psi_2}$, numbers of epochs $K,K_1,K_2$, mini-batch sizes $\Nbatchs_{1},\Nbatchs_{2}$}
	Initialise the environment and risk measure instances\;
	Set initial learning rates $\lrate^{\psi_1},\lrate^{\psi_2},\lrate^{\policyparams}$\;
	\For{each iteration $k = 1, \ldots, K$}{
	    \For(\tcp*[f]{Critic, \cref{ssec:estimate-V}}){each epoch $k_1 = 1, \ldots, K_1$}{
	        Zero out the gradients of $H_{1,\timeidx}^{\psi_1},H_{2,\timeidx}^{\psi_2}$\;
	        Simulate a mini-batch of $\Nbatchs_{1}$ episodes induced by $\policy^{\policyparams}$\;
	        Compute the loss $\Ll^{\valueparams}$ in \cref{eq:loss-critic}\;
	        Update $\valueparams=\{\psi_1,\psi_2\}$ by performing an Adam optimisation step\;
	        Tune the learning rates $\lrate^{\psi_1},\lrate^{\psi_2}$ with a scheduler\;
	        \uIf{$k_1 \ \mathrm{mod} \ K^{*} = 0$}{
				Update the target networks $\tilde{\valueparams}=\{\tilde{\psi}_1,\tilde{\psi}_2\}$ \;
			}
	    }
	    \For(\tcp*[f]{Actor, \cref{ssec:update-policy}}){each epoch $k_2 = 1, \ldots, K_2$}{
	        Zero out the gradient of $\policy^{\policyparams}$\;
	        Simulate a mini-batch of $\lceil \Nbatchs_{2}/(1-\alpha) \rceil$ episodes induced by $\policy^{\policyparams}$\;
	        Compute the loss $\Ll^{\policyparams}$ in \cref{eq:loss-func-gradient-reinforce}\;
	        Update $\policyparams$ by performing an Adam optimisation step\;
	        Tune the learning rate $\lrate^{\policyparams}$ with a scheduler\;
	    }
	}
	\KwOut{Optimal policy $\policy^{\policyparams}$ and its value function $\valuefunc_{\timeidx}^{\valueparams} = H_{1,\timeidx}^{\psi_1}+H_{2,\timeidx}^{\psi_2}$}
\end{algorithm2e}
}
\end{algorithm}

We consider problems of the form \cref{eq:optim-problem1}, and assume the one-step conditional risk measures are \new{one-step conditional} \CVaR{}s at threshold $\alpha$.
We \new{also recall} that time is embedded in the state space\new{, which allows time-dependent policies}.

In what follows, suppose we only simulate $\Nbatchs$ full episodes composed of $\eplength$ transitions:
\begin{equation}
	\Big( \state_{\timeidx}^{(\batchidx)}, \action_{\timeidx}^{(\batchidx)}, \state_{\timeidx+1}^{(\batchidx)}, \costfunc_{\timeidx}^{(\batchidx)} \Big), \quad \timeidx \in \periodspace, \ \batchidx \in \Nbatchs,
	\label{eq:transitions}
\end{equation}
where the transition probabilities are characterised by $\PP^{\policyparams}(\action,\statedum | \state_{\timeidx}) := \PP(\statedum | \state_{\timeidx}, \action) \policy^{\policyparams}(\action | \state_{\timeidx})$.
The value function when being in state $\state$ at the period $\timeidx$ and following the policy $\policy^{\policyparams}$ in \cref{eq:value-func0-1,eq:value-func0-2} are, respectively,
\begin{subequations}
\begin{align}
	\newmath{\valuefunc_{\eplength}(\state;\policyparams)} &=
    \newmath{\CVaR_{\alpha} \Big(\costfunc_{\eplength}^{\policyparams} \Bigm|\state_{\eplength} = \state \Big)}, \quad \text{and}
    \label{eq:lastV} 
    \\
	\valuefunc_{\timeidx}(\state;\policyparams) &=
    \CVaR_{\alpha} \Big(\costfunc_{\timeidx}^{\policyparams} +
    \newmath{\valuefunc_{\timeidx+1}}(\state_{\timeidx+1}^{\policyparams};\policyparams)
    \Bigm|\state_{\timeidx} = \state \Big). \label{eq:otherV}
\end{align}
\end{subequations}

Define the following (fully-connected, multi-layered feed forward) \ANN{} structures:
(i) a policy $\policy^{\policyparams}: \statespace \rightarrow \Pp(\actionspace)$, which outputs a distribution over the space of actions $\actionspace$ when in a certain state;
(ii) $H_{1,\timeidx}^{\psi_1}(\cdot; \policyparams): \statespace \rightarrow \Reals$, representing the estimate of
\begin{subequations}
\begin{align}
	\newmath{H_{1,{\eplength}}(\state;\policyparams)} &=
    \newmath{\VaR_{\alpha} \Big(\costfunc_{\eplength}^{\policyparams} \Bigm|\state_{\eplength}=\state \Big)}, \quad \text{and}
    \label{eq:lastH1}
    \\
	H_{1,\timeidx}(\state;\policyparams) &=
    \VaR_{\alpha} \Big(\costfunc_{\timeidx}^{\policyparams} +
    \valuefunc_{\timeidx+1}(\state_{\timeidx+1}^{\policyparams};\policyparams)
    \Bigm|\state_{\timeidx}=\state \Big); \label{eq:otherH1}    
\end{align}
\end{subequations}
(iii) $H_{2,\timeidx}^{\psi_2}(\cdot; \policyparams): \statespace \rightarrow \PosReals$, corresponding to the estimate of the difference between the value function and $H_{1,\timeidx}$, i.e.
\begin{equation}
	H_{2,\timeidx}(\state;\policyparams) = \valuefunc_{\timeidx}(\state;\policyparams) - H_{1,\timeidx}(\state;\policyparams),
    \label{eq:H2}
\end{equation}
and (iv) $\valuefunc_{\timeidx}^{\valueparams}(\state;\policyparams) = H_{1,\timeidx}^{\psi_1}(\state;\policyparams) + H_{2,\timeidx}^{\psi_2}(\state;\policyparams)$ with $\valueparams=\{\psi_1, \psi_2\}$, the estimation of the value function when being in a state $\state$ at period $\timeidx$ and following the policy $\policy^{\policyparams}$.

Note that we characterise $H_{2,\timeidx}$ as the difference between the value function and $H_{1,\timeidx}$ rather than directly the value function for stability purposes.
Indeed, as mentioned in \cref{thm:elicitable-cvar}, we must enforce $\CVaR_{\alpha}$ to be greater than $\VaR_{\alpha}$, and this characterisation allows us to have reliable approximations of the risk measures without any additional constraint on the outputs of $H_{1,\timeidx}^{\psi_1}$ and $H_{2,\timeidx}^{\psi_2}$.

%!TEX root = ../main.tex

% --------------------------------------------------------------
%                   Estimate Value function
% --------------------------------------------------------------
\subsection{Value Function Estimation}
\label{ssec:estimate-V}

To estimate the value function in \cref{eq:lastV,eq:otherV}, the two neural network parameters $\valueparams=\{\psi_1, \psi_2\}$ must be learnt simultaneously.
As we aim to optimise the dynamic \CVaR{}, we use its conditional elicitability property discussed in \cref{sec:elicitability}, more specifically the result in \cref{thm:elicitable-cvar}.
We thus need to minimise the following objective function as in the deep composite model regression approach depicted in
\begin{subequations}
\begin{align}
    &\newmath{\argmin_{\valueparams_{\eplength}} \; \EE_{\PP^{\policyparams}(\cdot,\cdot | \state_{\eplength}=\state)} \Bigg[ \score \bigg(
	\underbrace{
	H_{1,\eplength}^{\psi_{1,\eplength}} \Big( \state; \policyparams \Big)
	}_{\VaR_{\alpha}(\cdot | \state_{\eplength})}
	; \
	\underbrace{
	\valuefunc_{\eplength}^{\valueparams_{\eplength}} \Big( \state; \policyparams \Big)
	}_{\CVaR_{\alpha}(\cdot | \state_{\eplength})}
	; \
	\underbrace{
	\vphantom{ \Big(\Big) }
	\costfunc_{\eplength}^{\policyparams}
	}_{\text{running risk-to-go}}
	\bigg)
	\Bigg]}, \quad \text{and} \label{eq:true-loss-func-dynamic-cvar}
	\\
	&\argmin_{\valueparams_{\timeidx}} \; \EE_{\PP^{\policyparams}(\cdot,\cdot | \state_{\timeidx}=\state)} \Bigg[ \score \bigg(
	\underbrace{
	H_{1,\timeidx}^{\psi_{1,\timeidx}} \Big( \state; \policyparams \Big)
	}_{\VaR_{\alpha}(\cdot | \state_{\timeidx})}
	; \
	\underbrace{
	\valuefunc_{\timeidx}^{\valueparams_{\timeidx}} \Big( \state; \policyparams \Big)
	}_{\CVaR_{\alpha}(\cdot | \state_{\timeidx})}
	; \
	\underbrace{
	\costfunc_{\timeidx}^{\policyparams} + \valuefunc_{\timeidx+1}^{\valueparams_{\timeidx+1}} \Big( \state_{\timeidx+1}^{\policyparams}; \policyparams \Big)
	}_{\text{running risk-to-go}}
	\bigg)
	\Bigg], \label{eq:true-loss-func-dynamic-cvar-2}
\end{align}
\end{subequations}
where $\score$ is given in \cref{eq:scoring-dynamic-cvar}, the characterisation of the functions $G_1$, $G_2$ for $\score$ depends on the support of the costs $\costfunc^{\policyparams}_{\timeidx} \in \costspace$, and the expectation is taken wrt the transition probabilities $\PP^{\policyparams}(\action_{\timeidx},\state_{\timeidx+1} | \state_{\timeidx})$.
We also assume that the \CDF{}s of the costs have finite first moments and unique $\alpha$-quantiles for each period $\timeidx \in \periodspace$, which is not too restrictive in practice.
Here, we view $H_{1,\timeidx}^{\psi_1},H_{2,\timeidx}^{\psi_2},\valuefunc_{\timeidx}^{\psi}$ as ensembles of \ANN{}s, denoted $\{H_{1,\timeidx}^{\psi_{1,\timeidx}}\}_{\timeidx\in\periodspace},\{H_{2,\timeidx}^{\psi_{2,\timeidx}}\}_{\timeidx\in\periodspace},\{\valuefunc_{\timeidx}^{\psi_{\timeidx}}\}_{\timeidx\in\periodspace}$, and approximate them using single \ANN{}s with an augmented input for the period.

To overcome stability issues often arising in actor-critic approaches, we replace the parameters $\valueparams$ in the last term of the scoring function with $\tilde{\valueparams} = \{ \tilde{\psi}_1, \tilde{\psi}_2 \}$, which is a slowly-updated parametrisation of the \ANN{} approximation -- this technique is usually referred to in the \ML{} literature as using \emph{target networks} (see e.g. \cite{van2016deep}).
As the true distribution is unknown, we also replace the expectation in \cref{eq:true-loss-func-dynamic-cvar,eq:true-loss-func-dynamic-cvar-2} with the empirical mean based on the observed transitions in \cref{eq:transitions}.
Combining everything together, we aim to minimise over ANN parameters $\psi$ the loss function
\begin{equation}
\begin{split}
    \Ll^{\valueparams} &= \newmath{\sum_{\timeidx \in \periodspace \setminus \{\eplength\}}} \sum_{\batchidx=1}^{\Nbatchs} \Bigg[
	\score \bigg(
	H_{1,\timeidx}^{\psi_1} \Big( \state_{\timeidx}^{(\batchidx)}; \policyparams \Big)
	; \
	\valuefunc_{\timeidx}^{\valueparams} \Big( \state_{\timeidx}^{(\batchidx)}; \policyparams \Big)
	; \
	\costfunc_{\timeidx}^{(\batchidx)}
	+ \valuefunc_{\timeidx+1}^{\tilde{\valueparams}} \Big( \state_{\timeidx+1}^{(\batchidx)}; \policyparams \Big)
	\bigg)
	\Bigg] \\
	&\qquad\quad + \newmath{\sum_{\batchidx=1}^{\Nbatchs} \Bigg[
	\score \bigg(
	H_{1,\eplength}^{\psi_1} \Big( \state_{\eplength}^{(\batchidx)}; \policyparams \Big)
	; \
	\valuefunc_{\eplength}^{\valueparams} \Big( \state_{\eplength}^{(\batchidx)}; \policyparams \Big)
	; \
	\costfunc_{\eplength}^{(\batchidx)}
	\bigg)
	\Bigg]},
\end{split} \label{eq:loss-critic} \tag{L1}
\end{equation}
where $\tilde{\valueparams}$ is updated every, say, $K^{*}$ epochs.

The critic procedure to estimate the value function is summarised as follows.
For each epoch of the training loop, we initially set to zero the accumulated gradients in the \ANN{}s $H_{1,\timeidx}^{\psi_1}$ and $H_{2,\timeidx}^{\psi_2}$.
We then simulate a mini-batch of full episodes induced by the policy $\policy^{\policyparams}$, compute the loss function $\Ll^{\valueparams}$ given in \cref{eq:loss-critic}, update the parameters $\valueparams$ using an optimisation rule, such as the Adam optimiser \cite{kingma2014adam}, and modify the learning rates for $H_{1,\timeidx}^{\psi_1}$, $H_{2,\timeidx}^{\psi_2}$ according to a learning rate scheduler, such as an exponential decay or a cyclical learning rate rule \cite{smith2017cyclical}.
After the whole training, we obtain approximations for $H_{1,\timeidx}$ and the value function $\valuefunc_{\timeidx}$.

The following results state that we can approximate the value function to an arbitrary accuracy using our framework devised here.
\new{\begin{theorem}
    Suppose $\policy^{\policyparams}$ is a fixed policy, its value function $\valuefunc_{\timeidx}(\state;\policyparams)$ is given in \cref{eq:lastV,eq:otherV}, and the decomposition in terms of $H_{1,\timeidx}(\state;\policyparams)$ and $H_{1,\timeidx}(\state;\policyparams)$ is given in \cref{eq:lastH1,eq:otherH1,eq:H2}.
    Then for any $\varepsilon^{*}_1,\varepsilon^{*}_2 > 0$, there exist two \ANN{}s, denoted $H_{1,\timeidx}^{\psi_1}$ and $H_{2,\timeidx}^{\psi_2}$, such that for any $\timeidx \in \periodspace$, we have
    \begin{subequations}
    \begin{align}
        \esssup_{\state \in \statespace} \; \Big\Vert H_{1,\timeidx}(\state;\policyparams) - H_{1,\timeidx}^{\psi_{1}}(\state;\policyparams)\Big\Vert &< \varepsilon^{*}_{1}, \quad \text{and} \\
        \esssup_{\state \in \statespace} \; \Big\Vert H_{2,\timeidx}(\state;\policyparams) - H_{2,\timeidx}^{\psi_{2}}(\state;\policyparams)\Big\Vert &< \varepsilon^{*}_{2}.
    \end{align}
    \end{subequations}
    \label{thm:univ-approx-H}
\end{theorem}}

\begin{proof}
We start by recalling the universal approximation theorem (see e.g. \cite{cybenko1989approximation,pinkus1999approximation}), a powerful result proving the existence of \new{a single-layer fully-connected feed forward} \ANN{} approximating any absolutely continuous function to an arbitrary accuracy $\varepsilon>0$ \new{on a compact set}, given a sufficiently large neural network structure and activation functions that are \new{not polynomial}.

We also note that any bounded one-step conditional risk measure $\riskmeas_{\timeidx}$ satisfying the monotonicity and translation invariance properties is absolutely continuous a.s..
Indeed, using the two inequalities $\rv \leq \rvdum + \Vert \rv - \rvdum \Vert$ and $\rvdum \leq \rv + \Vert \rv - \rvdum \Vert$, where $\rv,\rvdum \in \Lpspace_{\timeidx+1}$ and the norm is to be understood as the $\infty$-norm, we have 
\begin{equation}
    \esssup \Big\Vert \riskmeas_{\timeidx}(\rv) - \riskmeas_{\timeidx}(\rvdum)\Big\Vert \, \leq \, \esssup \Big\Vert \rv-\rvdum \Big\Vert \, \leq \, \Big\Vert \rv-\rvdum \Big\Vert \, \leq \infty.
    \label{eq:absolutely-continuous}
\end{equation}
Thus, $\riskmeas_{\timeidx}$ is Lipschitz continuous a.s. wrt the essential supremum norm, and hence it is absolutely continuous a.s..
Furthermore, any linear combination of monotone and translation invariant risk measures remains absolutely continuous a.s..
This implies that we can use the universal approximation theorem on $\VaR_{\alpha}$, $\CVaR_{\alpha}$, and the difference between $\CVaR_{\alpha}$ and $\VaR_{\alpha}$ for any threshold $\alpha \in (0,1)$ due to their almost sure absolute continuity.

We prove that \ANN{}s $H_{1,\timeidx}^{\psi_1}$, $H_{2,\timeidx}^{\psi_2}$ approximate the mappings $H_{1,\timeidx}$, $H_{2,\timeidx}$ by induction.
Throughout the proof, we view $H_{1,\timeidx}^{\psi_1},H_{2,\timeidx}^{\psi_2}$ as ensembles of \ANN{}s for each period $\timeidx \in \periodspace$.
For the last period \new{$\eplength$}, using the universal approximation theorem on \cref{eq:lastH1,eq:H2}, we have that for any $\newmath{\varepsilon'_{\eplength}, \varepsilon''_{\eplength}} > 0$, there exist \ANN{}s \new{$H_{1,\eplength}^{\psi_{1,\eplength}}$ and $H_{2,\eplength}^{\psi_{2,\eplength}}$} such that
\begin{subequations}
\begin{align}
	\esssup_{\state \in \statespace} \; \Big\Vert \newmath{H_{1,\eplength}(\state;\policyparams) - H_{1,\eplength}^{\psi_{1,\eplength}}(\state;\policyparams)}\Big\Vert &< \newmath{\varepsilon'_{\eplength} =: \varepsilon_{1,\eplength}}, \quad \text{and} \label{eq:proof-approx-lastH1} 
    \\
	\esssup_{\state \in \statespace} \; \Big\Vert \newmath{H_{2,\eplength}(\state;\policyparams) - H_{2,\eplength}^{\psi_{2,\eplength}}(\state;\policyparams)} \Big\Vert &< \newmath{\varepsilon''_{\eplength} =: \varepsilon_{2,\eplength}}. \label{eq:proof-approx-lastH2}
\end{align}
\end{subequations}

This proves the base case of our proof by induction.
For the induction step, we show that $H_{1,\timeidx}^{\psi_1},H_{2,\timeidx}^{\psi_2}$ approximate the mappings $H_{1,\timeidx},H_{2,\timeidx}$ at the period $\timeidx$ with an arbitrary accuracy as long as they adequately approximate them at the periods $\timeidx+1$ up to \new{$\eplength$} inclusively.
Assume that for any $\varepsilon_{1,\tau},\varepsilon_{2,\tau} > 0$, there exist \ANN{}s $H_{1,\tau}^{\psi_{1,\tau}}$ and $H_{2,\tau}^{\psi_{2,\tau}}$ such that
\begin{subequations}
\begin{align}
	\esssup_{\state \in \statespace} \; \Big\Vert H_{1,\tau}(\state;\policyparams) - H_{1,\tau}^{\psi_{1,\tau}}(\state;\policyparams)\Big\Vert &< \varepsilon_{1,\tau}, \quad \text{and} \label{eq:proof-approx-lastH1-induction} 
    \\
	\esssup_{\state \in \statespace} \; \Big\Vert H_{2,\tau}(\state;\policyparams) - H_{2,\tau}^{\psi_{2,\tau}}(\state;\policyparams) \Big\Vert &< \varepsilon_{2,\tau}, \label{eq:proof-approx-lastH2-induction}
\end{align}
for all \new{$\tau\in\{\timeidx+1,\ldots,\eplength\}$}.
\end{subequations}
For the sake of brevity in this proof, we define the following $\Ff_{\timeidx+1}$-measurable random variables:
\begin{subequations}
\begin{align}
    \rvdum_{\timeidx} &:= \costfunc_{\timeidx}^{\policyparams} +
        H_{1,\timeidx+1} \Big( \state_{\timeidx+1}^{\policyparams};\policyparams \Big) + H_{2,\timeidx+1} \Big(\state_{\timeidx+1}^{\policyparams};\policyparams \Big), \quad \text{and}
        \label{eq:proof-temp-rv-truth} \\
    \rvdum^{\psi}_{\timeidx} &:= \costfunc_{\timeidx}^{\policyparams} +
        H_{1,\timeidx+1}^{\psi_{1,\timeidx+1}} \Big( \state_{\timeidx+1}^{\policyparams};\policyparams \Big) + H_{2,\timeidx+1}^{\psi_{2,\timeidx+1}} \Big(\state_{\timeidx+1}^{\policyparams};\policyparams \Big). \label{eq:proof-temp-rv-ANN}
\end{align}
\end{subequations}
\new{Using the triangle inequality (denoted \triangleineq{}),} we then obtain for $H_{1,\timeidx}$ that
\begin{align}
    &\esssup_{\state \in \statespace} \; \Big\Vert H_{1,\timeidx}(\state;\policyparams) - H_{1,\timeidx}^{\psi_{1,\timeidx}}(\state;\policyparams)\Big\Vert 
    \nonumber\\
    \text{\scriptsize{[\cref{eq:otherH1}]}}&\quad = \esssup_{\state \in \statespace} \; \Big\Vert \VaR_{\alpha} \Big(\costfunc_{\timeidx}^{\policyparams} +
    \valuefunc_{\timeidx+1}(\state_{\timeidx+1}^{\policyparams};\policyparams)
    \Bigm|\state_{\timeidx}=\state \Big) - H_{1,\timeidx}^{\psi_{1,\timeidx}}(\state;\policyparams) \Big\Vert 
    \nonumber\\
    \text{\scriptsize{[\cref{eq:H2}]}}&\quad = \esssup_{\state \in \statespace} \; \Big\Vert \VaR_{\alpha} \Big(\costfunc_{\timeidx}^{\policyparams} +
    H_{1,\timeidx+1}(\state_{\timeidx+1}^{\policyparams};\policyparams) + H_{2,\timeidx+1}(\state_{\timeidx+1}^{\policyparams};\policyparams)
    \Bigm|\state_{\timeidx}=\state \Big) \nonumber\\
    &\qquad\qquad\qquad - H_{1,\timeidx}^{\psi_{1,\timeidx}}(\state;\policyparams) \Big\Vert 
    \nonumber\\
    \text{\scriptsize{[\cref{eq:proof-temp-rv-truth}]}}&\quad = \esssup_{\state \in \statespace} \; \Big\Vert \VaR_{\alpha} \Big( \rvdum_{\timeidx} \Bigm|\state_{\timeidx}=\state \Big) - H_{1,\timeidx}^{\psi_{1,\timeidx}}(\state;\policyparams) \Big\Vert 
    \nonumber\\
    \text{\scriptsize{[\cref{eq:proof-temp-rv-ANN}]}}&\quad = \esssup_{\state \in \statespace} \; \Big\Vert \VaR_{\alpha} \Big( \rvdum_{\timeidx} \Bigm|\state_{\timeidx}=\state \Big) -\VaR_{\alpha} \Big(\rvdum^{\psi}_{\timeidx}
    \Bigm|\state_{\timeidx}=\state \Big) \nonumber\\
    &\qquad\qquad\qquad + \VaR_{\alpha} \Big(\rvdum^{\psi}_{\timeidx}
    \Bigm|\state_{\timeidx}=\state \Big) - H_{1,\timeidx}^{\psi_{1,\timeidx}}(\state;\policyparams) \Big\Vert 
    \nonumber\\
    \text{\scriptsize{[\triangleineq{}]}}&\quad 
        \leq \esssup_{\state \in \statespace} \; \Big\Vert \VaR_{\alpha} \Big(\rvdum^{\psi}_{\timeidx}
        \Bigm|\state_{\timeidx}=\state \Big) - H_{1,\timeidx}^{\psi_{1,\timeidx}}(\state;\policyparams) \Big\Vert 
        \nonumber\\
        &\qquad\quad + \esssup_{\state \in \statespace} \; \Big\Vert \VaR_{\alpha} \Big(\rvdum_{\timeidx}
        \Bigm|\state_{\timeidx}=\state \Big) - \VaR_{\alpha} \Big(\rvdum^{\psi}_{\timeidx}
        \Bigm|\state_{\timeidx}=\state \Big) \Big\Vert 
        \nonumber\\
    \text{\scriptsize{[\cref{eq:absolutely-continuous}]}}&\quad 
        \leq \esssup_{\state \in \statespace} \; \Big\Vert \VaR_{\alpha} \Big(\rvdum^{\psi}_{\timeidx}
        \Bigm|\state_{\timeidx}=\state \Big) - H_{1,\timeidx}^{\psi_{1,\timeidx}}(\state;\policyparams) \Big\Vert + \esssup_{\state \in \statespace} \; \Big\Vert \rvdum_{\timeidx} - \rvdum^{\psi}_{\timeidx} \Big\Vert 
        \nonumber\\
    \text{\scriptsize{[\triangleineq{}]}}&\quad 
        \leq \esssup_{\state \in \statespace} \; \Big\Vert \VaR_{\alpha} \Big(\rvdum^{\psi}_{\timeidx}
        \Bigm|\state_{\timeidx}=\state \Big) - H_{1,\timeidx}^{\psi_{1,\timeidx}}(\state;\policyparams) \Big\Vert 
        \nonumber\\
        &\qquad\quad + \esssup_{\state \in \statespace} \; \Big\Vert H_{1,\timeidx+1}(\state;\policyparams) - H_{1,\timeidx+1}^{\psi_{1,\timeidx+1}}(\state;\policyparams)
        \Big\Vert 
        \nonumber\\
        &\qquad\quad + \esssup_{\state \in \statespace} \; \Big\Vert H_{2,\timeidx+1}(\state;\policyparams) - H_{2,\timeidx+1}^{\psi_{2,\timeidx+1}}(\state;\policyparams) \Big\Vert 
        \nonumber\\
    \begin{split}
    \text{\scriptsize{[\cref{eq:proof-approx-lastH1-induction}, \cref{eq:proof-approx-lastH2-induction}]}}&\quad 
        < \esssup_{\state \in \statespace} \; \Big\Vert \VaR_{\alpha} \Big(\rvdum^{\psi}_{\timeidx}
        \Bigm|\state_{\timeidx}=\state \Big) - H_{1,\timeidx}^{\psi_{1,\timeidx}}(\state;\policyparams) \Big\Vert + \varepsilon_{1,\timeidx+1} + \varepsilon_{2,\timeidx+1}. 
        \label{eq:proof-cvar-induction-H1}
    \end{split}
\end{align}
Similarly, for $H_{2,\timeidx}$, we have the following:
\begin{align}
    &\esssup_{\state \in \statespace} \; \Big\Vert H_{2,\timeidx}(\state;\policyparams) - H_{2,\timeidx}^{\psi_{2,\timeidx}}(\state;\policyparams)\Big\Vert 
    \nonumber\\
    \text{\scriptsize{[\cref{eq:H2}]}}&\quad = \esssup_{\state \in \statespace} \; \Big\Vert\valuefunc_{\timeidx}(\state;\policyparams) - H_{1,\timeidx}(\state;\policyparams) - H_{2,\timeidx}^{\psi_{2,\timeidx}}(\state;\policyparams) \Big\Vert \nonumber\\
    \text{\scriptsize{[\cref{eq:otherV}, \cref{eq:otherH1}, \cref{eq:H2}]}}&\quad 
        = \esssup_{\state \in \statespace} \; \Big\Vert \CVaR_{\alpha} \Big(\rvdum_{\timeidx} \Bigm|\state_{\timeidx}=\state \Big)  - \VaR_{\alpha} \Big(\rvdum_{\timeidx} \Bigm|\state_{\timeidx}=\state \Big) - H_{2,\timeidx}^{\psi_{2,\timeidx}}(\state;\policyparams) \Big\Vert 
        \nonumber\\
    \text{\scriptsize{[\triangleineq{}]}}&\quad 
        \leq \esssup_{\state \in \statespace} \; \Big\Vert \CVaR_{\alpha} \Big(\rvdum^{\psi}_{\timeidx} \Bigm|\state_{\timeidx}=\state \Big) - \VaR_{\alpha} \Big(\rvdum^{\psi}_{\timeidx} \Bigm|\state_{\timeidx}=\state \Big) - H_{2,\timeidx}^{\psi_{2,\timeidx}}(\state;\policyparams) \Big\Vert 
        \nonumber\\
        &\qquad\quad + \esssup_{\state \in \statespace} \; \Big\Vert \CVaR_{\alpha} \Big(\rvdum_{\timeidx} \Bigm|\state_{\timeidx}=\state \Big) - \CVaR_{\alpha} \Big(\rvdum^{\psi}_{\timeidx} \Bigm|\state_{\timeidx}=\state \Big) \Big\Vert 
        \nonumber\\
        &\qquad\quad + \esssup_{\state \in \statespace} \; \Big\Vert \VaR_{\alpha} \Big(\rvdum_{\timeidx}
        \Bigm|\state_{\timeidx}=\state \Big) - \VaR_{\alpha} \Big(\rvdum^{\psi}_{\timeidx} \Bigm|\state_{\timeidx}=\state \Big) \Big\Vert 
        \nonumber\\
    \text{\scriptsize{[\cref{eq:absolutely-continuous}, \triangleineq{}]}}&\quad 
        \leq \esssup_{\state \in \statespace} \; \Big\Vert \CVaR_{\alpha} \Big(\rvdum^{\psi}_{\timeidx} \Bigm|\state_{\timeidx}=\state \Big) - \VaR_{\alpha} \Big(\rvdum^{\psi}_{\timeidx} \Bigm|\state_{\timeidx}=\state \Big) - H_{2,\timeidx}^{\psi_{2,\timeidx}}(\state;\policyparams) \Big\Vert
        \nonumber\\
        &\qquad\quad  + 2\esssup_{\state \in \statespace} \; \Big\Vert H_{1,\timeidx+1}(\state;\policyparams) - H_{1,\timeidx+1}^{\psi_{1,\timeidx+1}}(\state;\policyparams) \Big\Vert 
        \nonumber\\
        &\qquad\quad  + 2\esssup_{\state \in \statespace} \; \Big\Vert H_{2,\timeidx+1}(\state;\policyparams) - H_{2,\timeidx+1}^{\psi_{2,\timeidx+1}}(\state;\policyparams) \Big\Vert 
        \nonumber\\
    \begin{split}
    \text{\scriptsize{[\cref{eq:proof-approx-lastH1-induction}, \cref{eq:proof-approx-lastH2-induction}]}}&\quad 
        < \esssup_{\state \in \statespace} \; \Big\Vert \CVaR_{\alpha} \Big(\rvdum^{\psi}_{\timeidx} \Bigm|\state_{\timeidx}=\state \Big) - \VaR_{\alpha} \Big(\rvdum^{\psi}_{\timeidx} \Bigm|\state_{\timeidx}=\state \Big) - H_{2,\timeidx}^{\psi_{2,\timeidx}}(\state;\policyparams) \Big\Vert 
        \\
        &\qquad\quad  + 2 \Big(\varepsilon_{1,\timeidx+1} + \varepsilon_{2,\timeidx+1} \Big).
        \label{eq:proof-cvar-induction-H2}
    \end{split}
\end{align}
Using the universal approximation theorem on \cref{eq:proof-cvar-induction-H1,eq:proof-cvar-induction-H2}, we get that for any $\varepsilon'_{\timeidx},\varepsilon''_{\timeidx} > 0$, there exist \ANN{}s $H_{1,\timeidx}^{\psi_{1,\timeidx}},H_{2,\timeidx}^{\psi_{2,\timeidx}}$ such that
\begin{subequations}
\begin{align}
	\esssup_{\state \in \statespace} \; \Big\Vert H_{1,\timeidx}(\state;\policyparams) - H_{1,\timeidx}^{\psi_{1,\timeidx}}(\state;\policyparams)\Big\Vert &< \varepsilon'_{\timeidx}+ \varepsilon_{1,\timeidx+1} + \varepsilon_{2,\timeidx+1} =: \varepsilon_{1,\timeidx}, \quad \text{and} 
    \\
	\esssup_{\state \in \statespace} \; \Big\Vert H_{2,\timeidx}(\state;\policyparams) - H_{2,\timeidx}^{\psi_{2,\timeidx}}(\state;\policyparams) \Big\Vert &< \varepsilon''_{\timeidx} + 2 \Big(\varepsilon_{1,\timeidx+1} + \varepsilon_{2,\timeidx+1} \Big) =: \varepsilon_{2,\timeidx}.
\end{align}
\end{subequations}
This completes the proof by induction, and it is valid for any $\varepsilon'_{\timeidx},\varepsilon''_{\timeidx}>0$, $\timeidx \in \periodspace$.
Given \new{global errors $\varepsilon^{*}_{1},\varepsilon^{*}_{2}$} and sufficiently large neural network structures in terms of depth and number of nodes per layer, we can train the \ANN{}s to construct two sequences $\{\varepsilon_{1,\timeidx}\}_{\timeidx \in \periodspace}$ and $\{\varepsilon_{2,\timeidx}\}_{\timeidx \in \periodspace}$ such that \new{$\varepsilon_{1,0} < \varepsilon^{*}_1$ and $\varepsilon_{2,0} < \varepsilon^{*}_2$}.
\end{proof}

\new{\begin{corollary}
    Suppose $\policy^{\policyparams}$ is a fixed policy, and its value function $\valuefunc_{\timeidx}(\state;\policyparams)$ is given in \cref{eq:lastV,eq:otherV}.
    Then for any $\varepsilon^{*} > 0$, there exist two \ANN{}s, denoted $H_{1,\timeidx}^{\psi_1}$ and $H_{2,\timeidx}^{\psi_2}$, such that for any $\timeidx \in \periodspace$, we have
    \begin{equation}
        \esssup_{\state \in \statespace} \; \Big\Vert \valuefunc_{\timeidx}(\state; \policyparams) - \left( H_{1,\timeidx}^{\psi_1}(\state;\policyparams) + H_{2,\timeidx}^{\psi_2}(\state;\policyparams) \right) \Big\Vert < \varepsilon^{*}.
    \end{equation}
    \label{thm:univ-approx-V}
\end{corollary}}

\new{\begin{proof}
    As a consequence of \cref{thm:univ-approx-H}, for any $\varepsilon^{*}_1,\varepsilon^{*}_2 > 0$, there exist $H_{1,\timeidx}^{\valueparams_1}$ and $H_{2,\timeidx}^{\valueparams_1}$ such that
    \begin{subequations}
    \begin{align}
        \esssup_{\state \in \statespace} \; \Big\Vert H_{1,\timeidx}(\state;\policyparams) - H_{1,\timeidx}^{\psi_{1}}(\state;\policyparams)\Big\Vert &< \varepsilon^{*}_{1}, \quad \text{and} \label{eq:proof-approx-lastH1-end} \\
        \esssup_{\state \in \statespace} \; \Big\Vert H_{2,\timeidx}(\state;\policyparams) - H_{2,\timeidx}^{\psi_{2}}(\state;\policyparams)\Big\Vert &< \varepsilon^{*}_{2}, \label{eq:proof-approx-lastH2-end}
    \end{align}
    \end{subequations}
    for any $\timeidx\in\periodspace$. Therefore, for the value function at any period $\timeidx \in \periodspace$, we have
    \begin{align*}
        &\esssup_{\state \in \statespace} \; \Big\Vert \valuefunc_{\timeidx}(\state;\policyparams) - \left( H_{1,\timeidx}^{\psi_{1}}(\state;\policyparams) + H_{2,\timeidx}^{\psi_{2}}(\state;\policyparams) \right) \Big\Vert 
        \\
        \text{\scriptsize{[\cref{eq:H2}]}}&\quad = \esssup_{\state \in \statespace} \; \Big\Vert H_{1,\timeidx}(\state;\policyparams) + H_{2,\timeidx}(\state;\policyparams) - H_{1,\timeidx}^{\psi_{1}}(\state;\policyparams) - H_{2,\timeidx}^{\psi_{2}}(\state;\policyparams) \Big\Vert 
        \\
        \text{\scriptsize{[\triangleineq{}]}}&\quad \leq \esssup_{\state \in \statespace} \; \Big\Vert H_{1,\timeidx}(\state;\policyparams) - H_{1,\timeidx}^{\psi_{1}}(\state;\policyparams) \Big\Vert + \esssup_{\state \in \statespace} \; \Big\Vert H_{2,\timeidx}(\state;\policyparams) - H_{2,\timeidx}^{\psi_{2}}(\state;\policyparams) \Big\Vert
        \\
        \text{\scriptsize{[\cref{eq:proof-approx-lastH1-end}, \cref{eq:proof-approx-lastH2-end}]}}&\quad < \varepsilon^{*}_{1} + \varepsilon^{*}_{2}.
    \end{align*}
    Since $\varepsilon^{*}_1,\varepsilon^{*}_2$ are arbitrary, this concludes the proof.
\end{proof}}

In the proof of \cref{thm:univ-approx-H,thm:univ-approx-V}, we derive the result for \ANN{}s $H_{1,\timeidx}^{\psi_1},H_{2,\timeidx}^{\psi_2}$ that are ensembles of \ANN{}s for all periods $\timeidx \in \periodspace$.
In our implementation, however, we instead use single \ANN{}s with time as an additional dimension of the state space -- it requires less memory than storing ensembles of \ANN{}s, speeds up the critic procedure, and provides an appropriate approximation of the value function at any period. In fact, using the universal approximation theorem and a polynomial interpolation argument, one may prove that any finite ensemble of \ANN{}s can be approximated by a single \ANN{} with an augmented input (see \cite{coache2021reinforcement}).

\new{We also remark that the universal approximation theorem used here is valid for an \ANN{} with fixed depth (i.e. number of layers) but arbitrary width (i.e. number of nodes). Since the number of nodes may be exponentially large, the exact \ANN{} structure in terms of layers, nodes, and activation functions depends on the specific application. In practice, it is common that deep narrow layers outperform shallow but wide ones.}
\new{Finally,} given the recursion form of the value function in \cref{eq:lastV,eq:otherV}, and the proof of \cref{thm:univ-approx-V}, we understand that the accuracy of the estimation of the value function at a period $\timeidx$ strongly depends on the accuracy of its estimation at the subsequent periods.
Therefore, from a practical viewpoint, one may need to increase the number of epochs for the critic procedure in \cref{algo:actor-critic} if the number of periods is large.
%!TEX root = ../main.tex

% --------------------------------------------------------------
%                   Update policy
% --------------------------------------------------------------
\subsection{Policy Update}
\label{ssec:update-policy}

Next, we provide details regarding the gradient of the value function which is used to update the policy.
\begin{theorem}
    Suppose the logarithm of transition probabilities $\log \PP^{\policyparams}(\action,\statedum | \state)$ is a differentiable function in $\policyparams$ when $\PP^{\policyparams}(\action,\statedum | \state) \neq 0$, and its gradient wrt $\policyparams$ is bounded for any $(\action, \state) \in \actionspace \times \statespace$.
    Then, for any state $\state \in \statespace$, the gradient of the value function at period \new{$\eplength$} given in \cref{eq:lastV} is
    \begin{subequations}
    \begin{equation}
    	\newmath{\grad{\policyparams} \valuefunc_{\eplength} (\state; \policyparams) = \frac{1}{1-\alpha} \EE_{\PP^{\policyparams}(\cdot,\cdot | \state_{\eplength}=\state)} \bigg[
    	\Big( \costfunc_{\eplength}^{\policyparams} - \lambda^{*} \Big)_{+}
    	\bigg( \grad{\policyparams} \log \policy^{\policyparams} (\action | \state_{\eplength})\Big\rvert_{\action=\action_{\eplength}^{\policyparams}} \bigg)
    	\bigg]},
    	\label{eq:score-gradient-reinforce}
    \end{equation}
    and the gradient of the value function at periods \new{$\timeidx \in \periodspace \setminus \{\eplength\}$} given in \cref{eq:otherV} is
    \begin{equation}
    \begin{split}
    	\grad{\policyparams} \valuefunc_{\timeidx} (\state; \policyparams) &= \frac{1}{1-\alpha} \EE_{\PP^{\policyparams}(\cdot,\cdot | \state_{\timeidx}=\state)} \bigg[
    	\Big( \costfunc_{\timeidx}^{\policyparams} + \valuefunc_{\timeidx+1}(\state_{\timeidx+1}^{\policyparams}; \policyparams) - \lambda^{*} \Big)_{+}
    	\bigg( \grad{\policyparams} \log \policy^{\policyparams} (\action | \state_{\timeidx})\Big\rvert_{\action=\action_{\timeidx}^{\policyparams}} \bigg)
    	\bigg] \\
    	&\qquad\qquad\qquad +
    	\EE_{\PP^{\policyparams}(\cdot,\cdot | \state_{\timeidx}=\state)} \bigg[
    	\bigg( \grad{\policyparams} \valuefunc_{\timeidx+1}(\statedum; \policyparams)\Big\rvert_{\statedum=\state_{\timeidx+1}^{\policyparams}} \bigg) \; \weight^{*}(\action_{\timeidx}^{\policyparams}, \state_{\timeidx+1}^{\policyparams}) \bigg],
    \end{split} \label{eq:score-gradient-reinforce-2}
    \end{equation}
    \end{subequations}
    where $(\weight^{*}, \lambda^{*})$ is any saddle-point of the Lagrangian function of \cref{eq:lastV,eq:otherV} respectively.
    \label{thm:gradient-V}
\end{theorem}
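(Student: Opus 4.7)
The plan is to combine the Rockafellar-Uryasev variational formula
\[
\CVaR_{\alpha}(X) = \min_{\lambda \in \RR} \Big\{ \lambda + \tfrac{1}{1-\alpha} \EE\big[(X - \lambda)_{+}\big] \Big\}
\]
with an envelope-theorem argument and the standard score-function (REINFORCE) identity. Rewriting \cref{eq:lastV,eq:otherV} in this form, and defining $X_{\eplength-1} := \costfunc_{\eplength-1}^{\policyparams}$ and $X_{\timeidx} := \costfunc_{\timeidx}^{\policyparams} + \valuefunc(\state_{\timeidx+1}^{\policyparams}; \policyparams)$ for $\timeidx < \eplength-1$, I obtain $\valuefunc(\state_{\timeidx}; \policyparams) = \lambda^{*} + \tfrac{1}{1-\alpha}\,\EE^{\PP^{\policyparams}(\cdot,\cdot|\state_{\timeidx})}[(X_{\timeidx} - \lambda^{*})_{+}]$. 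Because $\lambda^{*}$ attains the infimum, Danskin's theorem lets me differentiate in $\policyparams$ while holding $\lambda^{*}$ fixed.

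For the terminal period, the only dependence on $\policyparams$ inside the expectation is through the transition kernel $\PP^{\policyparams}(\action, \statedum | \state) = \PP(\statedum|\state,\action)\policy^{\policyparams}(\action|\state)$. Writing the expectation as an integral against this kernel, I would interchange gradient and integral -- justified by the Lipschitz property of $(\cdot)_{+}$, the bounded-log-gradient hypothesis on $\PP^{\policyparams}$, and dominated convergence -- and invoke the log-derivative identity $\grad{\policyparams}\policy^{\policyparams} = \policy^{\policyparams}\,\grad{\policyparams}\log\policy^{\policyparams}$. This produces \cref{eq:score-gradient-reinforce} directly.

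For intermediate periods, the integrand $(X_{\timeidx} - \lambda^{*})_{+}$ depends on $\policyparams$ twice: through the kernel and through the inner value function $\valuefunc(\state_{\timeidx+1}^{\policyparams};\policyparams)$. I would split $\grad{\policyparams}$ into these two contributions. The first is handled exactly as in the terminal case and yields the score-function term of \cref{eq:score-gradient-reinforce-2}. The second arises from chain-rule differentiation of $(\cdot)_{+}$, whose a.e.\ derivative is $\Ind\{\cdot > 0\}$; the resulting prefactor $\tfrac{1}{1-\alpha}\Ind\{X_{\timeidx} > \lambda^{*}\}$ is precisely the saddle-point dual variable $\weight^{*}(\action_{\timeidx}^{\policyparams}, \state_{\timeidx+1}^{\policyparams})$ appearing in the Kusuoka-type dual representation $\CVaR_{\alpha}(X) = \sup\{\EE[\weight X] : \EE[\weight]=1,\; 0 \le \weight \le \tfrac{1}{1-\alpha}\}$, which reproduces the second expectation in \cref{eq:score-gradient-reinforce-2}.

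The main obstacle is rigorously justifying the exchange of differentiation and expectation, given both the non-smoothness of $(\cdot)_{+}$ and the recursive definition of $\valuefunc$. The kink at zero is handled cleanly by the envelope theorem: only the a.e.\ derivative matters, and the event $\{X_{\timeidx} = \lambda^{*}\}$ carries zero probability under the unique-$\alpha$-quantile assumption already invoked in \cref{thm:elicitable-cvar}. The recursive dependence is handled by backward induction on $\timeidx$: the base case $\timeidx = \eplength-1$ follows from the terminal formula and the bounded-log-gradient assumption, and for $\timeidx < \eplength-1$ a uniform bound on $\grad{\policyparams}\valuefunc(\state_{\timeidx+1};\policyparams)$ supplied by the induction hypothesis lets me apply dominated convergence once more to close the argument.
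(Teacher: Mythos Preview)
Your argument is correct and reaches the same conclusion, but it enters from the primal side rather than the dual side that the paper takes. The paper writes \cref{eq:lastV} via the coherent-risk representation $\CVaR_{\alpha}(X)=\sup_{\weight\in\Uu}\EE[\weight X]$, forms the Lagrangian with multiplier $\lambda$ for the constraint $\EE[\weight]=1$, and then invokes the envelope theorem for saddle-point problems \citep{milgrom2002envelope} to differentiate $\lagrangian^{\policyparams}(\weight,\lambda)$ at $(\weight^{*},\lambda^{*})$; the likelihood-ratio trick and the explicit form of $\weight^{*}$ finish the job. You instead use the Rockafellar--Uryasev \emph{minimisation} in $\lambda$ and Danskin's theorem, so the dual variable $\weight^{*}=\tfrac{1}{1-\alpha}\Ind\{X_{\timeidx}>\lambda^{*}\}$ appears not as a Lagrangian optimiser but as the a.e.\ derivative of $(\cdot)_{+}$ via the chain rule. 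The two routes are Lagrangian duals of one another: your $\lambda$ is exactly the paper's multiplier, and your indicator is exactly the paper's $\weight^{*}$. Your approach is slightly more elementary (no saddle-point envelope theorem needed, and the structure of $\weight^{*}$ emerges automatically), whereas the paper's dual formulation generalises immediately to the spectral case in \cref{thm:gradient-V-spectral} and makes the saddle-point $(\weight^{*},\lambda^{*})$ referenced in the theorem statement explicit from the outset.
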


\begin{proof}
We provide a proof for the gradient at the last period -- the gradient for the others periods is obtained in an analogous manner.
First, recall that using the representation theorem for coherent risk measures (see Theorem 6.4 of \cite{shapiro2014lectures}), the value function at period \new{$\timeidx=\eplength$} in \cref{eq:lastV} can be written as
\begin{equation}
    \newmath{\valuefunc_{\eplength} (\state;\policyparams) =
    \sup_{\weight \in \Uu(\PP^{\policyparams}(\cdot,\cdot|\state_{\eplength}=\state))} \EE_{\PP^{\policyparams}(\cdot,\cdot|\state_{\eplength}=\state)} \Big[
    \costfunc_{\eplength}^{\policyparams} \, \weight
    \Big]},
	\label{eq:proof-valuefunc}
\end{equation}
where the set $\Uu$ is the (convex) risk envelope
\begin{equation}
	\label{eq:risk-envelope-CVaR}
	\Uu(\PP^{\policyparams}(\cdot,\cdot|\state)) := \left\{ \weight : \EE_{\PP^{\policyparams}(\cdot,\cdot|\state)} \Big[ \weight \Big] = 1, \ \weight \in \left[ 0, \frac{1}{1-\alpha} \right] \right\}.
\end{equation}
For any state $\state \in \statespace$, the Lagrangian associated with the optimisation problem in \cref{eq:proof-valuefunc} is
\new{\begin{align}
	\lagrangian^{\policyparams}(\weight, \lambda) &= \EE_{\PP^{\policyparams}(\cdot,\cdot|\state_{\eplength}=\state)} \Big[
    \costfunc_{\eplength}^{\policyparams} \, \weight
    \Big] - \lambda  \left( \EE_{\PP^{\policyparams}(\cdot,\cdot|\state_{\eplength}=\state)} \Big[ \weight \Big] - 1 \right) \nonumber\\
    \begin{split}
	&=
	\sum_{(\action,\statedum)} \costfunc(\state_{\eplength},\action, \statedum) \,
	\PP^{\policyparams}(\action,\statedum | \state_{\eplength}=\state) \,
	\weight(\action,\statedum) \\
	&\qquad - \lambda \left( \sum_{(\action,\statedum)} \PP^{\policyparams}(\action,\statedum |  \state_{\eplength}=\state) \, \weight(\action,\statedum) - 1 \right).
	\label{eq:lagrangian-last-period}
	\end{split}
\end{align}}
One can show that there exists at least one saddle-point of the Lagrangian function in \cref{eq:lagrangian-last-period} due to the convexity of the \CVaR{} and its risk envelope $\Uu$.
Furthermore, any saddle-point $(\weight^{*}, \lambda^{*})$ satisfies $\weight^{*}(\action,\statedum) = \frac{1}{1-\alpha}$ if $\newmath{\costfunc_{\eplength}^{\policyparams}} > \lambda^{*}$ and $\weight^{*}(\action,\statedum) = 0$ otherwise, where $\lambda^{*}$ is any $\alpha$-quantile of $\newmath{\costfunc_{\eplength}^{\policyparams}}$.
Using Slater's condition and the convexity of the problem, strong duality holds, i.e.
\begin{equation}
    \newmath{\valuefunc_{\eplength}(\state;\policyparams)} = \max_{\weight \geq 0} \, \min_{\lambda} \, \lagrangian^{\policyparams}(\weight, \lambda). \label{eq:strong-duality}
\end{equation}

Next, we apply the Envelope theorem for saddle-point problems (see Theorem 4 in \cite{milgrom2002envelope}) on \cref{eq:strong-duality}, which gives us that the gradient of the maximisation problem equals to the gradient of the Lagrangian evaluated at one of its saddle-points.
It relies on the equidifferentiability of the objective function and the absolute continuity of its gradient -- these properties are satisfied as we work with a spectral (and thus coherent and convex) risk measure.

\new{We remark that} the transition probabilities $\PP^{\policyparams}(\action,\statedum | \state)$ depends on $\policyparams$ only through the policy $\policy^{\policyparams}(\action | \state)$.
\new{By the assumptions in the Theorem statement, $\PP^{\policyparams}(\action,\statedum | \state)$ is differentiable and bounded, hence policy gradient can be applied. This assumption is standard among policy gradient methods (see e.g. \cite{sutton2000policy,konda2000actor,marbach2001simulation}).}
Using the Envelope theorem on \cref{eq:strong-duality} and the log derivative identity
\begin{equation}
    \newmath{p(x; \policyparams) \; \grad{\policyparams} \log \left( p(x; \policyparams) \right) = \grad{\policyparams} \, p(x;\policyparams)},
    \label{eq:likelihood-trick}
\end{equation}
also known as the ``likelihood-ratio'' trick in the \ML{} literature, we obtain
\begin{align}
    &\grad{\policyparams} \newmath{\valuefunc_{\eplength} (\state; \policyparams)} \nonumber \\
    \text{\scriptsize{[\cref{eq:strong-duality}]}}&\quad= \grad{\policyparams} \max_{\weight \geq 0} \, \min_{\lambda} \, \lagrangian^{\policyparams}(\weight, \lambda) \nonumber\\
    \text{\scriptsize{[Envelope]}}&\quad= \grad{\policyparams} \lagrangian^{\policyparams}(\weight, \lambda)
    \Big|_{\weight^{*}, \lambda^{*}} \nonumber\\
    \text{\scriptsize{[\cref{eq:lagrangian-last-period}]}}&\quad=
    \newmath{\sum_{(\action,\statedum)}
    \Big( \costfunc(\state_{\eplength},\action, \statedum) - \lambda^{*} \Big)
	\grad{\policyparams} \PP^{\policyparams}(\action,\statedum | \state_{\eplength}=\state)
	\; \weight^{*}(\action,\statedum)}
    \nonumber\\
    \text{\scriptsize{[\cref{eq:likelihood-trick}]}}&\quad=
    \newmath{\sum_{(\action,\statedum)}
    \Big( \costfunc(\state_{\eplength},\action, \statedum) - \lambda^{*} \Big)
    \PP^{\policyparams}(\action,\statedum | \state_{\eplength}=\state)
	\grad{\policyparams} \log \PP^{\policyparams}(\action,\statedum | \state_{\eplength}=\state)
	\; \weight^{*}(\action,\statedum)}
    \nonumber\\
    &\quad=
    \newmath{\EE_{\PP^{\policyparams}(\cdot,\cdot | \state_{\eplength}=\state)} \bigg[
	\Big( \costfunc_{\eplength}^{\policyparams} - \lambda^{*} \Big)
    \bigg( \grad{\policyparams} \log \PP^{\policyparams} (\action,\state | \state_{\eplength})\Big\rvert_{(\action,\state)=(\action_{\eplength}^{\policyparams},\state_{\eplength+1}^{\policyparams})} \bigg)
	\; \weight^{*}(\action_{\eplength}^{\policyparams}, \state_{\eplength}^{\policyparams}) \bigg]}
	\nonumber\\
	\text{\scriptsize{[form of $\weight^{*}$]}}&\quad=
    \newmath{\frac{1}{1-\alpha} \EE_{\PP^{\policyparams}(\cdot,\cdot | \state_{\eplength}=\state)} \bigg[
	\Big( \costfunc_{\eplength}^{\policyparams} - \lambda^{*} \Big)_{+}
    \bigg( \grad{\policyparams} \log \PP^{\policyparams} (\action,\state | \state_{\eplength-1})\Big\rvert_{(\action,\state)=(\action_{\eplength}^{\policyparams},\state_{\eplength+1}^{\policyparams})} \bigg) \bigg]}
	\nonumber\\
    \text{\scriptsize{[$\policyparams$-depend.]}}&\quad=
    \newmath{\frac{1}{1-\alpha} \EE_{\PP^{\policyparams}(\cdot,\cdot | \state_{\eplength}=\state)} \bigg[
	\Big( \costfunc_{\eplength}^{\policyparams} - \lambda^{*} \Big)_{+}
	\bigg( \grad{\policyparams} \log \policy^{\policyparams} (\action | \state_{\eplength})\Big\rvert_{\action=\action_{\eplength}^{\policyparams}} \bigg) \bigg]}.
	\nonumber
\end{align}
\end{proof}

In what follows, we assume that we can obtain samples from the policy using the \emph{reparameterization trick}.
This technique is widely used in generative modelling, because it removes randomness from the stochastic computation graph to allow the back-propagation of derivatives for all operations in the \ANN{}s.
As a common example, if $\policy^{\policyparams}(\cdot | \state) \sim \normaldist(\mu^{\policyparams},\sigma^{\policyparams})$, then one would simulate from $\mu^{\policyparams} + Z \sigma^{\policyparams}$ with $Z$ distributed as $\normaldist(0,1)$ instead of simulating directly from $\normaldist(\mu^{\policyparams},\sigma^{\policyparams})$.
More generally, any probability distribution in the location-scale family or with a tractable inverse \CDF{} is amenable to the reparametrization trick.

We aim to optimise the value function $\valuefunc_{\timeidx}(\cdot; \policyparams)$ over policies $\policy^{\policyparams}$ using a \emph{policy gradient approach} \cite{sutton2000policy}, which proposes to update parameters of the policy using the optimisation rule $\policyparams \leftarrow \policyparams - \lrate \grad{\policyparams} \valuefunc_{\timeidx}(\cdot; \policyparams)$.
Our proposed framework with two neural networks is convenient when dealing with the dynamic \CVaR{}, as we may take advantage of the gradient formulae in \cref{eq:score-gradient-reinforce,eq:score-gradient-reinforce-2} and the form of the saddle-points $(\weight^{*}, \lambda^{*})$.
More specifically, as $\lambda^{*}$ is the $\alpha$-quantile (and thus the value-at-risk with threshold $\alpha$) of the running risk-to-go, we have
\begin{equation}
    \lambda^{*} = H_{1,\timeidx}(\state; \policyparams)
    \quad \text{and} \quad
    \weight^{*}(\action_{\timeidx}^{\policyparams}, \state_{\timeidx+1}^{\policyparams}) = \frac{1}{1-\alpha} \, \Ind_{\costfunc_{\timeidx}^{\policyparams} + \valuefunc_{\timeidx+1}(\state_{\timeidx+1}^{\policyparams}; \policyparams) > \lambda^{*}}.
    \label{eq:saddle-points}
\end{equation}
We then reuse the estimates $H_{1,\timeidx}^{\psi_1},H_{2,\timeidx}^{\psi_2},\valuefunc_{\timeidx}^{\valueparams}$ already computed by the critic in \cref{ssec:estimate-V} to get an estimation of these saddle-points.
The conditional elicitability of the dynamic \CVaR{} serves our purpose for both the estimation of the value function and the policy gradient method.

As the true transition probability distributions are unknown to the agent, we instead take the empirical mean based on the transitions in \cref{eq:transitions}.
Combining this with the gradient formulae in \cref{thm:gradient-V} and the estimation of the saddle-points in \cref{eq:saddle-points}, we obtain the loss function
{\small{
\begin{equation}
\begin{split}
    \Ll^{\policyparams} &= \frac{1}{1-\alpha} \newmath{\sum_{\timeidx \in \periodspace \setminus \{\eplength\}}} \sum_{\batchidx=1}^{\Nbatchs} \Bigg[
	\Big( \costfunc_{\timeidx}^{(\batchidx)} + \valuefunc_{\timeidx+1}^{\valueparams}(\state_{\timeidx+1}^{(\batchidx)}; \policyparams) - H_{1,\timeidx}^{\psi_1}(\state_{\timeidx}^{(\batchidx)}; \policyparams) \Big)_{+}
	\bigg( \grad{\policyparams} \log \policy^{\policyparams} (\action | \state_{\timeidx}^{(\batchidx)})\Big\rvert_{\action=\action_{\timeidx}^{(\batchidx)}} \bigg)
	\Bigg] \\
	&\qquad\quad + \frac{1}{1-\alpha}
	\sum_{\batchidx=1}^{\Nbatchs} \newmath{\Bigg[
	\Big( \costfunc_{\eplength}^{(\batchidx)} - H_{1,\eplength}^{\psi_1}(\state_{\eplength}^{(\batchidx)}; \policyparams) \Big)_{+}
	\bigg( \grad{\policyparams} \log \policy^{\policyparams} (\action | \state_{\eplength}^{(\batchidx)})\Big\rvert_{\action=\action_{\eplength}^{(\batchidx)}} \bigg)
	\Bigg]},
\end{split} \label{eq:loss-func-gradient-reinforce} \tag{L2}
\end{equation}
}}where the expectation of the gradient of $\valuefunc$ in \cref{eq:score-gradient-reinforce-2} is omitted as we fix the value function of the actor part, and its \ANN{} does not depend explicitly on $\policyparams$; a common approach in the literature, see e.g. \cite{degris2012off}.

We note that if most of the observations $\costfunc_{\timeidx}^{\policyparams} + \valuefunc_{\timeidx+1}^{\valueparams}(\state_{\timeidx+1}^{\policyparams}; \policyparams)$ are smaller than $\lambda^{*}$, then the gradient would be very close to zero, and the policy $\policy^{\policyparams}$ would very slowly learn due to its almost null loss.
Indeed, for a mini-batch of $N$ episodes of $\eplength$ periods where one optimises the dynamic \CVaR{} at level $\alpha$, one expects that $(1-\alpha) N \eplength$ transitions contributes to the gradient.
To overcome this potential issue, we suggest to weight the mini-batch size $N$ by $\frac{1}{1-\alpha}$.
There are other methods one could use to assist with this issue, such as \emph{importance sampling} or \emph{state distribution correction factors} similarly to \cite{huang2021convergence}. In this work, we must note that the authors derive the correction factors from the stationary state distribution, and mention that their (not-implemented) algorithm, which ``can serve as a basis for future work'', requires an additional \ANN{} when dealing with large state spaces.

We summarise the actor procedure to update the policy as follows.
For each epoch of the training loop, we initially set to zero the accumulated gradients in the \ANN{} $\policy^{\policyparams}$.
We then simulate a $\frac{1}{1-\alpha}$-weighted mini-batch of full episodes induced by the policy $\policy^{\policyparams}$, compute the loss function $\Ll^{\policyparams}$ given in \cref{eq:loss-func-gradient-reinforce}, update the parameters $\policyparams$ using an optimisation rule, and decay the learning rate for $\policy^{\policyparams}$ based on a certain learning rate scheduler.

% experiments
%!TEX root = ../main.tex

% --------------------------------------------------------------
%                   Experiments
% --------------------------------------------------------------
\section{Experiments}
\label{sec:experiments}

In this section, we validate our proposed framework on two benchmark applications.
We apply our actor-critic algorithm on a statistical arbitrage example in \cref{ssec:statistical-arbitrage} and recover results from \cite{coache2021reinforcement}.
We also explore a portfolio allocation problem and solve it using our model-agnostic approach in \cref{ssec:portfolio-allocation}.
For readability purposes, we refer the reader to \cref{sec:appendix-hyperparams} for an exhaustive description of the hyperparameters used in these two examples.
Finally, we formulate settings for other \RL{} problems in \cref{ssec:other-examples} to illustrate the wide range of applications for which our approach may be used.

%!TEX root = ../main.tex

% --------------------------------------------------------------
%                   Statistical arbitrage example
% --------------------------------------------------------------
\subsection{Statistical Arbitrage Example}
\label{ssec:statistical-arbitrage}

Suppose an agent begins each episode with zero inventory, and at each period the agent wishes to trade quantities of an asset, whose price fluctuates according to some data-generating processes.
There are quadratic costs associated with each transaction, and the market imposes a terminal penalty encouraging the agent to liquidate their inventory by the end of the episode.
For each period $\timeidx \in \periodspace$, the agent observes the asset's price  $\price_{\timeidx} \in \pricespace$ and their inventory $q_{\timeidx} \in (q_{\min}, q_{\max})$, performs a trade $\action_{\timeidx}^{\policyparams} \in (\action_{\min}, \action_{\max})$, resulting in wealth $y_{\timeidx} \in \Reals$ according to
\begin{equation*}
\label{eq:wealth}
\begin{cases}
\begin{aligned}
	y_0 &= 0,
	\\
	y_{\timeidx} &= y_{\timeidx-1}
	- \action_{\timeidx-1}^{\policyparams} \price_{\timeidx-1}
	- (\action_{\timeidx-1}^{\policyparams})^2 \phi_1, \qquad \newmath{\timeidx=1, \ldots, \eplength}
	\\
	\newmath{y_{\eplength+1}} &= \newmath{y_{\eplength}
	- \action_{\eplength}^{\policyparams} \price_{\eplength}
	- (\action_{\eplength}^{\policyparams})^2 \phi_1
	+ q_{\eplength+1} \price_{\eplength+1}
	- q_{\eplength+1}^2 \phi_2},
\end{aligned},
\end{cases}
\end{equation*}
with coefficients $\phi_1=0.005$ and $\phi_2=0.5$ for the transaction costs and terminal penalty, respectively.
We suppose that \new{$\eplength=4$}, $q_{\max} = -q_{\min} = 5$, $\action_{\max} = -\action_{\min} = 2$, and the asset price follows an Ornstein-Uhlenbeck process given by the stochastic differential equation (\SDE{})
\begin{equation*}
	%\label{eq:price-process}
	\dee \price_{\timeidx} = \kappa (\mu - \price_{\timeidx}) \dee \timeidx + \sigma \dee W_{\timeidx},
\end{equation*}
where $\kappa = 2$, $\mu = 1$, $\sigma = 0.2$ and $W_{\timeidx}$ is a standard Brownian motion.
The statistical arbitrage consists in taking positions in the stock to take advantage of temporary price deviations from the mean-reverting value $\mu$.
The risk-aware agent tries to optimise the \RL{} problem stated in \cref{eq:optim-problem1}, where for every period, the actions are determined by the trades $\action_{\timeidx}$, the costs by the differences in wealth $\costfunc_{\timeidx} = y_{\timeidx} - y_{\timeidx+1}$, and the states by the tuples $(\timeidx, \price_{\timeidx}, q_{\timeidx})$.

We compare the optimal policies learnt with both the elicitable and nested simulation (proposed by \cite{coache2021reinforcement}) approaches in \cref{fig:stat-arbitrage}.
We first observe that our proposed approach gives a similar approximation to the time-dependent optimal solution.
We remark that the estimation differs for extreme prices -- we believe it is due to the fact that when randomising the initial states of the episodes, most of them do not have major price changes as time progresses.
Also, our algorithm is more efficient in terms of memory and time.
Indeed, the approach does not require additional transitions for every visited state, and the training procedure takes approximately half of the time compared to the nested simulation approach.
We suspect that this time improvement is mostly due to the actor procedure, where the nested approach must repeatedly simulate outer episodes and inner transitions.

We acknowledge that this set of experiments is not an exhaustive comparison between the two approaches, as there are many components to consider: i.e. the code implementation, number of inner transitions, dimensions of the state space, initialisation of the \ANN{}s, etc.
The key message of this illustrative example is that one can retrieve an \emph{accurate estimation} of the dynamic risk measure, and optimise over policies in \emph{a fast and efficient manner} using exclusively \emph{full episodes} and \emph{an additional \ANN{}}.

% SIAM VERSION
\begin{figure}[htbp]
    \centering
        \begin{minipage}[c]{0.0525\textwidth}
            \textbf{\footnotesize{elic.}}
        \end{minipage}
    	\hfill
        \begin{minipage}[c]{0.3025\textwidth}
    		\centering
    		\includegraphics[width=0.90\textwidth]{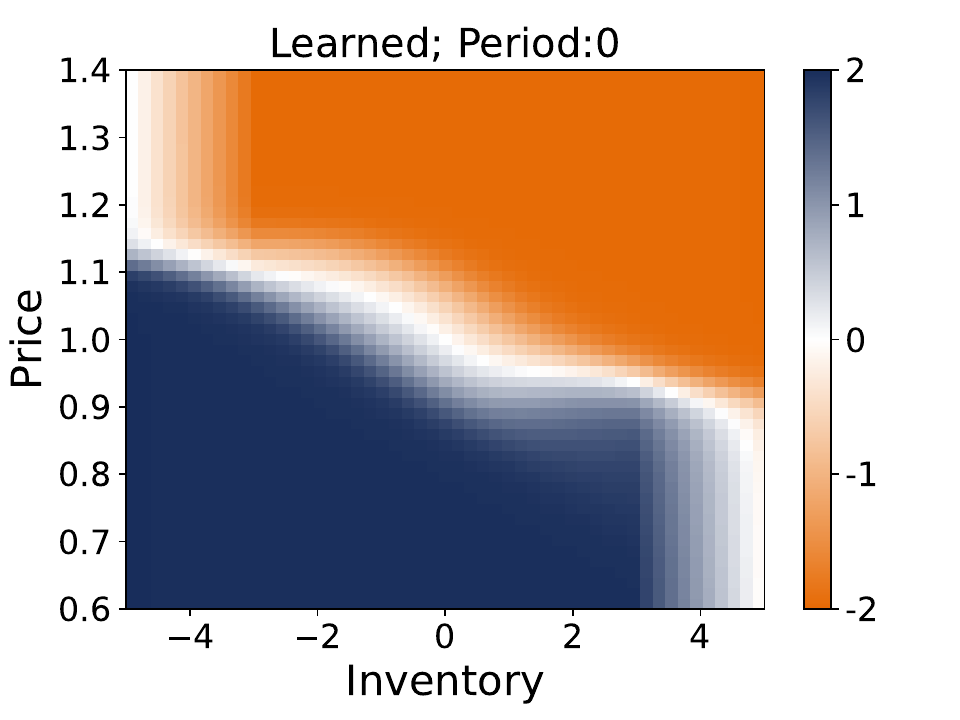}
    	\end{minipage}
    	\hfill
    	\begin{minipage}[c]{0.3025\textwidth}
    		\centering
    		\includegraphics[width=0.90\textwidth]{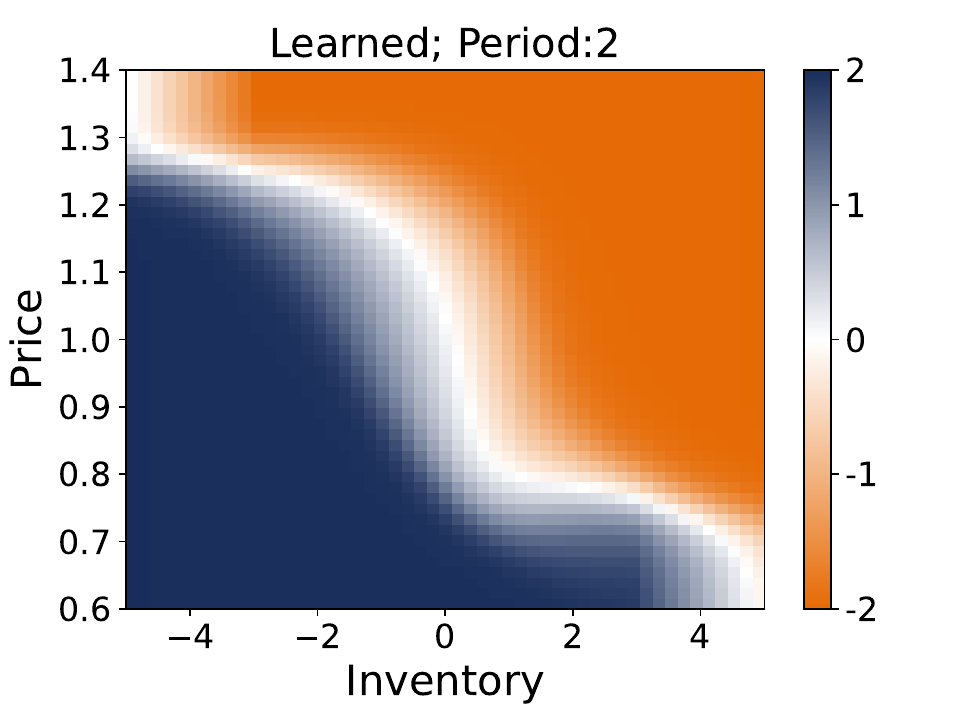}
    	\end{minipage}
    	\hfill
    	\begin{minipage}[c]{0.3025\textwidth}
    		\centering
    		\includegraphics[width=0.90\textwidth]{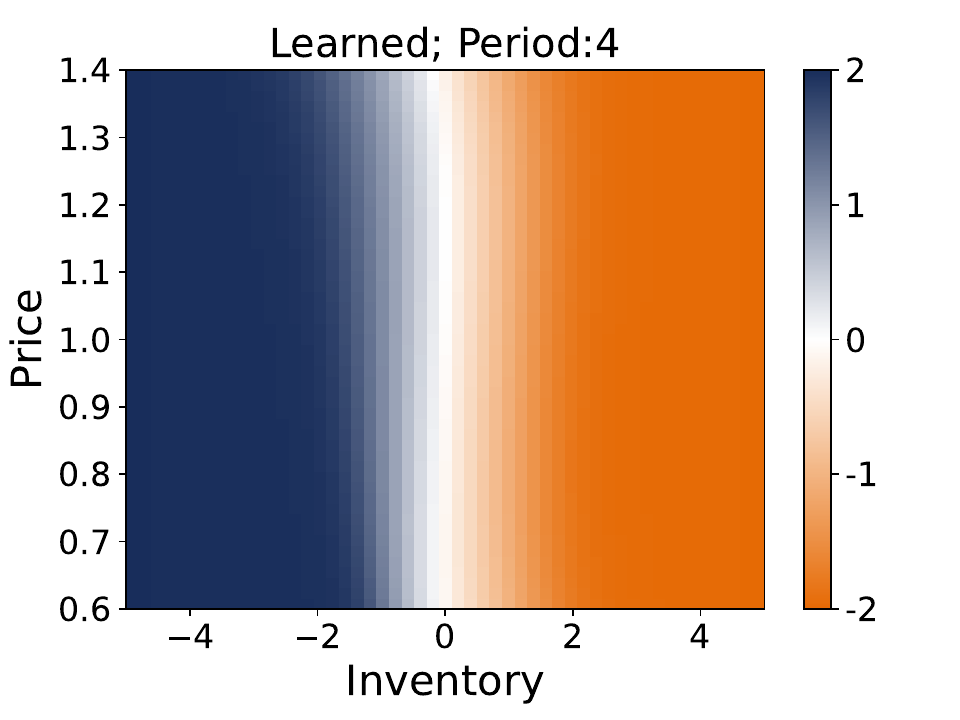}
    	\end{minipage}
    	\begin{minipage}[c]{0.0525\textwidth}
            \textbf{\footnotesize{nest.}}
        \end{minipage}
    	\hfill
    	\begin{minipage}[c]{0.3025\textwidth}
    		\centering
    		\includegraphics[width=0.90\textwidth]{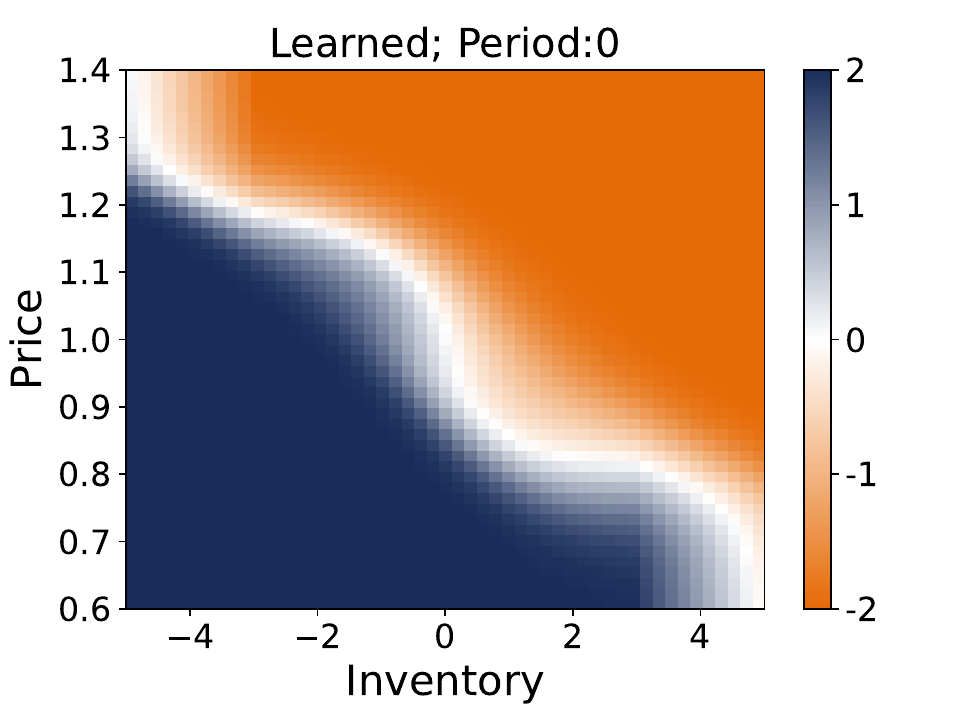}
    	\end{minipage}
    	\hfill
    	\begin{minipage}[c]{0.3025\textwidth}
    		\centering
    		\includegraphics[width=0.90\textwidth]{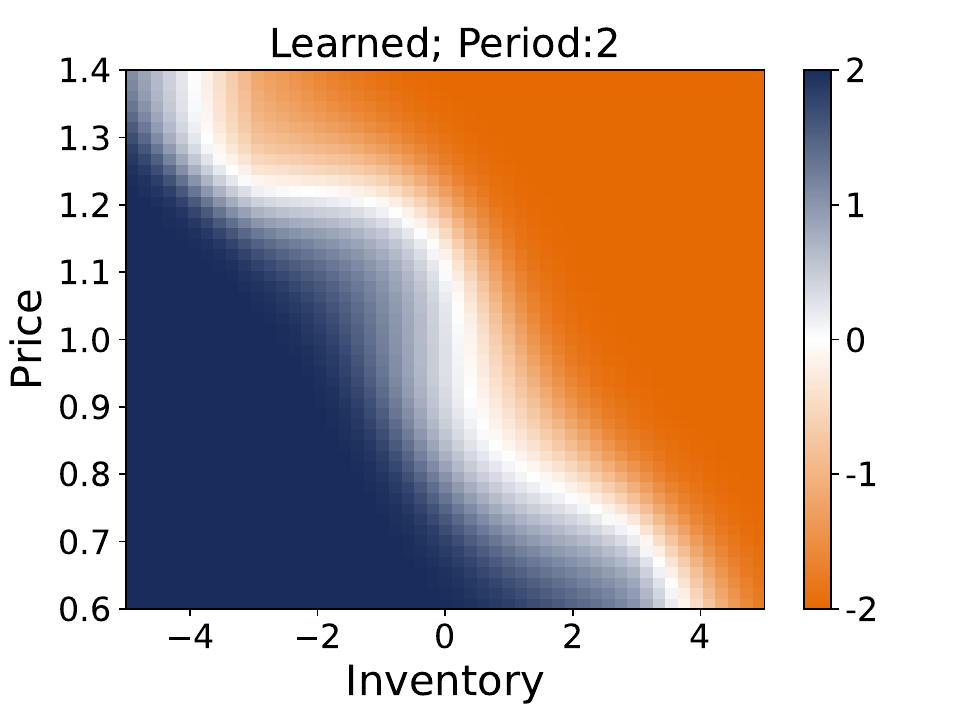}
    	\end{minipage}
    	\hfill
    	\begin{minipage}[c]{0.3025\textwidth}
    		\centering
    		\includegraphics[width=0.90\textwidth]{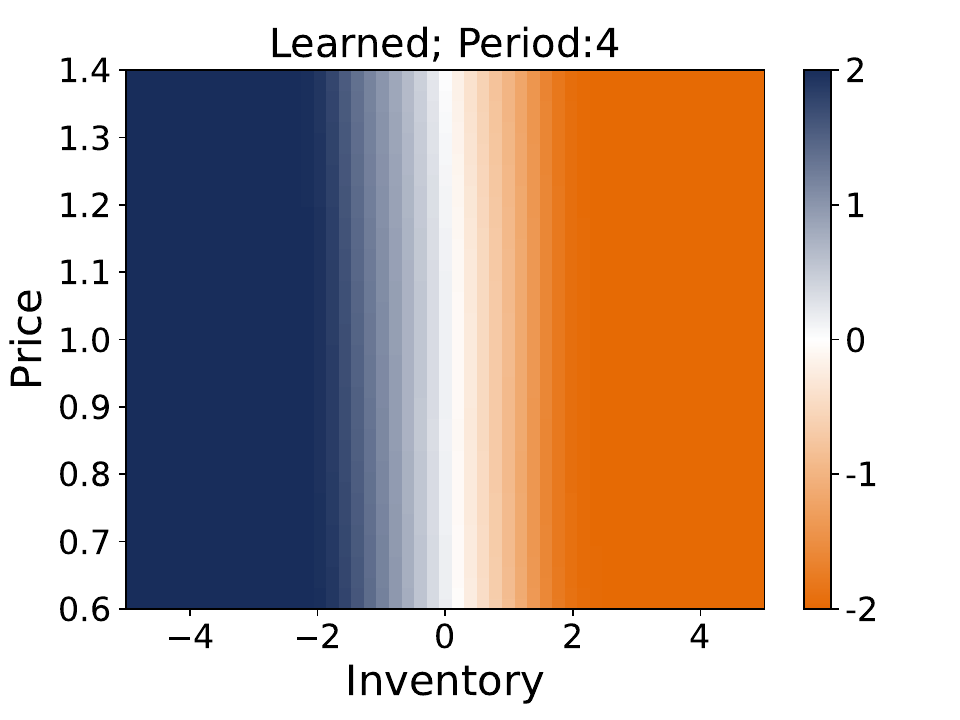}
    	\end{minipage}
    \subfloat[$\alpha = 0.5$]{
        \hphantom{\begin{minipage}[c]{0.96\textwidth} \centering \includegraphics[width=0.90\textwidth]{figure-files/figures-stat-arbitrage/CVaR0.5/ANN_actions_CVaR0.5_4.pdf}
    	\end{minipage}}
    }\\
        \begin{minipage}[c]{0.0525\textwidth}
            \textbf{\footnotesize{elic.}}
        \end{minipage}
    	\hfill
        \begin{minipage}[c]{0.3025\textwidth}
    		\centering
    		\includegraphics[width=0.90\textwidth]{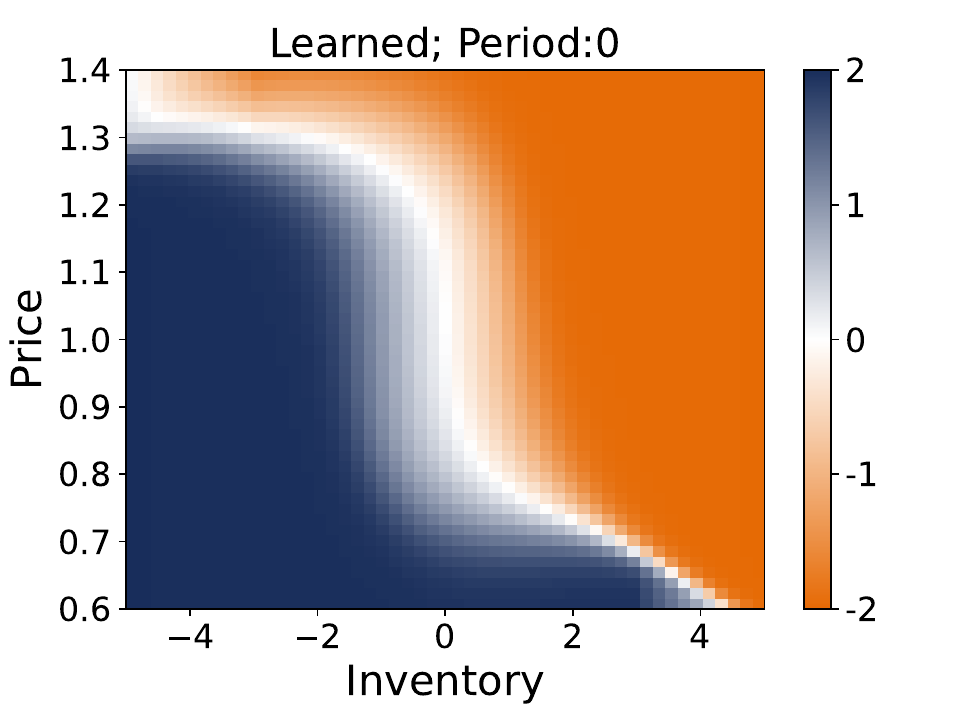}
    	\end{minipage}
    	\hfill
    	\begin{minipage}[c]{0.3025\textwidth}
    		\centering
    		\includegraphics[width=0.90\textwidth]{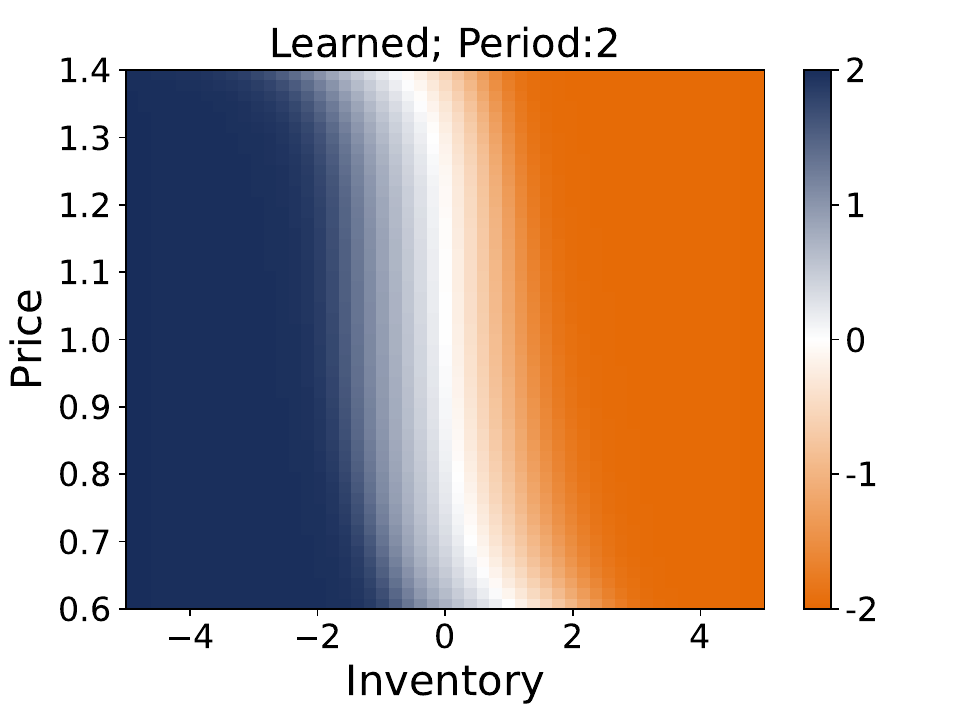}
    	\end{minipage}
    	\hfill
    	\begin{minipage}[c]{0.3025\textwidth}
    		\centering
    		\includegraphics[width=0.90\textwidth]{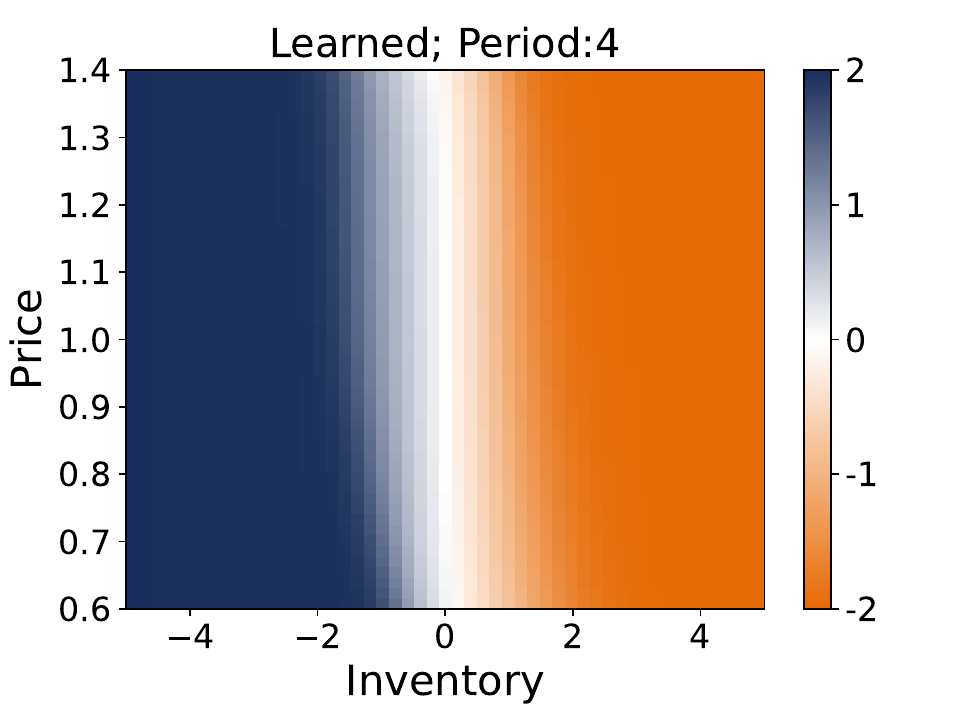}
    	\end{minipage}
    	\begin{minipage}[c]{0.0525\textwidth}
            \textbf{\footnotesize{nest.}}
        \end{minipage}
    	\hfill
    	\begin{minipage}[c]{0.3025\textwidth}
    		\centering
    		\includegraphics[width=0.90\textwidth]{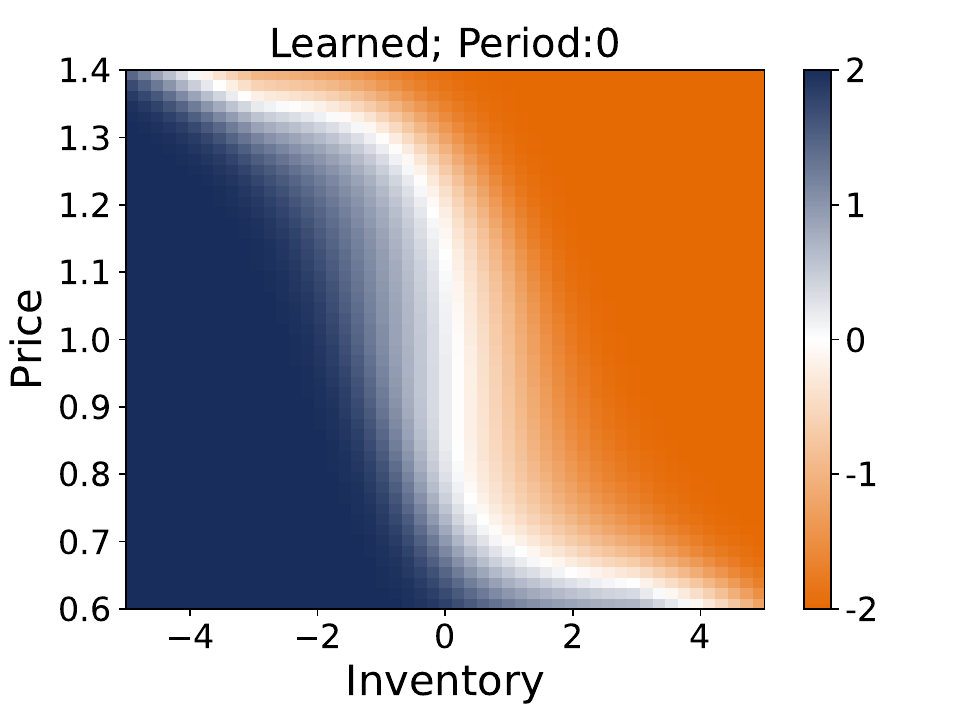}
    	\end{minipage}
    	\hfill
    	\begin{minipage}[c]{0.3025\textwidth}
    		\centering
    		\includegraphics[width=0.90\textwidth]{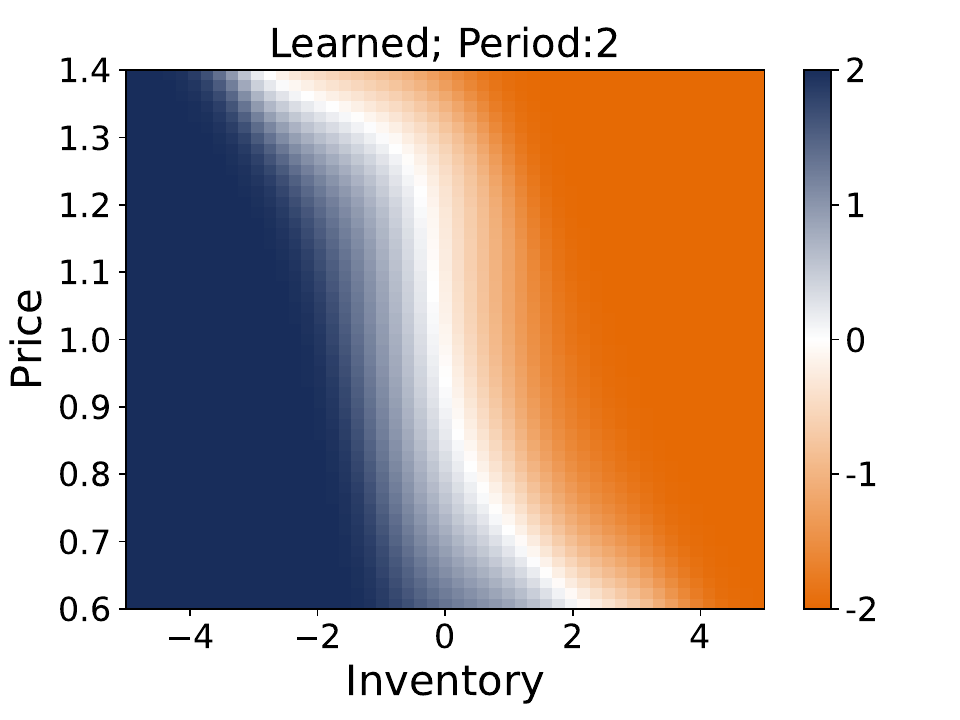}
    	\end{minipage}
    	\hfill
    	\begin{minipage}[c]{0.3025\textwidth}
    		\centering
    		\includegraphics[width=0.90\textwidth]{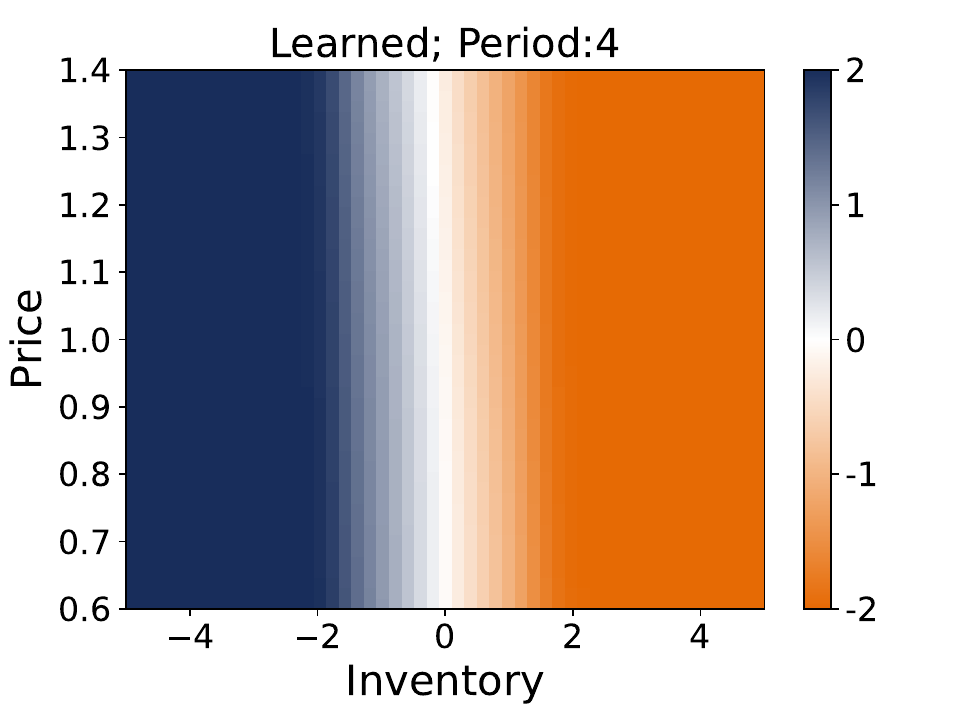}
    	\end{minipage}
    \subfloat[$\alpha = 0.8$]{
    	\hphantom{\begin{minipage}[c]{0.96\textwidth} \centering \includegraphics[width=0.90\textwidth]{figure-files/figures-stat-arbitrage/CVaR0.5/ANN_actions_CVaR0.5_4.pdf}
    	\end{minipage}}
    }
	\caption{Comparison of the learnt policy by the actor-critic algorithms with the elicitable (top row) and the nested simulation (bottom row) approaches as a function of time (from left to right) when optimising the dynamic $\CVaR_{\alpha}$ in the statistical arbitrage example.}
	\label{fig:stat-arbitrage}
\end{figure}
\newpage
%!TEX root = ../main.tex

% --------------------------------------------------------------
%                   Portfolio Allocation example
% --------------------------------------------------------------
\subsection{Portfolio Allocation Example}
\label{ssec:portfolio-allocation}

In this set of experiments, we highlight the flexibility of our novel approach on a portfolio allocation problem setting.
Suppose an agent faces a market with $I$ risky assets, and can allocate its wealth between these financial instruments during $\eplength=12$ periods over a one year horizon. First, we conduct two sets of experiments: for every risky asset denoted $i=1,\ldots,I$, the price dynamics $(\price_{\timeidx}^{(i)})_{\timeidx}$ are given by either one of the \SDE{}s
\begin{subequations}
\begin{align}
	\dee \price_{\timeidx}^{(i)} &= \mu^{(i)} \price_{\timeidx}^{(i)} \dee \timeidx + \sigma^{(i)} \price_{\timeidx}^{(i)} \dee W_{\timeidx}^{(i)}, \quad \text{or}
	\label{eq:price-GBM} \\
	\dee X_{\timeidx}^{(i)} &= - \kappa X_{\timeidx}^{(i)} \dee \timeidx + \sigma^{(i)} \dee W_{\timeidx}^{(i)} \quad \text{with} \quad \price_{\timeidx}^{(i)} = e^{X_{\timeidx}^{(i)} + \vartheta^{(i)}_{\timeidx}},
	\label{eq:price-meanrev}
\end{align}
\end{subequations}
where $\mu^{(i)}$ are the drifts, $\sigma^{(i)}$ the volatility scale parameters,
$\vartheta^{(i)}_{\timeidx}=\mu^{(i)} \timeidx - (\sigma^{(i)})^2 (1 - e^{-2\kappa \timeidx})/4\kappa$ the (time-dependent) mean-reversion levels, and $(W_{\timeidx}^{(i)})_{\timeidx}$ are $\PP$-Brownian motions.
One can show that the expectation of the price dynamics is identical with both \SDE{}s, i.e. $\EE[\price_{\timeidx}^{(i)}] = e^{\mu^{(i)} \timeidx}$ for all $i = 1,\ldots,I$, but asset prices mean-revert when using \cref{eq:price-meanrev}.
In order to make the market more realistic, we suppose that the financial instruments are correlated with each other: $(W_{\timeidx}^{(i)})_{\timeidx}$, $i=1,\ldots,I$ are $\PP$-Brownian motions with correlation $\rho$, i.e. $\dee [W^{(i)}, W^{(j)}]_{\timeidx} = \rho_{i,j} \dee \timeidx$.
In our experiments, we set the correlation between assets to $\rho_{i,j} = 0.2$.

The agent decides on the proportions of its wealth $\policy_{\timeidx}^{(i)}$, $i=1,\ldots,I$, to invest in the different assets at each period, which gives its portfolio allocation.
The agent's wealth is determined with the following \SDE{}:
\begin{equation*}
	\dee y_{\timeidx} = y_{\timeidx} \Bigg( \sum_{i=1}^{I} \policy_{\timeidx}^{(i)} \frac{\dee \price_{\timeidx}^{(i)}}{\price_{\timeidx}^{(i)}} \Bigg), \quad y_{0} = 1.
\end{equation*}

In our \RL{} problem context, the risk-aware agent wants to minimise the dynamic \CVaR{} of the profit and loss (\PnL{}) $\costfunc_{\timeidx} = y_{\timeidx} - y_{\timeidx+1}$, where actions are the investment proportions $\{\policy_{\timeidx}^{(i)}\}_{i=1,\ldots,I}$ at each period $\timeidx \in \periodspace$.
The states represent the information available to the agent before making its investment decisions.
One may assume that the agent observes several features, such as the asset price history, its current wealth, (estimated) market volatilities, etc.
Here, we focus on a few features: states are given by the time $\timeidx$ and the prices of the risky assets $\{\price_{\timeidx}^{(i)}\}_{i=1,\ldots,I}$\new{, where $I=3$}.

The policy is characterised by a $I$-dimensional multivariate Gaussian distribution.
More specifically, the outputs of an \ANN{} $\policy^{\policyparams}$ give the mean of the Gaussian distribution, and we perform a softmax mapping $f_{i}(x) = e^{x_i} / \sum_j e^{x_j}$ on realisations from the resulting distribution to enforce the constraint $\sum_i \policy_{\timeidx}^{(i)} = 1$.
We thus have a self-financing portfolio where we do not permit leveraging nor short-selling.
If short-selling is allowed in the market, one can, for instance, obtain the investment proportions for all assets except one (e.g. with a hyperbolic tangent output activation function to control the maximum leverage allowed) and then impose the $\sum_i \policy_{\timeidx}^{(i)} = 1$ constraint.

% SIAM VERSION
\begin{figure}[htbp]
    \centering
    \begin{minipage}[b]{0.60\textwidth}
        \subfloat[Learnt policies]{\label{fig:portfolio-GBM1-policy} \includegraphics[width=0.95\textwidth]{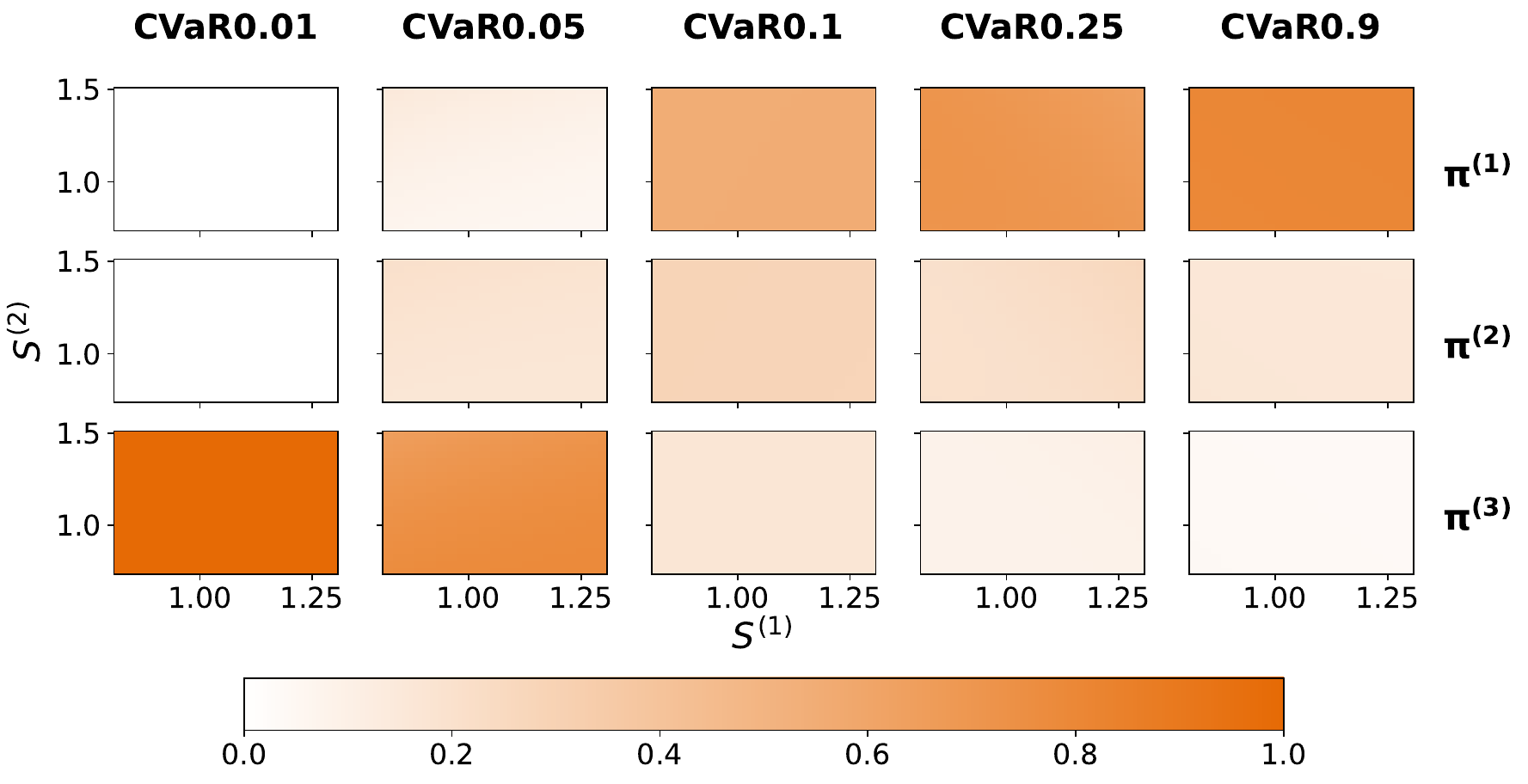}}
    \end{minipage}
    \hfill
    \begin{minipage}[b]{0.36\textwidth}
        \subfloat[Distributions of terminal \PnL{}]{\label{fig:portfolio-GBM1-PnL} \includegraphics[width=0.98\textwidth]{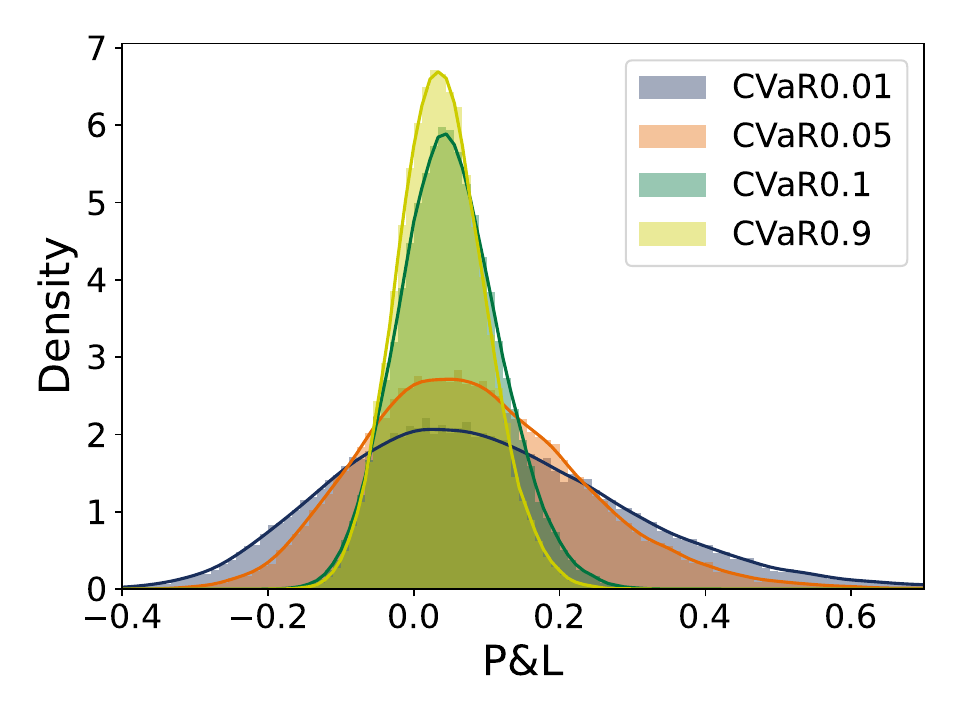}}
    \end{minipage}
	\caption{\PnL{} distributions when following learnt policies wrt dynamic $\CVaR_{\alpha}$, where price dynamics satisfy the \SDE{}s in \cref{eq:price-GBM}. Drifts and volatilities are respectively $\mu=[0.03; 0.06; 0.09]$ and $\sigma=[0.06; 0.12; 0.18]$.}
	\label{fig:portfolio-GBM1}
\end{figure}

When the price dynamics satisfy the \SDE{}s in \cref{eq:price-GBM} and all assets have identical Sharpe ratios, the learnt policy suggests to invest in the three different assets with an emphasis on the asset with the smallest volatility as we increase the risk-awareness of the agent, as illustrated in \cref{fig:portfolio-GBM1-policy}.
In this figure, the investment proportions $\policy^{(i)}$ are shown in each row, where darker colors correspond to more investment in that asset.
As the threshold of the dynamic \CVaR{} decreases, we recover the optimal risk-neutral strategy that fully invests in the asset with the best expected return.
We note that even when accounting for nominal amounts of risk, the agent pulls away from the riskier assets, which results in \PnL{} distributions in \cref{fig:portfolio-GBM1-PnL} with smaller variances.
\new{We report approximations of the dynamic \CVaR{}s at different thresholds under these optimal policies in \cref{sec:appendix-tables}.}
The \PnL{} distributions are estimated over 30,000 full episodes.
Also, when varying the Sharpe ratio of the assets in our experiments (not shown here for brevity), the risk-aware agent learns to allocate more wealth in risky assets with better Sharpe ratios -- it makes sense from an investor's perspective to diversify its portfolio with the best available assets.
We point out that for the specific dynamics described in \cref{eq:price-GBM}, returns are independent of the price and time, and thus we observe neither price nor time dependency in the learnt policies.

Next, consider the case where price dynamics satisfy the \SDE{}s in \cref{eq:price-meanrev}.
We expect our trained agent to change its behaviour according to asset prices due to the mean-reversion patterns.
Indeed, the learnt optimal policies suggest to allocate more wealth to assets for which the price is lower than its mean-reversion level, as illustrated in \cref{fig:portfolio-OU1} where the policy is a function of $\price^{(1)}$ (the $x$-axis), $\price^{(2)}$ (the $y$-axis) and $\price^{(3)}$ (the columns).
As noted in the previous scenario, increasing risk-awareness leads to investing more in financial instruments with lower volatilities, and diversifying the portfolio when prices are not favourable.
We also observe in \cref{fig:portfolio-OU1-PnL-noriskfree} that the agent's risk-sensitivity is translated to narrower terminal \PnL{} distributions and less variance in the \PnL{} during the episode.
Once again, it shows that our approach not only produces \emph{policies that reflect observed investor behaviours}, but also gives \emph{sensible \PnL{} distributions that agents can tune} to their risk tolerances.

% SIAM VERSION
\begin{figure}[htbp]
    \centering
    \begin{minipage}[b]{0.32\textwidth}
		\subfloat[$\alpha=0.05$]{\label{fig:portfolio-OU1-CVaR0.05}
		\includegraphics[width=0.97\textwidth]{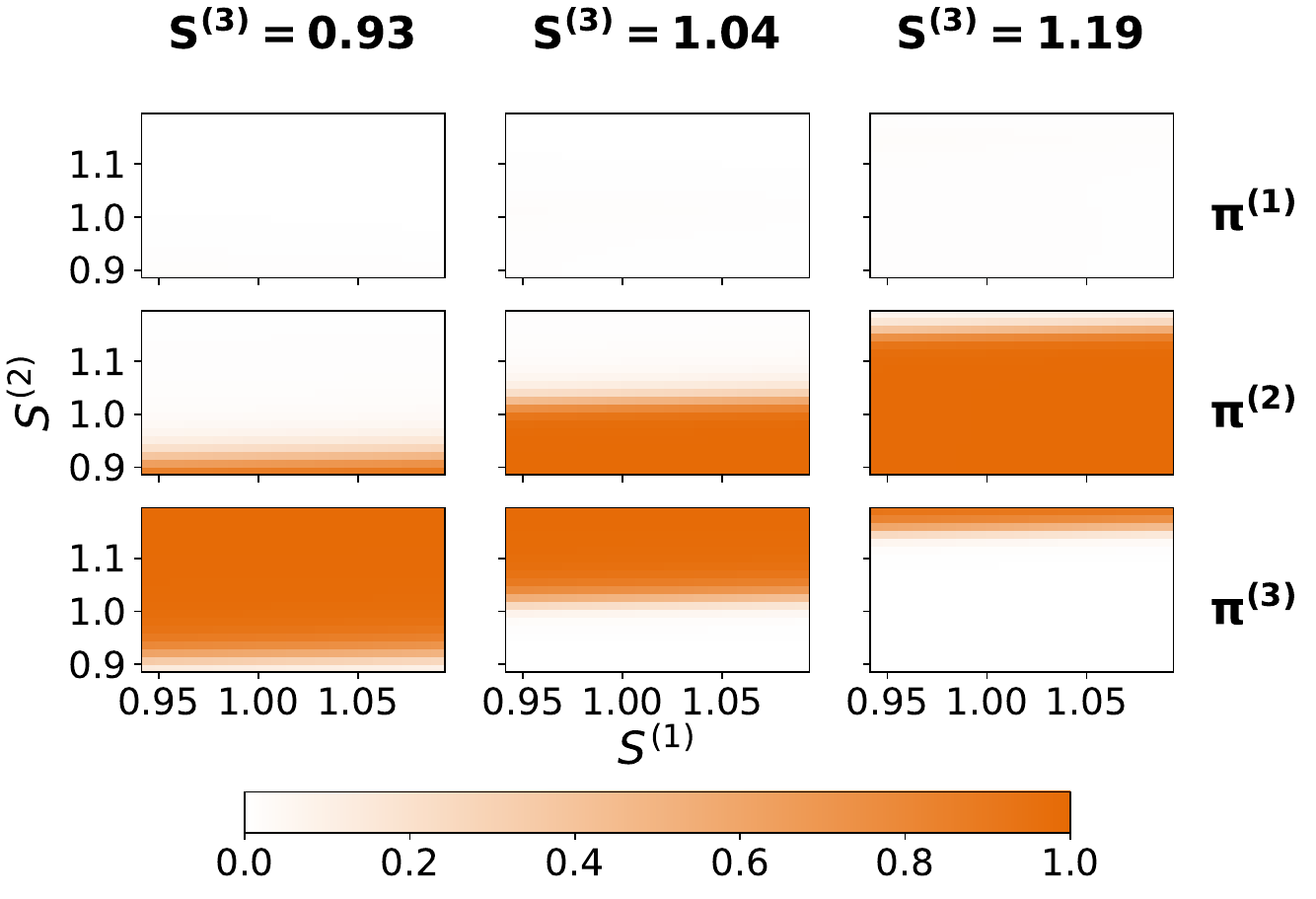}}
	\end{minipage}
	\begin{minipage}[b]{0.32\textwidth}
	    \subfloat[$\alpha=0.4$]{\label{fig:portfolio-OU1-CVaR0.4}
		\includegraphics[width=0.97\textwidth]{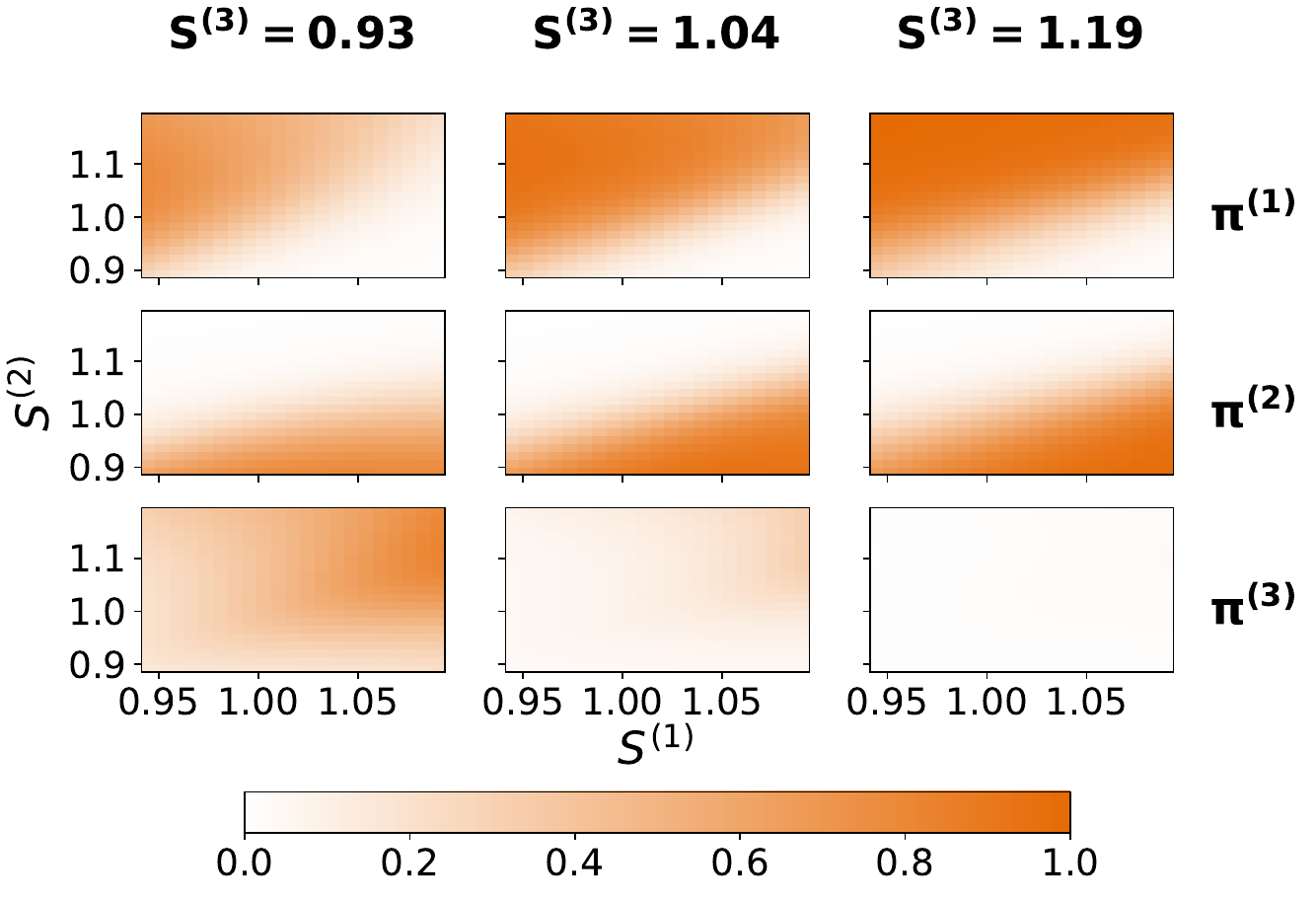}}
	\end{minipage}
	\begin{minipage}[b]{0.32\textwidth}
		\subfloat[$\alpha=0.7$]{\label{fig:portfolio-OU1-CVaR0.7}
		\includegraphics[width=0.97\textwidth]{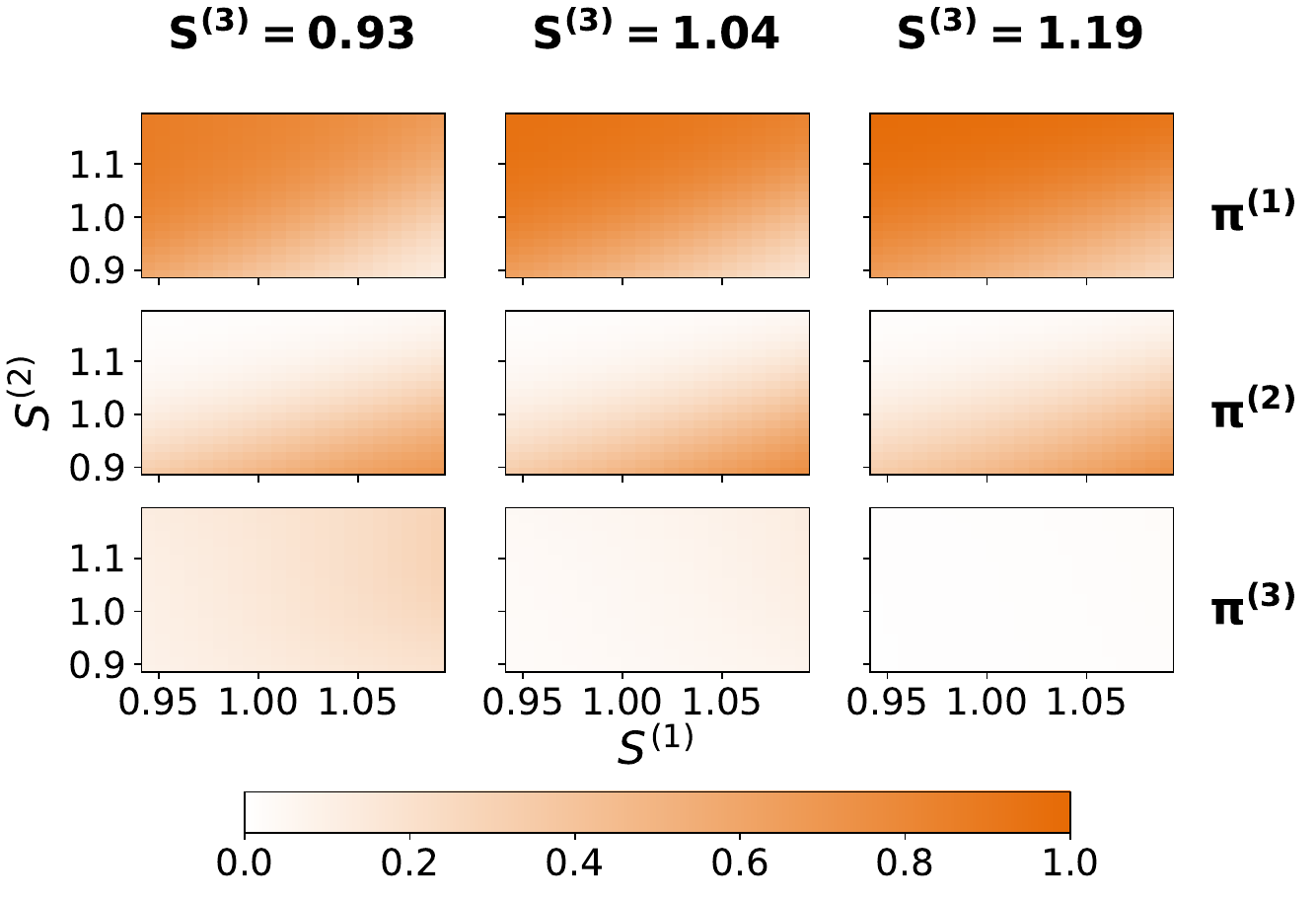}}
	\end{minipage}
	\caption{Learnt optimal policies at period $\timeidx=0$ wrt dynamic $\CVaR_{\alpha}$, where price dynamics satisfy the \SDE{}s in \cref{eq:price-meanrev} with $\kappa=2$. Drifts and volatilities are respectively $\mu=[0.03; 0.06; 0.09]$ and $\sigma=[0.06; 0.12; 0.18]$.}
	\label{fig:portfolio-OU1}
\end{figure}
% SIAM VERSION
\begin{figure}[htbp]
    \centering
	\begin{minipage}[b]{0.48\textwidth}
		\subfloat[Absence of a risk-free asset]{\label{fig:portfolio-OU1-PnL-noriskfree}
		\includegraphics[width=0.98\textwidth]{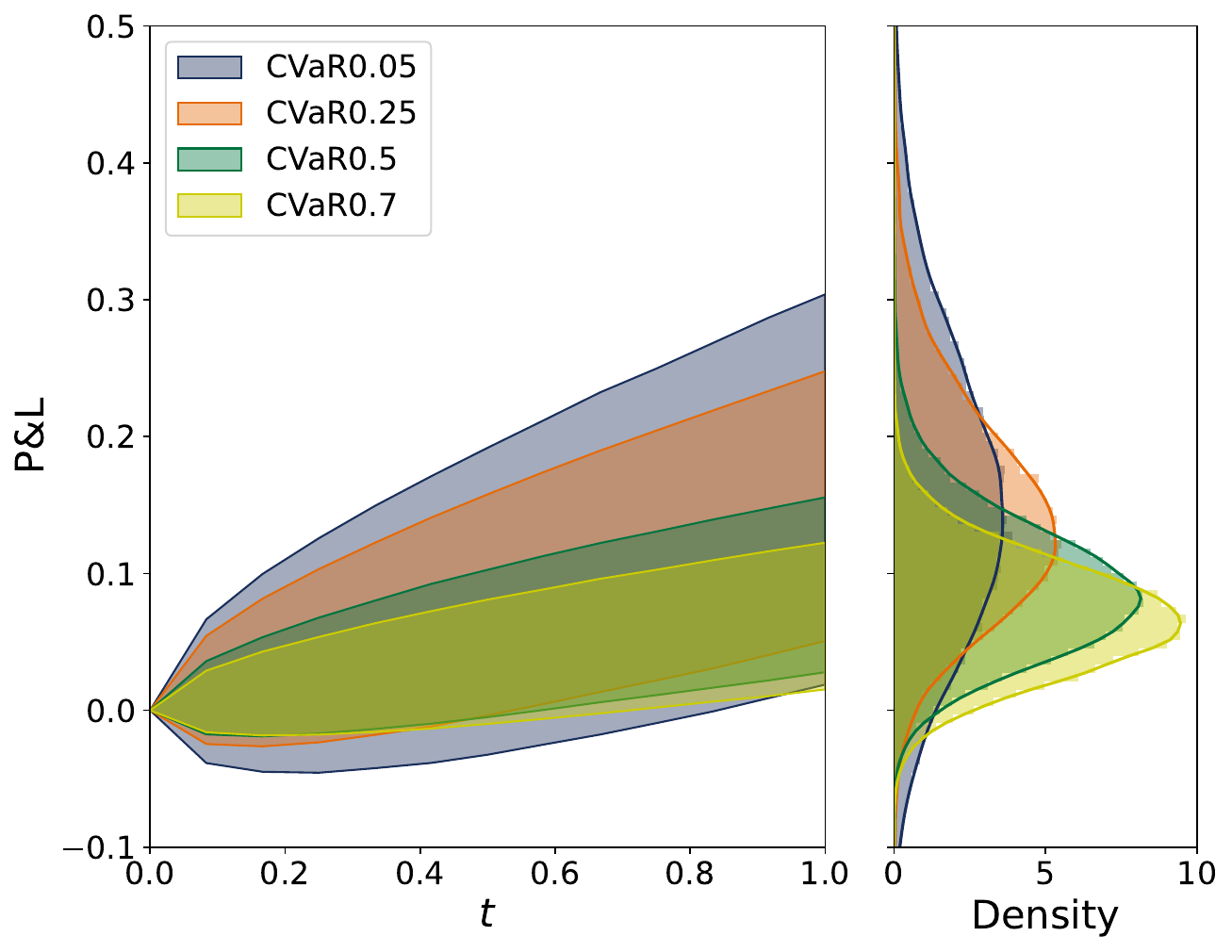}}
	\end{minipage}
	\hfill
	\begin{minipage}[b]{0.48\textwidth}
		\subfloat[Presence of a risk-free asset]{\label{fig:portfolio-OU1-PnL-riskfree}
		\includegraphics[width=0.98\textwidth]{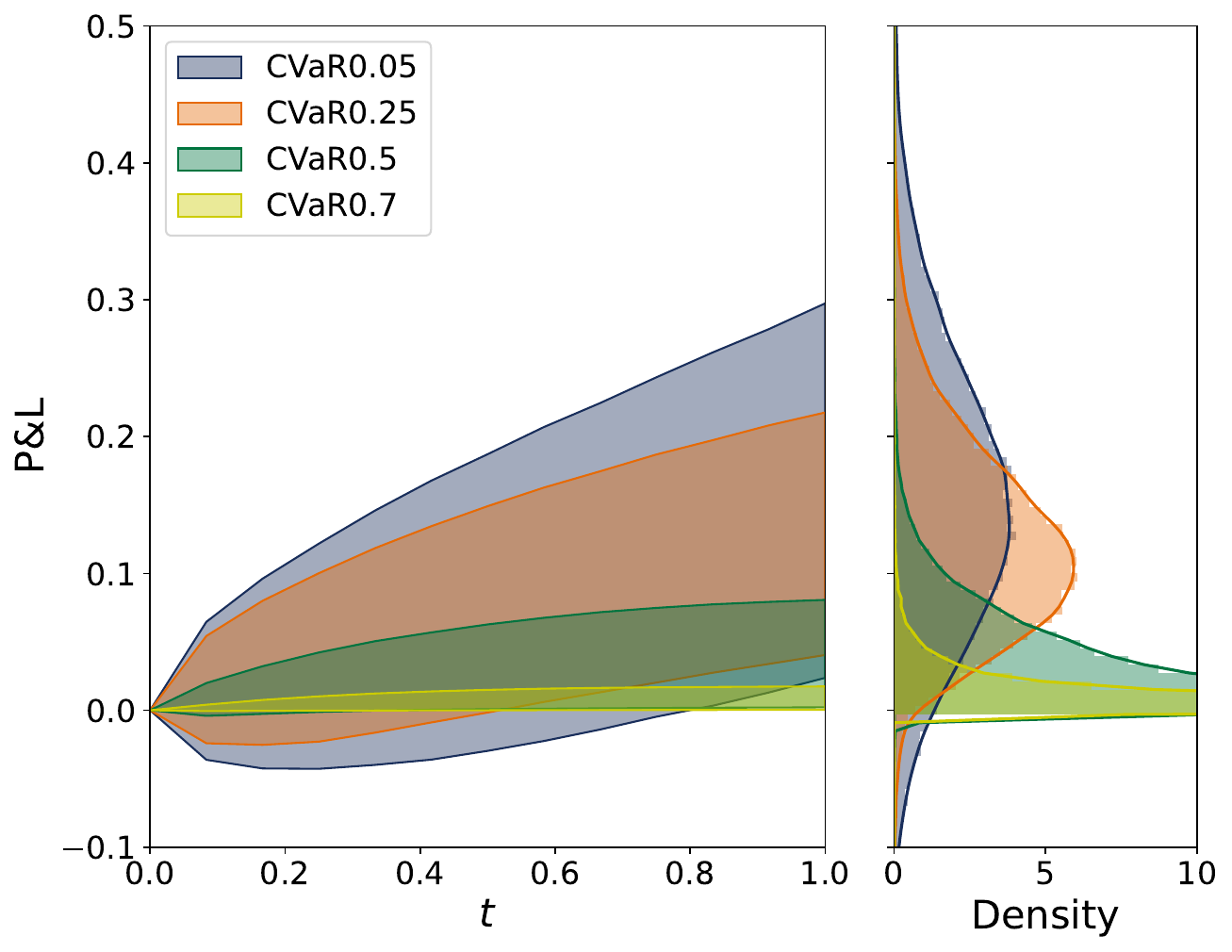}}
	\end{minipage}
	\caption{\PnL{} distributions when following optimal policies wrt dynamic $\CVaR_{\alpha}$, where price dynamics satisfy the \SDE{}s in \cref{eq:price-meanrev} with $\kappa=2$. Drifts and volatilities are respectively $\mu=[0.03; 0.06; 0.09]$ and $\sigma=[0.06; 0.12; 0.18]$.}
	\label{fig:portfolio-OU1-PnL}
\end{figure}

If one takes a closer look at \cref{fig:portfolio-OU1-PnL-noriskfree}, it seems as if increasing the risk-sensitivity does not mitigate the lower tail of the terminal \PnL{} distribution.
Indeed, this is due to the formulation of the problem -- the agent must allocate its entire wealth in a risky asset at each period, even when all prices are higher than their mean-reversion levels, for instance.
Suppose that we include in the market a risk-free asset, and, without loss of generality, that the interest rate is zero.
We train the risk-aware agents with this new market and illustrate the evolution of the \PnL{} throughout the episode in \cref{fig:portfolio-OU1-PnL-riskfree} for comparison purposes.
We observe that increasing the threshold of the dynamic \CVaR{} then indeed reduces both tails of the \PnL{} distributions, and similar $\alpha$'s lead to significantly different \PnL{} distributions when including a risk-free asset.
This scenario warns the reader that care must be taken in practice when formalising the problem and choosing the parameters of the dynamic risk.

Finally, we modify the price dynamics by considering a co-integrated system, as such models correspond more adequately to dynamics one would observe in practice.
To do so, we estimate a vector error correction model (VECM), essentially a co-integration model, using daily data from \new{$I=8$} different stocks listed on the NASDAQ exchange\footnote{We consider the following assets: AAL, AMZN, CCL, FB, IBM, INTC, LYFT, OXY.} between September 31, 2020 and December 31, 2021 inclusively.
The resulting estimated model, with two co-integration factors and no lag differences (both selected using the BIC criterion), is used as a simulation engine to generate price paths $\{ \price_{\timeidx}^{(i)} \}_{\timeidx}$, $i=1, \ldots, I$ during $\eplength=24$ periods over a one year horizon -- we illustrate some simulated sample paths in \cref{fig:portfolio-VECM-paths}.
We give a brief explanation of VECMs and the parameter estimates for this dataset in \cref{sec:appendix-vecm}.

% SIAM VERSION
\begin{figure}[htbp]
    \centering
    \begin{minipage}[b]{0.80\textwidth}
		\centering
		\includegraphics[width=0.98\textwidth]{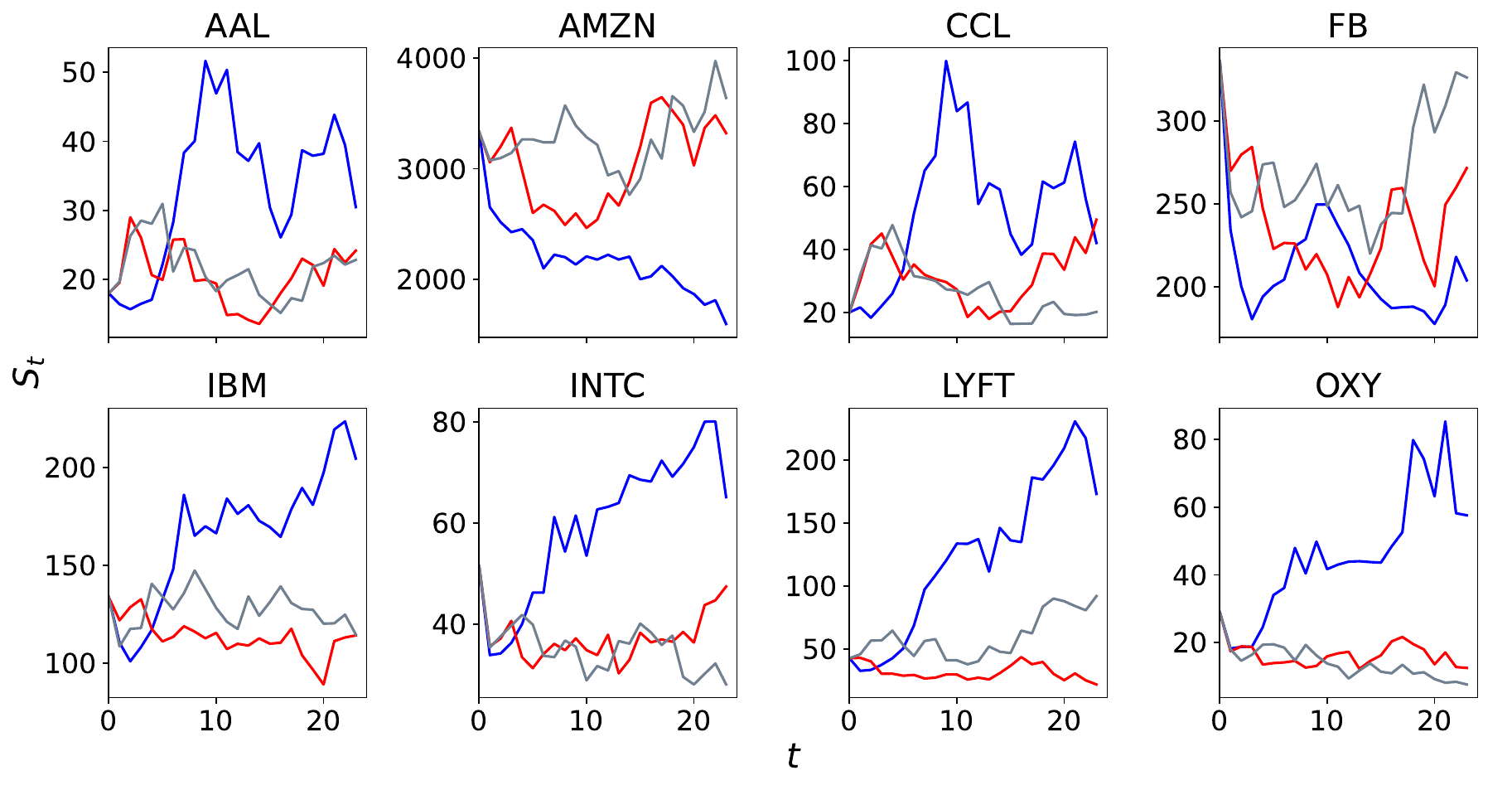}
	\end{minipage}
	\caption{Three simulated sample paths from the estimated VECM.}
	\label{fig:portfolio-VECM-paths}
\end{figure}

Similarly to the other experiments, \cref{fig:portfolio-VECM-PnL} shows the evolution of \PnL{} through time when following the learnt optimal policies.
With such a high-dimensional state space, it becomes difficult to visualise the learnt policy and fully understand the optimal strategy at every possible state of the environment.
Nonetheless, we compute the estimated investment proportions (over 50,000 episodes) in the different assets on average at every time period across all simulations used to generate the \PnL{} distributions, and show the proportions in \cref{fig:portfolio-VECM-CVaR_combined}.
When the threshold of the dynamic \CVaR{} is low, our agent seems to prefer more volatile assets that provide greater returns on average, i.e. AAL, CCL, LYFT and OXY based on the parameter estimates of the VECM for our dataset.
On the other hand, when increasing its risk-sensitivity, the agent allocates its wealth in assets that are more stable over that time frame, e.g. AMZN, FB, IBM and INTC.
In all the tested scenarios, we observe diverse portfolios with multiple nonzero investments, which emphasise that diversification is beneficial.

% SIAM VERSION
\begin{figure}[htbp]
    \centering
    \begin{minipage}[b]{0.85\textwidth}
		\subfloat[Learnt policies]{\label{fig:portfolio-VECM-CVaR_combined}
		\includegraphics[width=0.97\textwidth]{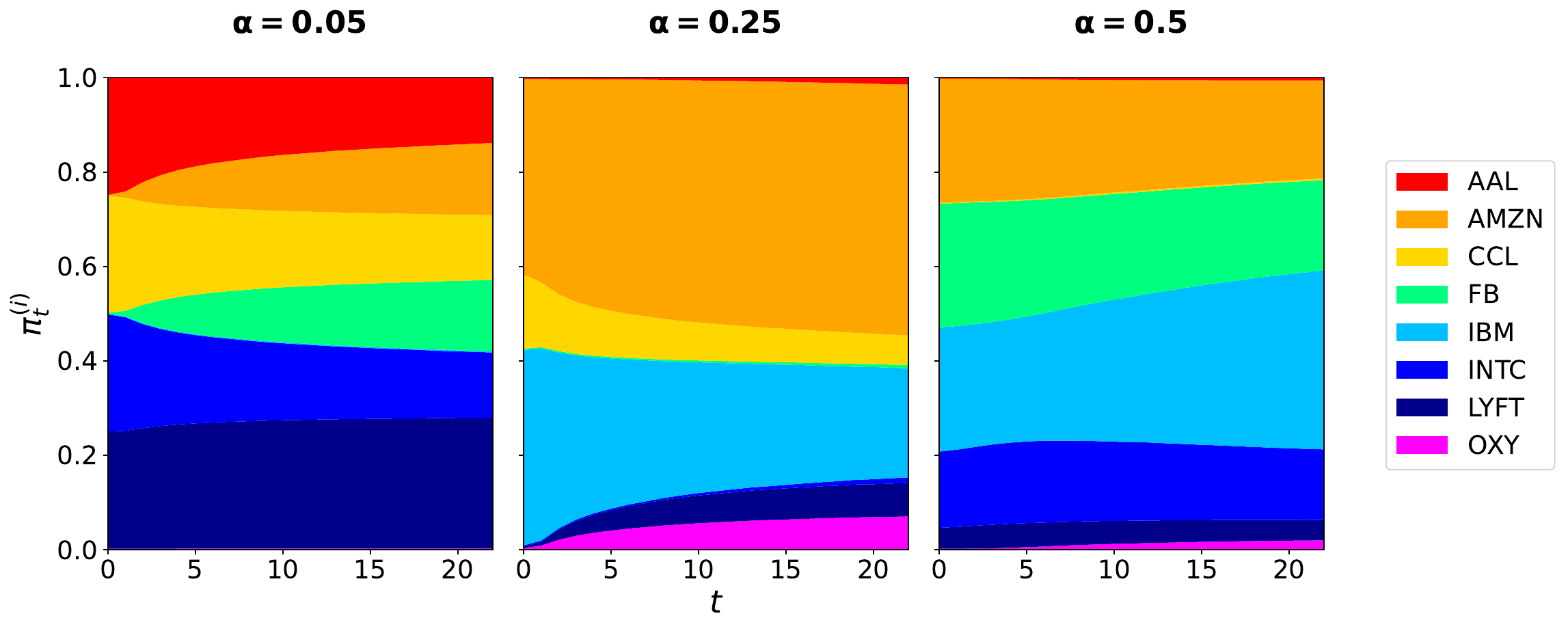}}
	\end{minipage}
	\hfill
	\begin{minipage}[b]{0.60\textwidth}
		\subfloat[Distribution of \PnL{}]{\label{fig:portfolio-VECM-PnL}
		\includegraphics[width=0.97\textwidth]{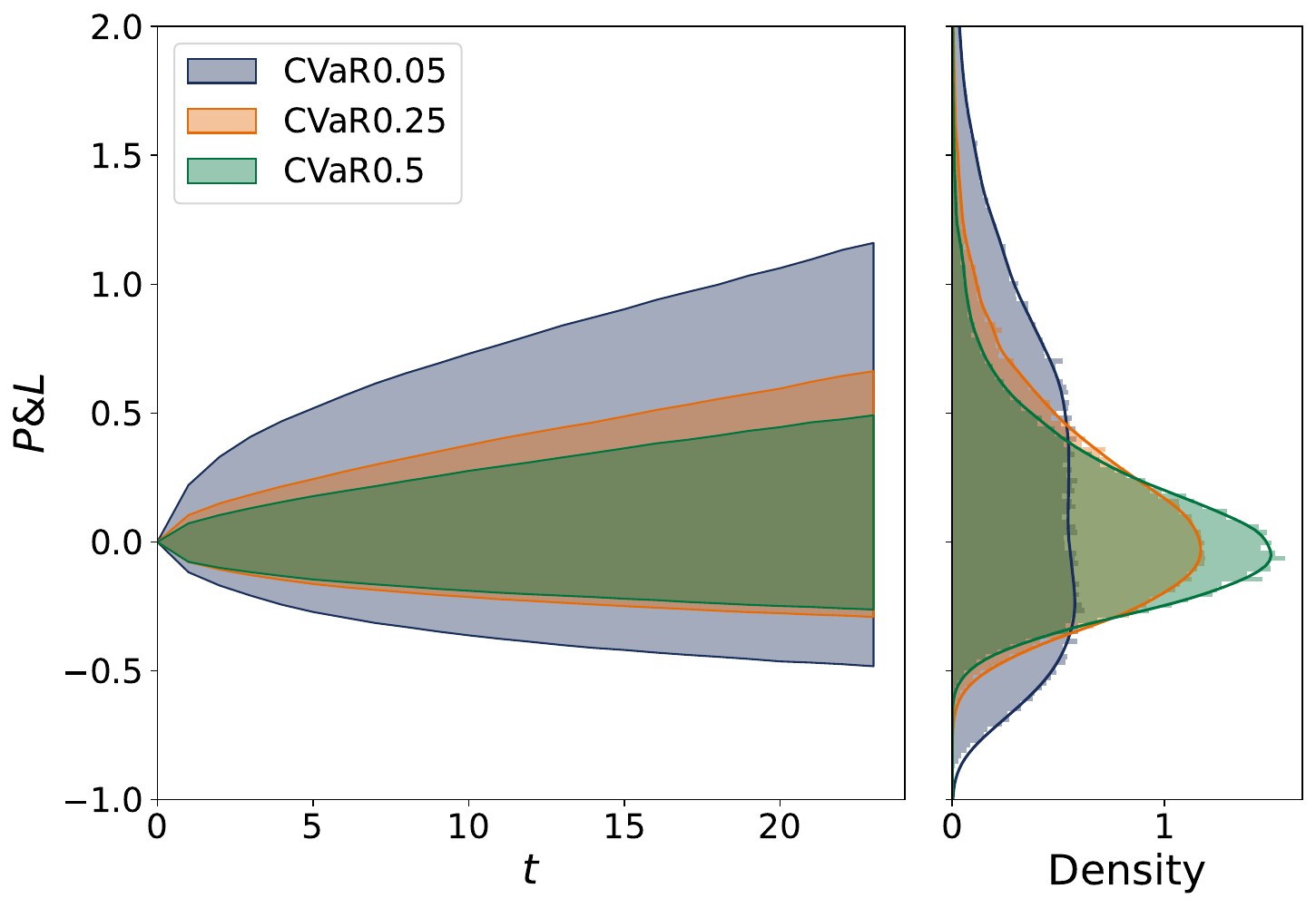}}
	\end{minipage}
	\caption{\PnL{} distributions when following optimal policies wrt dynamic $\CVaR_{\alpha}$, where price dynamics are simulated from the co-integrated system.}
	\label{fig:portfolio-VECM}
\end{figure}
%!TEX root = ../main.tex

% --------------------------------------------------------------
%                   Other examples
% --------------------------------------------------------------
\subsection{Other Examples}
\label{ssec:other-examples}

In this paper, we apply our proposed methodology on statistical arbitrage and portfolio optimisation problems.
Risk-awareness plays a central role in various mathematical finance applications, which clearly motivates the need to account for uncertainty in those situations.
Indeed, one may apply our algorithm in other contexts, to solve sequential decision making problems where the objective involves a dynamic risk measure of the costs induced by a certain policy -- e.g., in credit risk, visual recognition, video games, robot control and trading problems, among others.

Next, we describe some settings for which a risk-aware \RL{} algorithm is relevant or more appropriate than the usual risk-neutral approaches.
This is by no means exhaustive, but rather an illustration of \RL{} problems that could benefit from risk-sensitive behaviours.

An important problem in mathematical finance consists of hedging and pricing contingent claims.
Suppose we have a financial market of multiple risky securities and risk-free bonds following some asset dynamics, and all assets can be traded over a time horizon with trading frictions.
The agent aims to determine either: (i) the price it should charge at each period for selling a certain (path-dependent) contingent claim; or (ii) the strategy it should adopt for hedging a certain (path-dependent) contingent claim.
The observable features that dictates the agent's decisions may include, but are not restricted to, the market volatility, asset price history, or inventories of the different assets held by the agent.
It is obvious why applying risk-aware \RL{} methods on this class of problems is sensible -- indeed, the agent wishes to protect itself from high-cost outcomes in the market.
Our proposed approach also allows the agent to tweak the threshold of the dynamic \CVaR{} to fulfil its own tolerance to risk.
We note that the setup described above is applicable to other sequential decision making problems in finance, e.g. in market making, entry-exit problems, goal-based portfolio management, and placement of cryptocurrency limit orders.

Uncertainty in the environment also affects performances of agents in domains outside of finance, for instance in robot control.
Consider an autonomous rover exploring an unknown terrain.
Its objective is to reach a final destination while avoiding different obstacles along the way that could cause substantial damages.
At each period, the rover evaluates its position and environment conditions and chooses to move in a certain direction, but its movements are altered by meteorological/topographical events, e.g. facing strong winds, slipping on sand, or crossing shallow water.
The randomness may be modeled through the randomised policy with an atypical distribution and/or directly within the transition probability distributions.
The costs observed by the rover may be influenced by the travelled distance, the consumption of energy and the severity of any impairments.
In this scenario, a risk-neutral rover would typically take the most direct route to reach the destination as quickly as possible, ignoring the randomness in its movements and potentially causing expensive damages.
Instead, a more conservative behaviour would account for uncertainty by keeping distance from obstacles to avoid inadvertent collisions, even if it increases the costs in terms of travelled distance.
A similar setting may be used in video games applications, where the agent tries to accomplish different missions.

% generalization to spectral risk
\begingroup
\allowdisplaybreaks
%!TEX root = ../main.tex

% --------------------------------------------------------------
%                   Estimate Value function
% --------------------------------------------------------------
\section{Generalisation to Spectral Risk Measures}
\label{sec:generalization-spectral}

Our algorithm is described in detail for the dynamic \CVaR{}; however, it is interesting and important to consider generalising the framework to dynamic risk measures which are $k$-elicitable.
This section presents the modifications to our approach to incorporate any dynamic spectral risk measure having a spectrum with finite support.

In what follows, we consider problems of the form \cref{eq:optim-problem1}, and assume that the one-step conditional risk measures are \new{one-step conditional} spectral risk measures $\riskmeas^{\spectrum}$, where $\spectrum$ is given by
\begin{equation*}
    \spectrum = \sum_{m=1}^{k-1} p_m \delta_{\alpha_m},
\end{equation*}
with $p_{m} \in (0,1]$, $\sum_{m=1}^{k-1} p_m = 1$, and $0 < \alpha_{1} < \alpha_{2} < \cdots < \alpha_{k-1} < 1$.
In this setting, the value function in \cref{eq:value-func0-1,eq:value-func0-2} is written as 
\begin{subequations}
\begin{align}
	\newmath{\valuefunc_{\eplength}(\state;\policyparams)} &=
    \sum_{m=1}^{k-1} p_m \; \newmath{\CVaR_{\alpha_m} \Big(\costfunc_{\eplength}^{\policyparams} \Bigm|\state_{\eplength}=\state \Big)}, \quad \text{and}
    \label{eq:lastV-spectral} 
    \\
	\valuefunc_{\timeidx}(\state;\policyparams) &=
    \sum_{m=1}^{k-1} p_m \; \CVaR_{\alpha_m} \Big(\costfunc_{\timeidx}^{\policyparams} +
    \valuefunc_{\timeidx+1}(\state_{\timeidx+1}^{\policyparams};\policyparams)
    \Bigm|\state_{\timeidx}=\state \Big). \label{eq:otherV-spectral}
\end{align}
\end{subequations}

Similarly to the \CVaR{} case, we define an \ANN{} $\policy^{\policyparams}: \statespace \rightarrow \Pp(\actionspace)$ for the agent's policy.
We decompose the value function when in state $\state\in \statespace$ at period $\timeidx \in \periodspace$ and behaving according to $\policy^{\policyparams}$ as
\begin{equation}
    \valuefunc_{\timeidx}(\state;\policyparams) = H_{k,\timeidx}(\state;\policyparams) + \sum_{m=1}^{k-1} p_m \sum_{l=1}^{m} H_{l,\timeidx}(\state;\policyparams), \label{eq:V-spectral}
\end{equation}
where we define for $m=1,\ldots,k-1$
\begin{subequations}
\begin{align}
    \begin{split}
    H_{m,\timeidx}(\state;\policyparams) &:=
    \VaR_{\alpha_m} \Big(\costfunc_{\timeidx}^{\policyparams} +
    \valuefunc_{\timeidx+1}(\state_{\timeidx+1}^{\policyparams};\policyparams)
    \Bigm|\state_{\timeidx}=\state \Big) - \sum_{l=1}^{m-1} H_{l,\timeidx}(\state;\policyparams), \quad \text{and}
    \label{eq:Hm-spectral}
    \end{split}
    \\
    \begin{split}
    H_{k,\timeidx}(\state;\policyparams) &:= \riskmeas^{\spectrum} \Big(\costfunc_{\timeidx}^{\policyparams} +
    \valuefunc_{\timeidx+1}(\state_{\timeidx+1}^{\policyparams};\policyparams)
    \Bigm|\state_{\timeidx}=\state \Big) - \sum_{m=1}^{k-1} p_m \sum_{l=1}^{m} H_{l,\timeidx}(\state;\policyparams).
    \label{eq:Hk-spectral}
    \end{split}
\end{align}
\end{subequations}
We further denote the estimation of the value function in \cref{eq:V-spectral} by
\begin{equation}
    \valuefunc_{\timeidx}^{\valueparams}(\state;\policyparams) = H_{k,\timeidx}^{\psi_k}(\state;\policyparams) + \sum_{m=1}^{k-1} p_m \sum_{l=1}^{m} H_{l,\timeidx}^{\psi_l}(\state;\policyparams),
\end{equation}
where $\valueparams=\{ \psi_1,\ldots,\psi_k\}$ and $H_{1,\timeidx}^{\psi_1}(\cdot; \policyparams): \statespace \rightarrow \Reals$, $H_{2,\timeidx}^{\psi_2}(\cdot; \policyparams),\ldots,H_{k,\timeidx}^{\psi_k}(\cdot; \policyparams): \statespace \rightarrow \PosReals$ estimate respectively $H_{1,\timeidx},H_{2,\timeidx},\ldots,H_{k,\timeidx}$.
Again, note that we characterise the difference between \VaR{}s to ensure $\VaR_{\alpha_{m}}$ is greater than $\VaR_{\alpha_{m-1}}$ for every $m=2,\ldots,k-1$, and we constrain the \ANN{} $H_{k,\timeidx}^{\psi_k}$ to nonnegative outputs due to the inequality
\begin{equation*}
    \riskmeas^{\spectrum}(\rv) = \sum_{m=1}^{k-1} p_m \CVaR_{\alpha_m}(\rv) \geq \sum_{m=1}^{k-1} p_m \VaR_{\alpha_m}(\rv).
\end{equation*}

To fix ideas, suppose we are interested in the \emph{range value-at-risk} \cite{cont2010robustness}, where the spectrum is given by
\begin{equation*}
    \spectrum = \frac{1-\alpha}{\beta-\alpha} \delta_{\alpha} + \frac{\beta-1}{\beta-\alpha} \delta_{\beta}, \quad 0 \leq \alpha < \beta \leq 1.
\end{equation*}
We then have $H_{1,\timeidx}^{\psi_1}$ to approximate $\VaR_{\alpha}$, $H_{2,\timeidx}^{\psi_2}$ for the difference between $\VaR_{\beta}$ and $\VaR_{\alpha}$, and $H_{3,\timeidx}^{\psi_3}$ for the difference between the value function and $(1-\alpha)/(\beta-\alpha) \VaR_{\alpha} + (\beta-1)/(\beta-\alpha) \VaR_{\beta}$.

\subsection{Value Function Estimation}

The estimation of the value function described in \cref{ssec:estimate-V} can easily be extended to any $k$-elicitable spectral risk measure by utilising \cref{thm:elicitable-spectral} instead of \cref{thm:elicitable-cvar}.
Similarly to \cref{eq:loss-critic}, the loss function $\Ll^{\valueparams}$ we minimise is
{\small{
\begin{equation}
\begin{split}
    & \newmath{\sum_{\timeidx \in \periodspace \setminus \{\eplength\}}} \sum_{\batchidx=1}^{\Nbatchs} \Bigg[ \score \bigg(
	\underbrace{
	\vphantom{\sum_{m=1}^{k}} H_{1,\timeidx}^{\psi_1} \Big( \state_{\timeidx}^{(\batchidx)}; \policyparams \Big)
	}_{\VaR_{\alpha_{1}}(\cdot | \state_{\timeidx}^{(\batchidx)})}
	; \cdots; \
	\underbrace{
	\sum_{m=1}^{k-1} H_{m,\timeidx}^{\psi_m} \Big( \state_{\timeidx}^{(\batchidx)}; \policyparams \Big)
	}_{\VaR_{\alpha_{k-1}}(\cdot | \state_{\timeidx}^{(\batchidx)})}
	; \
	\underbrace{
	\vphantom{ \sum_{m=1}^{k} }
	\valuefunc_{\timeidx}^{\valueparams} \Big( \state_{\timeidx}^{(\batchidx)}; \policyparams \Big)
	}_{\riskmeas^{\spectrum}(\cdot | \state_{\timeidx}^{(\batchidx)})}
	; \
	\underbrace{
	\vphantom{ \sum_{m=1}^{k} }
	\costfunc_{\timeidx}^{(\batchidx)} + 
	\valuefunc_{\timeidx+1}^{\tilde{\valueparams}} \Big( \state_{\timeidx+1}^{(\batchidx)}; \policyparams \Big)
	}_{\text{running risk-to-go}}
	\bigg)
	\Bigg] \\
	&\qquad + \newmath{\sum_{\batchidx=1}^{\Nbatchs} \Bigg[ \score \bigg(
	\underbrace{
	\vphantom{\sum_{m=1}^{k}} H_{1,\eplength}^{\psi_1} \Big( \state_{\eplength}^{(\batchidx)}; \policyparams \Big)
	}_{\VaR_{\alpha_{1}}(\cdot | \state_{\eplength}^{(\batchidx)})}
	; \cdots; \
	\underbrace{
	\sum_{m=1}^{k-1} H_{m,\eplength}^{\psi_m} \Big( \state_{\eplength}^{(\batchidx)}; \policyparams \Big)
	}_{\VaR_{\alpha_{k-1}}(\cdot | \state_{\eplength}^{(\batchidx)})}
	; \
	\underbrace{
	\vphantom{ \sum_{m=1}^{k} }
	\valuefunc_{\eplength}^{\valueparams} \Big( \state_{\eplength}^{(\batchidx)}; \policyparams \Big)
	}_{\riskmeas^{\spectrum}(\cdot | \state_{\eplength}^{(\batchidx)})}
	; \
	\underbrace{
	\vphantom{ \sum_{m=1}^{k} }
	\costfunc_{\eplength}^{(\batchidx)}
	}_{\text{risk-to-go}}
	\bigg)
	\Bigg]},
\end{split} \tag{L3} \label{eq:loss-func-dynamic-spectral}
\end{equation}
}}where $\score$ is given in \cref{eq:scoring-dynamic-spectral} with some characterisation $G_1,\ldots,G_{k-1}$ satisfying \cref{eq:scoring-dynamic-spectral-condition}.
For specific choices of strictly consistent scoring functions for the range \VaR{}, we refer the reader to \cite{fissler2021elicitability}.

The following result states that we can approximate the value function to an arbitrary accuracy using our framework devised here.
\new{\begin{theorem}
    Suppose $\policy^{\policyparams}$ is a fixed policy, its value function $\valuefunc_{\timeidx}(\state;\policyparams)$ is given in \cref{eq:lastV-spectral,eq:otherV-spectral}, and the decomposition in terms of $H_{1,\timeidx}^{\psi_1},\ldots,H_{k,\timeidx}^{\psi_k}$ is given in \cref{eq:Hm-spectral,eq:Hk-spectral}.
    Then for any $\varepsilon^{*}_1,\ldots,\varepsilon^{*}_{k} > 0$, there exist \ANN{}s denoted $H_{1,\timeidx}^{\psi_1},\ldots,H_{k,\timeidx}^{\psi_k}$ such that for any $\timeidx \in \periodspace$, we have
    \begin{equation}
        \esssup_{\state \in \statespace} \; \Big\Vert H_{m,\eplength}(\state;\policyparams) - H_{m,\eplength}^{\psi_{m}}(\state;\policyparams) \Big\Vert < \varepsilon^{*}_{m}, \quad  \forall m = 1,\ldots,k.
    \end{equation}
    \label{thm:univ-approx-H-spectral}
\end{theorem}}

\begin{proof}
The proof is similar to the proof of \cref{thm:univ-approx-H}.
First, note that spectral risk measures satisfy the monotonicity and translation invariance properties.
Therefore, using the same argument as the one in the proof of \cref{thm:univ-approx-H}, any linear combination of $\VaR_{\alpha_m}$ and $\riskmeas^{\spectrum}$ is absolutely continuous a.s. for any threshold $\alpha_m \in (0,1)$, which allows us to use the universal approximation theorem result.

We show that \ANN{}s $H_{1,\timeidx}^{\psi_1},\ldots,H_{k,\timeidx}^{\psi_k}$ approximate the mappings $H_{1,\timeidx},\ldots,H_{k,\timeidx}$ by induction.
For the last period \new{$\eplength$}, using the universal approximation theorem on \cref{eq:Hm-spectral,eq:Hk-spectral}, we have that for any $\newmath{\varepsilon_{m,\eplength}} > 0$, there exists \ANN{}s \new{$H_{m,\eplength}^{\psi_{m,\eplength}}$} such that
\begin{align}
    \esssup_{\state \in \statespace} \; \Big\Vert \newmath{ H_{m,\eplength}(\state;\policyparams) - H_{m,\eplength}^{\psi_{m,\eplength}}(\state;\policyparams) } \Big\Vert &< \newmath{\varepsilon_{m,\eplength}}, \quad  \forall m = 1,\ldots,k. \label{eq:proof-approx-lastHm}
\end{align}

For the induction step, we show that $H_{1,\timeidx}^{\psi_1},\ldots,H_{k,\timeidx}^{\psi_k}$ approximate $H_{1,\timeidx},\ldots,H_{k,\timeidx}$ at the period $\timeidx$ as long as they adequately approximate them at the periods $\timeidx+1$ up to \new{$\eplength$} inclusively.
Assume that for any $\varepsilon_{1,\tau},\ldots,\varepsilon_{k,\tau} > 0$, there exist \ANN{}s $H_{1,\tau}^{\psi_{1,\tau}},\ldots,H_{k,\tau}^{\psi_{k,\tau}}$ such that
\begin{align}
    \esssup_{\state \in \statespace} \; \Big\Vert H_{m,\tau}(\state;\policyparams) - H_{m,\tau}^{\psi_{m,\tau}}(\state;\policyparams) \Big\Vert &< \varepsilon_{m,\tau}, \quad  \forall m = 1,\ldots,k, \label{eq:proof-approx-lastHm-induction}
\end{align}
for all \new{$\tau\in\{\timeidx+1,\ldots,\eplength\}$}.
To avoid a cumbersome notation, we define the following random variables as
\begin{subequations}
\begin{align*}
    \rvdum_{\timeidx} &:= \costfunc_{\timeidx}^{\policyparams} +
    H_{k,\timeidx+1}(\state_{\timeidx+1}^{\policyparams};\policyparams) + \sum_{m'=1}^{k-1} p_{m'} \sum_{l=1}^{m'} H_{l,\timeidx+1}(\state_{\timeidx+1}^{\policyparams};\policyparams), \quad \text{and}
    \\
    \rvdum^{\psi}_{\timeidx} &:= \costfunc_{\timeidx}^{\policyparams} +
    H_{k,\timeidx+1}^{\psi_{k,\timeidx+1}}(\state_{\timeidx+1}^{\policyparams};\policyparams) + \sum_{m'=1}^{k-1} p_{m'} \sum_{l=1}^{m'} H_{l,\timeidx+1}^{\psi_{l,\timeidx+1}}(\state_{\timeidx+1}^{\policyparams};\policyparams).
\end{align*}
\end{subequations}
We then obtain for any $m = 1, \ldots, k-1$ that
\begin{align}
    &\esssup_{\state \in \statespace} \; \Big\Vert H_{m,\timeidx}(\state;\policyparams) - H_{m,\timeidx}^{\psi_{m,\timeidx}}(\state;\policyparams) \Big\Vert \nonumber\\
    \text{\scriptsize{[\cref{eq:Hm-spectral}]}}&\quad = \esssup_{\state \in \statespace} \; \Bigg\Vert \VaR_{\alpha_m} \Big(\costfunc_{\timeidx}^{\policyparams} +
    \valuefunc_{\timeidx+1}(\state_{\timeidx+1}^{\policyparams};\policyparams) \Bigm|\state_{\timeidx}=\state \Big)
    - \sum_{l=1}^{m-1} H_{l,\timeidx}(\state;\policyparams)
    - H_{m,\timeidx}^{\psi_{m,\timeidx}}(\state;\policyparams)\Bigg\Vert \nonumber\\
    \text{\scriptsize{[\cref{eq:Hm-spectral}]}}&\quad = \esssup_{\state \in \statespace} \; \Bigg\Vert \VaR_{\alpha_m} \Big(\costfunc_{\timeidx}^{\policyparams} +
    \valuefunc_{\timeidx+1}(\state_{\timeidx+1}^{\policyparams};\policyparams) \Bigm|\state_{\timeidx}=\state \Big) \nonumber\\
    &\qquad\qquad\qquad - \VaR_{\alpha_{m-1}} \Big(\costfunc_{\timeidx}^{\policyparams} +
    \valuefunc_{\timeidx+1}(\state_{\timeidx+1}^{\policyparams};\policyparams) \Bigm|\state_{\timeidx}=\state \Big) \nonumber\\
    &\qquad\qquad\qquad - H_{m,\timeidx}^{\psi_{m,\timeidx}}(\state;\policyparams)\Bigg\Vert \nonumber\\
    \text{\scriptsize{[\cref{eq:V-spectral}]}}&\quad = \esssup_{\state \in \statespace} \; \Big\Vert \VaR_{\alpha_m} \Big(\rvdum_{\timeidx} \Bigm|\state_{\timeidx}=\state \Big) - \VaR_{\alpha_{m-1}} \Big(\rvdum_{\timeidx} \Bigm|\state_{\timeidx}=\state \Big) - H_{m,\timeidx}^{\psi_{m,\timeidx}}(\state;\policyparams) \Big\Vert \nonumber\\
    \text{\scriptsize{[\triangleineq{}]}}&\quad \leq \esssup_{\state \in \statespace} \; \Big\Vert \VaR_{\alpha_m} \Big(\rvdum^{\psi}_{\timeidx} \Bigm|\state_{\timeidx}=\state \Big) - \VaR_{\alpha_{m-1}} \Big(\rvdum^{\psi}_{\timeidx} \Bigm|\state_{\timeidx}=\state \Big) - H_{m,\timeidx}^{\psi_{m,\timeidx}}(\state;\policyparams) \Big\Vert \nonumber\\
    &\qquad\quad + \esssup_{\state \in \statespace} \; \Big\Vert \VaR_{\alpha_m} \Big(\rvdum_{\timeidx} \Bigm|\state_{\timeidx}=\state \Big) - \VaR_{\alpha_{m}} \Big(\rvdum^{\psi}_{\timeidx} \Bigm|\state_{\timeidx}=\state \Big) \Big\Vert \nonumber\\
    &\qquad\quad + \esssup_{\state \in \statespace} \; \Big\Vert \VaR_{\alpha_{m-1}} \Big(\rvdum_{\timeidx} \Bigm|\state_{\timeidx}=\state \Big) - \VaR_{\alpha_{m-1}} \Big(\rvdum^{\psi}_{\timeidx} \Bigm|\state_{\timeidx}=\state \Big) \Big\Vert \nonumber\\
    \text{\scriptsize{[\cref{eq:absolutely-continuous}]}}&\quad \leq \esssup_{\state \in \statespace} \; \Big\Vert \VaR_{\alpha_m} \Big(\rvdum^{\psi}_{\timeidx} \Bigm|\state_{\timeidx}=\state \Big) - \VaR_{\alpha_{m-1}} \Big(\rvdum^{\psi}_{\timeidx} \Bigm|\state_{\timeidx}=\state \Big) - H_{m,\timeidx}^{\psi_{m,\timeidx}}(\state;\policyparams) \Big\Vert \nonumber\\
    &\qquad\quad + 2 \esssup_{\state \in \statespace} \; \Big\Vert \rvdum_{\timeidx} - \rvdum^{\psi}_{\timeidx} \Big\Vert \nonumber\\
    \text{\scriptsize{[\triangleineq{}]}}&\quad \leq \esssup_{\state \in \statespace} \; \Big\Vert \VaR_{\alpha_m} \Big(\rvdum^{\psi}_{\timeidx} \Bigm|\state_{\timeidx}=\state \Big) - \VaR_{\alpha_{m-1}} \Big(\rvdum^{\psi}_{\timeidx} \Bigm|\state_{\timeidx}=\state \Big) - H_{m,\timeidx}^{\psi_{m,\timeidx}}(\state;\policyparams) \Big\Vert \nonumber\\
    &\qquad\quad + 2 \esssup_{\state \in \statespace} \; \Big\Vert H_{k,\timeidx+1}(\state;\policyparams) - H_{k,\timeidx+1}^{\psi_{k,\timeidx+1}}(\state;\policyparams) \Big\Vert \nonumber\\
    &\qquad\quad + 2 \sum_{m'=1}^{k-1} p_{m'} \sum_{l=1}^{m'} \esssup_{\state \in \statespace} \; \Big\Vert H_{l,\timeidx+1}(\state;\policyparams) - H_{l,\timeidx+1}^{\psi_{l,\timeidx+1}}(\state;\policyparams)\Big\Vert \nonumber\\
    \begin{split}
    \text{\scriptsize{[\cref{eq:proof-approx-lastHm-induction}]}}&\quad < \esssup_{\state \in \statespace} \; \Big\Vert \VaR_{\alpha_m} \Big(\rvdum^{\psi}_{\timeidx} \Bigm|\state_{\timeidx}=\state \Big) - \VaR_{\alpha_{m-1}} \Big(\rvdum^{\psi}_{\timeidx} \Bigm|\state_{\timeidx}=\state \Big) - H_{m,\timeidx}^{\psi_{m,\timeidx}}(\state;\policyparams) \Big\Vert \\
    &\qquad\quad + 2 \left( \varepsilon_{k,\timeidx+1} + \sum_{m'=1}^{k-1} p_{m'} \sum_{l=1}^{m'} \varepsilon_{l,\timeidx+1} \right).
    \label{eq:proof-spectral-induction-Hm}
    \end{split}
\end{align}
Note that in \cref{eq:proof-spectral-induction-Hm}, given the recursion form of \cref{eq:Hm-spectral}, the last accuracy term appears only once with $m=1$.
Similarly, we show for the function $H_{k,\timeidx}$ that
\begin{align}
    &\esssup_{\state \in \statespace} \; \Big\Vert H_{k,\timeidx}(\state;\policyparams) - H_{k,\timeidx}^{\psi_{k,\timeidx}}(\state;\policyparams) \Big\Vert \nonumber\\
    \text{\scriptsize{[\cref{eq:Hk-spectral}]}}&\quad = \esssup_{\state \in \statespace} \; \Bigg\Vert \riskmeas^{\spectrum} \Big(\costfunc_{\timeidx}^{\policyparams} + \valuefunc_{\timeidx+1}(\state_{\timeidx+1}^{\policyparams};\policyparams)
    \Bigm|\state_{\timeidx}=\state \Big) \nonumber\\
    &\qquad\qquad\qquad - \sum_{m=1}^{k-1} p_m \sum_{l=1}^{m} H_{l,\timeidx}(\state;\policyparams)
    - H_{k,\timeidx}^{\psi_{k,\timeidx}}(\state;\policyparams) \Bigg\Vert \nonumber\\
    \text{\scriptsize{[\cref{eq:V-spectral}, \cref{eq:Hm-spectral}]}}&\quad = \esssup_{\state \in \statespace} \; \Bigg\Vert \riskmeas^{\spectrum} \Big(\rvdum_{\timeidx} \Bigm|\state_{\timeidx}=\state \Big) - \sum_{m=1}^{k-1} p_m \; \VaR_{\alpha_{m}} \Big(\rvdum_{\timeidx} \Bigm|\state_{\timeidx}=\state \Big) - H_{k,\timeidx}^{\psi_{k,\timeidx}}(\state;\policyparams) \Bigg\Vert \nonumber\\
    \text{\scriptsize{[\triangleineq{}, \cref{eq:absolutely-continuous}]}}&\quad \leq \esssup_{\state \in \statespace} \; \Bigg\Vert \riskmeas^{\spectrum} \Big(\rvdum^{\psi}_{\timeidx} \Bigm|\state_{\timeidx}=\state \Big) - \sum_{m=1}^{k-1} p_m \; \VaR_{\alpha_{m}} \Big(\rvdum^{\psi}_{\timeidx} \Bigm|\state_{\timeidx}=\state \Big) - H_{k,\timeidx}^{\psi_{k,\timeidx}}(\state;\policyparams) \Bigg\Vert \nonumber\\
    &\qquad\quad + \left( 1 + \sum_{m=1}^{k-1} p_m \right) \esssup_{\state \in \statespace} \; \Big\Vert \rvdum_{\timeidx} - \rvdum^{\psi}_{\timeidx} \Big\Vert \nonumber\\
    \text{\scriptsize{[sum of $p_m$'s]}}&\quad \leq \esssup_{\state \in \statespace} \; \Bigg\Vert \riskmeas^{\spectrum} \Big(\rvdum^{\psi}_{\timeidx} \Bigm|\state_{\timeidx}=\state \Big) - \sum_{m=1}^{k-1} p_m \; \VaR_{\alpha_{m}} \Big(\rvdum^{\psi}_{\timeidx} \Bigm|\state_{\timeidx}=\state \Big) - H_{k,\timeidx}^{\psi_{k,\timeidx}}(\state;\policyparams) \Bigg\Vert \nonumber\\
    &\qquad\quad + 2 \esssup_{\state \in \statespace} \; \Big\Vert \rvdum_{\timeidx} - \rvdum^{\psi}_{\timeidx} \Big\Vert \nonumber\\
    \text{\scriptsize{[\triangleineq{}]}}&\quad \leq \esssup_{\state \in \statespace} \; \Bigg\Vert \riskmeas^{\spectrum} \Big(\rvdum^{\psi}_{\timeidx} \Bigm|\state_{\timeidx}=\state \Big) - \sum_{m=1}^{k-1} p_m \; \VaR_{\alpha_{m}} \Big(\rvdum^{\psi}_{\timeidx} \Bigm|\state_{\timeidx}=\state \Big) - H_{k,\timeidx}^{\psi_{k,\timeidx}}(\state;\policyparams) \Bigg\Vert \nonumber\\
    &\qquad\quad + 2 \esssup_{\state \in \statespace} \; \Big\Vert H_{k,\timeidx+1}(\state;\policyparams) - H_{k,\timeidx+1}^{\psi_{k,\timeidx+1}}(\state;\policyparams) \Big\Vert \nonumber\\
    &\qquad\quad + 2 \sum_{m'=1}^{k-1} p_{m'} \sum_{l=1}^{m'} \esssup_{\state \in \statespace} \; \Big\Vert H_{l,\timeidx+1}(\state;\policyparams) - H_{l,\timeidx+1}^{\psi_{l,\timeidx+1}}(\state;\policyparams)\Big\Vert \nonumber\\
    \begin{split}
    \text{\scriptsize{[\cref{eq:proof-approx-lastHm-induction}]}}&\quad < \esssup_{\state \in \statespace} \; \Bigg\Vert \riskmeas^{\spectrum} \Big(\rvdum^{\psi}_{\timeidx} \Bigm|\state_{\timeidx}=\state \Big) - \sum_{m=1}^{k-1} p_m \; \VaR_{\alpha_{m}} \Big(\rvdum^{\psi}_{\timeidx} \Bigm|\state_{\timeidx}=\state \Big) - H_{k,\timeidx}^{\psi_{k,\timeidx}}(\state;\policyparams) \Bigg\Vert \\
    &\qquad\quad + 2 \left( \varepsilon_{k,\timeidx+1} + \sum_{m'=1}^{k-1} p_{m'} \sum_{l=1}^{m'} \varepsilon_{l,\timeidx+1} \right).
    \label{eq:proof-spectral-induction-Hk}
    \end{split}
\end{align}
Using the universal approximation theorem on \cref{eq:proof-spectral-induction-Hm,eq:proof-spectral-induction-Hk}, for any $\varepsilon'_{m,\timeidx} > 0$, there exist \ANN{}s $H_{1,\timeidx}^{\psi_{1,\timeidx}},\ldots,H_{k,\timeidx}^{\psi_{k,\timeidx}}$ such that for any $m \in \{1,\ldots,k\}$, we have
\begin{equation}
    \esssup_{\state \in \statespace} \; \Big\Vert H_{m,\timeidx}(\state;\policyparams) - H_{m,\timeidx}^{\psi_{m,\timeidx}}(\state;\policyparams) \Big\Vert < \varepsilon'_{m,\timeidx} + 2 \left( \varepsilon_{k,\timeidx+1} + \sum_{m'=1}^{k-1} p_{m'} \sum_{l=1}^{m'} \varepsilon_{l,\timeidx+1} \right) =: \varepsilon_{m,\timeidx}.
    \label{eq:proof-approx-lastHm-end}
\end{equation}

This completes the proof by induction. 
Given \new{global errors $\varepsilon^{*}_1,\ldots,\varepsilon^{*}_{k}$} and sufficiently large neural network structures in terms of depth and number of nodes per layer, we can train the \ANN{}s to construct sequences $\{\varepsilon_{1,\timeidx}\}_{\timeidx \in \periodspace}, \ldots, \{\varepsilon_{k,\timeidx}\}_{\timeidx \in \periodspace}$ such that \new{$\varepsilon_{m,0} < \varepsilon^{*}_{m}$ for all $m=1,\ldots,k$}.
\end{proof}

\new{\begin{corollary}
    Suppose $\policy^{\policyparams}$ is a fixed policy, and its value function $\valuefunc_{\timeidx}(\state;\policyparams)$ is given in \cref{eq:lastV-spectral,eq:otherV-spectral}.
    Then for any $\varepsilon^{*}_1,\ldots,\varepsilon^{*}_{k} > 0$, there exist \ANN{}s denoted $H_{1,\timeidx}^{\psi_1},\ldots,H_{k,\timeidx}^{\psi_k}$ such that for any $\timeidx \in \periodspace$, we have
    \begin{equation}
        \esssup_{\state \in \statespace} \; \Bigg\Vert \valuefunc_{\timeidx}(\state; \policyparams) - \left( H_{k,\timeidx}^{\psi_k}(\state;\policyparams) + \sum_{m=1}^{k-1} p_m \sum_{l=1}^{m} H_{l,\timeidx}^{\psi_l}(\state;\policyparams) \right)\Bigg\Vert < \varepsilon^{*}.
    \end{equation}
    \label{thm:univ-approx-V-spectral}
\end{corollary}}

\new{\begin{proof}
    As a consequence of \cref{thm:univ-approx-H-spectral}, for any $\varepsilon^{*}_{1},\ldots,\varepsilon^{*}_{k} > 0$, there exist $H_{1,\timeidx}^{\valueparams_1},\ldots,H_{k,\timeidx}^{\valueparams_k}$ such that
    \begin{equation}
        \esssup_{\state \in \statespace} \; \Big\Vert H_{m,\eplength}(\state;\policyparams) - H_{m,\eplength}^{\psi_{m}}(\state;\policyparams) \Big\Vert < \varepsilon^{*}_{m}, \quad  \forall m = 1,\ldots,k,
    \end{equation}
    for any $\timeidx\in\periodspace$.
    Therefore, for the value function at any period $\timeidx \in \periodspace$, we have
    \begin{align*}
        &\esssup_{\state \in \statespace} \; \Bigg\Vert \valuefunc_{\timeidx}(\state; \policyparams) - \left( H_{k,\timeidx}^{\psi_{k}}(\state;\policyparams) + \sum_{m=1}^{k-1} p_m \sum_{l=1}^{m} H_{l,\timeidx}^{\psi_{l}}(\state;\policyparams) \right) \Bigg\Vert \\
        \text{\scriptsize{[\cref{eq:V-spectral}]}}&\quad = \esssup_{\state \in \statespace} \; \Bigg\Vert  H_{k,\timeidx}(\state;\policyparams) + \sum_{m=1}^{k-1} p_m \sum_{l=1}^{m} H_{l,\timeidx}(\state;\policyparams) \\
        &\qquad\qquad\qquad -  H_{k,\timeidx}^{\psi_{k}}(\state;\policyparams) - \sum_{m=1}^{k-1} p_m \sum_{l=1}^{m} H_{l,\timeidx}^{\psi_{l}}(\state;\policyparams) \Bigg\Vert \\
        \text{\scriptsize{[\triangleineq{}]}}&\quad \leq \esssup_{\state \in \statespace} \; \Big\Vert H_{k,\timeidx}(\state;\policyparams) - H_{k,\timeidx}^{\psi_{k}}(\state;\policyparams) \Big\Vert \\
        &\qquad\quad + \sum_{m=1}^{k-1} p_m \sum_{l=1}^{m} \esssup_{\state \in \statespace} \; \Big\Vert H_{l,\timeidx}(\state;\policyparams) - H_{l,\timeidx}^{\psi_{l}}(\state;\policyparams) \Big\Vert \\
        \text{\scriptsize{[\cref{eq:proof-approx-lastHm-end}]}}&\quad < \varepsilon^{*}_{k} + \sum_{m=1}^{k-1} p_m \sum_{l=1}^{m} \varepsilon^{*}_{l}.
    \end{align*}
    Since $\varepsilon^{*}_{1},\ldots,\varepsilon^{*}_{k}$ are arbitrary, this completes the proof.
\end{proof}}

\subsection{Policy Update}

The update of the policy described in \cref{ssec:update-policy} requires us to derive the gradient of the value function -- the following corollary is a consequence of \cref{thm:gradient-V}.
\begin{corollary}
    Suppose the logarithm of transition probabilities $\log \PP^{\policyparams}(\action,\statedum | \state)$ is a differentiable function in $\policyparams$ when $\PP^{\policyparams}(\action,\statedum | \state) \neq 0$, and its gradient wrt $\policyparams$ is bounded for any $(\action, \state) \in \actionspace \times \statespace$.
    Then, for any state $\state \in \statespace$, the gradient of the value function at period \new{$\eplength$} given in \cref{eq:lastV-spectral} is
    \begin{subequations}
    {\small{
    \begin{equation}
    	\newmath{ \grad{\policyparams} \valuefunc_{\eplength} (\state; \policyparams) = \sum_{m=1}^{k-1} \frac{p_m}{1-\alpha_{m}} \EE_{\PP^{\policyparams}(\cdot,\cdot | \state_{\eplength}=\state)} \bigg[
    	\Big( \costfunc_{\eplength}^{\policyparams} - \lambda_{m}^{*} \Big)_{+}
    	\bigg( \grad{\policyparams} \log \policy^{\policyparams} (\action | \state_{\eplength})\Big\rvert_{\action=\action_{\eplength}^{\policyparams}} \bigg) \bigg] },
    	\label{eq:score-gradient-reinforce-spectral}
    \end{equation}
    }}and the gradient of the value function at periods \new{$\timeidx \in \periodspace \setminus \{\eplength\}$} given in \cref{eq:otherV-spectral} is
    {\small{
    \begin{equation}
    \begin{split}
    	\grad{\policyparams} \valuefunc_{\timeidx} (\state; \policyparams) &= \sum_{m=1}^{k-1} \frac{p_m}{1-\alpha_{m}} \EE_{\PP^{\policyparams}(\cdot,\cdot | \state_{\timeidx}=\state)} \bigg[
    	\Big( \costfunc_{\timeidx}^{\policyparams} + \valuefunc_{\timeidx+1}(\state_{\timeidx+1}^{\policyparams}; \policyparams) - \lambda_{m}^{*} \Big)_{+}
    	\bigg( \grad{\policyparams} \log \policy^{\policyparams} (\action | \state_{\timeidx})\Big\rvert_{\action=\action_{\timeidx}^{\policyparams}} \bigg) \bigg] \\
    	&\qquad\qquad\qquad + 
    	\EE_{\PP^{\policyparams}(\cdot,\cdot | \state_{\timeidx}=\state)} \bigg[
    	\bigg( \grad{\policyparams} \valuefunc_{\timeidx+1}(\statedum; \policyparams)\Big\rvert_{\statedum=\state_{\timeidx+1}^{\policyparams}} \bigg) \;
    	\weight_{m}^{*}(\action_{\timeidx}^{\policyparams}, \state_{\timeidx+1}^{\policyparams})
    	\bigg],
    \end{split}	\label{eq:score-gradient-reinforce-spectral-2}
    \end{equation}
    }}
    \end{subequations}
    where $(\weight^{*}_{m}, \lambda^{*}_{m})$ are any saddle-points of the Lagrangian functions for all $\CVaR_{\alpha_{m}}$ in \cref{eq:lastV-spectral,eq:otherV-spectral}, respectively.
    \label{thm:gradient-V-spectral}
\end{corollary}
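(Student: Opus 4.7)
The plan rests on the key observation that a spectral risk measure with finitely supported spectrum $\spectrum = \sum_{m=1}^{k-1} p_m \delta_{\alpha_m}$ is, by \cref{def:spectral-risk}, a convex combination of conditional value-at-risks, i.e. $\riskmeas^{\spectrum}(\cdot) = \sum_{m=1}^{k-1} p_m \CVaR_{\alpha_m}(\cdot)$. Since $\grad{\policyparams}$ is linear, the strategy will be to differentiate the value functions in \cref{eq:lastV-spectral,eq:otherV-spectral} term by term and apply \cref{thm:gradient-V} to each individual $\CVaR_{\alpha_m}$ summand, treating $\alpha_m$ as the threshold of that individual invocation of the theorem.

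First I would treat the terminal period. Starting from \cref{eq:lastV-spectral} and swapping the gradient with the finite sum, the task reduces to computing $\grad{\policyparams} \CVaR_{\alpha_m}(\costfunc_{\eplength-1}^{\policyparams} | \state_{\eplength-1})$ for each $m = 1, \ldots, k-1$. For each fixed $m$, this is precisely the scalar-$\alpha$ result already proved in \cref{thm:gradient-V} with $\alpha$ replaced by $\alpha_m$: the corresponding Lagrangian admits a saddle-point $(\weight_m^*, \lambda_m^*)$ whose structure mirrors the one used in the proof of \cref{thm:gradient-V}, but with threshold $\alpha_m$ and its own risk envelope $\Uu_m$ of the form \cref{eq:risk-envelope-CVaR}. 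Multiplying the resulting expression by $p_m$ and summing over $m$ yields \cref{eq:score-gradient-reinforce-spectral}.

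Next, for the non-terminal periods $\timeidx \in \periodspace \setminus \{\eplength-1\}$, I would carry out the same reduction. The only difference is that each $\CVaR_{\alpha_m}$ now acts on the random variable $\costfunc_{\timeidx}^{\policyparams} + \valuefunc(\state_{\timeidx+1}^{\policyparams}; \policyparams)$, which depends on $\policyparams$ both through the transition kernel $\PP^{\policyparams}$ and through the value function at the next period. Following the non-terminal portion of the proof of \cref{thm:gradient-V} inside each summand, the Envelope theorem combined with the likelihood-ratio trick will produce two contributions per $m$: a $(\cdot)_+$ term weighted by $p_m/(1-\alpha_m)$ arising from differentiating the transition kernel, and a term involving $\grad{\policyparams} \valuefunc$ at the next state weighted by $\weight_m^*$ arising from the internal $\policyparams$-dependence of the argument. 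Summing over $m$ then assembles \cref{eq:score-gradient-reinforce-spectral-2}.

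I do not anticipate any serious obstacle, because all analytic ingredients — equidifferentiability of each summand, existence of saddle-points, absolute continuity of the gradient, and boundedness of $\grad{\policyparams} \log \PP^{\policyparams}$ — transfer immediately from the proof of \cref{thm:gradient-V} given that each $\CVaR_{\alpha_m}$ is itself coherent. The one point that merits brief care is that the saddle-points $(\weight_m^*, \lambda_m^*)$ are specific to each threshold $\alpha_m$, since each $\CVaR_{\alpha_m}$ comes with its own risk envelope; this is why they must be indexed by $m$ in the statement, and why we handle the sum after invoking the Envelope theorem within each term rather than trying to form a single Lagrangian for $\riskmeas^{\spectrum}$ all at once.
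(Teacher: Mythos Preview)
Your proposal is correct and follows essentially the same approach as the paper: exploit linearity of the gradient together with the finite-sum representation $\riskmeas^{\spectrum}=\sum_{m} p_m\,\CVaR_{\alpha_m}$ to reduce the computation to a term-by-term application of \cref{thm:gradient-V}, with separate saddle-points $(\weight_m^*,\lambda_m^*)$ for each threshold $\alpha_m$. The paper's own proof is in fact more terse than yours, simply writing down the decomposition \[\grad{\policyparams}\valuefunc = \sum_{m=1}^{k-1} p_m\,\grad{\policyparams}\CVaR_{\alpha_m}(\cdot)\] and then invoking \cref{thm:gradient-V} on each summand.
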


\begin{proof}
Spectral risk measures are also coherent, and we use the representation theorem and the Envelope theorem for saddle-point problems in an analogous way to the proof of \cref{thm:gradient-V}.
For any period $\timeidx \in \periodspace \setminus \{\eplength-1\}$, we have
\begin{subequations}
\begin{align}
	\newmath{\grad{\policyparams} \valuefunc_{\eplength}(\state;\policyparams)} &=
    \sum_{m=1}^{k-1} p_m \; \newmath{\grad{\policyparams} \CVaR_{\alpha_m} \Big(\costfunc_{\eplength}^{\policyparams} \Bigm|\state_{\eplength} = \state \Big)},\quad \text{and} \label{eq:grad-V-spectral1}
    \\
	\grad{\policyparams} \valuefunc_{\timeidx}(\state;\policyparams) &=
    \sum_{m=1}^{k-1} p_m \; \grad{\policyparams} \CVaR_{\alpha_m} \Big(\costfunc_{\timeidx}^{\policyparams} +
    \valuefunc_{\timeidx+1}(\state_{\timeidx+1}^{\policyparams};\policyparams)
    \Bigm|\state_{\timeidx} = \state \Big). \label{eq:grad-V-spectral2}
\end{align}
\end{subequations}
Using \cref{thm:gradient-V} on \cref{eq:grad-V-spectral1,eq:grad-V-spectral2} yields the desired result.
\end{proof}

Since saddle-points for each individual $\CVaR_{\alpha_{m}}(\rv)$ satisfy $\weight_{m}^{*}(\omega) = \frac{1}{1-\alpha_{m}} \Ind_{\rv(\omega) > \lambda_{m}^{*}}$ where $\lambda_{m}^{*}$ is any $\alpha_{m}$-quantile of $\rv$, one can use the sum of \ANN{}s $\sum_{l=1}^{m} H_{l,\timeidx}^{\psi_l}$, $m=1,\ldots,k-1$, estimating respectively $\VaR_{\alpha_{m}}$, to obtain an estimation of a global saddle-point of the spectral risk measure.
We thus replace the loss function $\Ll^{\policyparams}$ in \cref{eq:loss-func-gradient-reinforce} by
{\small{
\begin{equation}
\begin{split}
    &\newmath{\sum_{\timeidx \in \periodspace \setminus \{\eplength\}}} \sum_{m=1}^{k-1} \sum_{\batchidx=1}^{\Nbatchs}
    \frac{p_m}{1-\alpha_{m}} \,
	\bigg( \costfunc_{\timeidx}^{(\batchidx)} + \valuefunc_{\timeidx+1}^{\valueparams}(\state_{\timeidx+1}^{(\batchidx)}; \policyparams) - \sum_{l=1}^{m} H_{l,\timeidx}^{\psi_l}(\state_{\timeidx}^{(\batchidx)}; \policyparams) \bigg)_{+}
	\bigg( \grad{\policyparams} \log \policy^{\policyparams} (\action | \state_{\timeidx}^{(\batchidx)})\Big\rvert_{\action=\action_{\timeidx}^{(\batchidx)}} \bigg)
    \\
	&\qquad + \sum_{m=1}^{k-1} \sum_{\batchidx=1}^{\Nbatchs}
    \frac{p_m}{1-\alpha_{m}} \,
    \newmath{\bigg( \costfunc_{\eplength}^{(\batchidx)} - \sum_{l=1}^{m} H_{l,\eplength}^{\psi_l}(\state_{\eplength}^{(\batchidx)}; \policyparams) \bigg)_{+}
	\bigg( \grad{\policyparams} \log \policy^{\policyparams} (\action | \state_{\eplength}^{(\batchidx)})\Big\rvert_{\action=\action_{\eplength}^{(\batchidx)}} \bigg)}.
\end{split} \label{eq:loss-func-gradient-spectral} \tag{L4}
\end{equation}
}}

For any spectral risk measure having a spectrum with finite support, one can use our proposed methodology and risk-aware actor-critic algorithm in \cref{algo:actor-critic} by (i) creating additional \ANN{}s for the different quantiles; and (ii) replacing the loss functions \cref{eq:loss-critic,eq:loss-func-gradient-reinforce} with \cref{eq:loss-func-dynamic-spectral,eq:loss-func-gradient-spectral}, respectively.

\endgroup

% discussion
%!TEX root = ../main.tex

% --------------------------------------------------------------
%                   Conclusion
% --------------------------------------------------------------
\section{Conclusion}
\label{sec:conclusion}

In this paper, we developed a novel setting to solve finite-horizon \RL{} problems where the objective function consists of a time-consistent, dynamic spectral risk measure of costs induced by a randomised policy.
Our proposed approach, which makes use of the conditional elicitability of spectral risk functionals with a finite support spectrum, provides an improvement over the existing nested simulation framework.
Indeed, it requires less memory and computation resources by estimating the risk and performing a policy gradient method using only full episodes.
It also gives a risk-aware \RL{} algorithm for solving this class of sequential decision making problems in scenarios where simulating additional transitions is too costly or even infeasible.

Part of our future work consists of extending this framework and providing risk-aware \RL{} alternatives.
For instance, from a risk-aware perspective, an agent may wish to consider deterministic policies to avoid introducing extra randomness in the optimisation problem, or robustify its actions against the uncertainty from the environment.
It is also of great interest to understand how such algorithms perform in real-life applications from direct interactions with real environments, and quantify the gain in performance of time-consistent policies as opposed to precommitment strategies.
Thus providing insights on how to devise a universal model-agnostic approach for solving \RL{} problems with time-consistent dynamic risk measures.

\appendix
\begingroup
\allowdisplaybreaks
%!TEX root = ../main.tex

% --------------------------------------------------------------
%                   Appendix
% --------------------------------------------------------------
\section{Hyperparameters}
\label{sec:appendix-hyperparams}

In this section, we expand on the hyperparameters of our \RL{} algorithm used in the different examples of \cref{sec:experiments}.
We emphasise that the choices of hyperparameters, optimisation rules and learning rate schedulers depend on the specific experiment, and one may tune them to accelerate the learning process.
We also refer the reader to our \href{https://github.com/acoache/RL-ElicitableDynamicRisk}{Github repository RL-ElicitableDynamicRisk} for further information on the code implementation and architecture.

\subsection{Statistical Arbitrage Example}

For each \CVaR{} with threshold $\alpha$, our actor-critic uses the following hyperparameters during the training phase.
All \ANN{}s, i.e. the policy $\policy^{\policyparams}$, the \VaR{} $H_{1,\timeidx}^{\psi_1}$, and the difference between \CVaR{} and \VaR{} $H_{2,\timeidx}^{\psi_2}$, consist of five layers of 16 hidden nodes each with SiLU activation functions.
$\policy^{\policyparams}$ has a linear transformation of a sigmoid activation function that maps to $(-\action_{\min}, \action_{\max})$ for its output layer, $H_{1,\timeidx}^{\psi_1}$ has no activation function, and $H_{2,\timeidx}^{\psi_2}$ has a softplus activation function to ensure the outputs are non-negative.
The learning rate for $\policy^{\policyparams}$ initially starts at $4 \times 10^{-3}$, and decays by 0.95 every 50 epochs until it reaches $5 \times 10^{-4}$.
The learning rates for $H_{1,\timeidx}^{\psi_1}$ and $H_{2,\timeidx}^{\psi_2}$ are of the order of $5 \times 10^{-3}$ and decay by 0.95 every 100 epochs.

The critic procedure -- i.e. estimation of the value function -- is performed for 1,000 epochs, where the target networks are replaced every 400 epochs, and it uses mini-batches of 750 full episodes.
The actor procedure -- i.e. update of the policy -- is performed for 30 epochs with mini-batches of $500/(1-\alpha)$ full episodes.
We execute the whole algorithm for 1,500 iterations, where a single iteration corresponds to both the critic and actor procedures.

We train the models on the \href{https://docs.alliancecan.ca/wiki/Niagara}{Niagara servers, managed by the Digital Research Alliance of Canada}. Every Python script runs on a single CPU node composed of 40 cores, i.e. two sockets with 20 Intel Skylake cores at 2.4GHz.
The training times for the elicitable and nested simulation approaches are approximately 5 and 9 hours, respectively.

\subsection{Portfolio Allocation Example}

The \ANN{} $\policy^{\policyparams}$ giving the mean of the Gaussian distribution consists of five layers of 16 hidden nodes each with SiLU activation functions, a softplus output activation function, and a learning rate that starts at $5 \times 10^{-3}$ and decays by 0.97 every 100 epochs until it reaches $3 \times 10^{-4}$.
The \ANN{}s $H_{1,\timeidx}^{\psi_1},H_{2,\timeidx}^{\psi_2}$ characterising the value function are both composed of five layers of 16 hidden nodes each with SiLU activation functions -- $H_{1,\timeidx}^{\psi_1}$ has no activation function, while $H_{2,\timeidx}^{\psi_2}$ has a softplus activation function.
Their learning rates initially start at $5 \times 10^{-3}$ and decay by 0.95 every 50 epochs.

For the experiments where price dynamics are given by either one of the \SDE{}s in \cref{eq:price-GBM,eq:price-meanrev}, the actor-critic algorithm runs for 2,000 iterations, alternating between the critic procedure -- i.e. estimation of the value function for 1,000 epochs with mini-batches of 1,000 full episodes, where the target networks are replaced every 300 epochs -- and the actor procedure -- i.e. update of the policy for 10 epochs with mini-batches of 1,000$/(1-\alpha)$ full episodes.
For the experiments where price dynamics are simulated from the VECM, the actor-critic algorithm instead runs for 4,000 iterations, and we increase the number of epochs for the critic procedure to 2,000 epochs.
The results shown in \cref{fig:portfolio-VECM} are obtained after approximately 24 hours of training on the Niagara servers.

%!TEX root = ../main.tex

% --------------------------------------------------------------
%                   Appendix
% --------------------------------------------------------------
\section{Tables}
\label{sec:appendix-tables}

\new{This section presents a statistical analysis of the first set of experiments conducted in  \cref{ssec:portfolio-allocation}.
The results are presented in \cref{tab:dynamic-cvar}, which provide the \ANN{} estimates of the dynamic conditional value-at-risk (CVaR) for the optimal policies over 14 training runs.
Each row of the table corresponds to the optimal strategy for the \CVaR{} level reported in the left-most column, and each column represents a specific \CVaR{} level.  The reported number is the estimate of the \CVaR{} at the level reported in the column heading. For instance, the first row in \cref{tab:dynamic-cvar} uses the optimal strategy obtained when minimising the dynamic \CVaR{} at level $0.01$, and the third column reports the dynamic \CVaR{} at level $0.05$ as $-0.071$. The cells colored red correspond to the smallest value within that column.
Notably, in \cref{tab:dynamic-cvar}, each column is minimised on the row corresponding to the column's \CVaR{} level as expected. For example, the column labeled $\alpha= 0.05$ is minimised in the second row corresponding to the optimal dynamic $\CVaR_{0.05}$ strategy.
}
\begin{table}[htbp]
    \centering
    \begin{minipage}[b]{0.8\textwidth}
        \vspace{1ex}\centering
        \begin{tabular}{cc c c c c} 
        \toprule\toprule
        \rowcolor{blue!10} Criterion & $\alpha=0$ & $\alpha=0.01$ & $\alpha=0.05$ & $\alpha=0.1$ & $\alpha=0.9$ \\ 
        \midrule
        \rowcolor{blue!10} $\CVaR_{0.01}$ & {\color{red}-0.101} & {\color{red}-0.097} & -0.071 & -0.006 & 0.641 
        \\
        \rowcolor{blue!10}
        $\CVaR_{0.05}$ & -0.097 & -0.089 & {\color{red}-0.075} & -0.021 & 0.518 
        \\
        \rowcolor{blue!10}
        $\CVaR_{0.1}$ & -0.058 & -0.049 & -0.040 & {\color{red}-0.024} & 0.295 
        \\
        \rowcolor{blue!10}
        $\CVaR_{0.9}$ & -0.046 & -0.041 & -0.032 & -0.011 & {\color{red}0.267} \\
        \bottomrule\bottomrule
        \end{tabular}
        ~\\[0.5em]
	\end{minipage}
	\caption{\new{Approximated dynamic \CVaR{}s at threshold $\alpha$ (columns) of the costs for the first set of experiments in \cref{ssec:portfolio-allocation} with different criteria (rows).}}
	\label{tab:dynamic-cvar}
\end{table}
%!TEX root = ../main.tex

% --------------------------------------------------------------
%                   Appendix
% --------------------------------------------------------------
\section{Parameter Estimates of the VECM}
\label{sec:appendix-vecm}

In this section, we provide a brief overview of vector error correction models (VECMs), and show the parameter estimates obtained with our dataset for replication purposes.
For more information on VECMs, we refer the reader to Chapter 7 of \cite{lutkepohl2005new} and the code from the \href{https://www.statsmodels.org/dev/vector_ar.html#vector-error-correction-models-vecm}{Python library named statsmodels.tsa.vector\_ar.vecm}.

In our case, the VECM has the form
\begin{equation*}
    \Delta Y_{\tau} = \alpha \beta\tr Y_{\tau-1} + \Gamma_1 \Delta Y_{\tau-1} + \cdots + \Gamma_{k_{ar}-1} \Delta Y_{\tau-k_{ar}+1} + C D_{\tau} + u_\tau, \label{eq:vecm-model}
\end{equation*}
where $Y_{\tau}$ is the $d$-dimensional process of interest, $\alpha,\beta$ are $(d \times r)$ matrices of rank $r$, $k_{ar}$ the number of lagged differences, $C$ the parameter estimates of the deterministic terms outside the co-integration relation $D_\tau$, and $u_\tau$ a $d$-dimensional white noise with covariance matrix $\Sigma_{u}$.
Without loss of generality, we order alphabetically the asset prices in the random vector $Y_{\tau}$, i.e. the $d=8$ dimensions correspond respectively to assets AAL, AMZN, CCL, FB, IBM, INTC, LYFT and OXY.

When fitting this model on a normalised version of the stock tickers with the transformation $Y_{\tau}^{(i)} = \log(\price_{\tau}^{(i)} / \price_{0}^{(i)})$, we estimate $\hat{k}_{ar} = 0$ (which implies $\hat{\Gamma} = \mathbf{0}$) and $\hat{r}=2$ using the Bayesian information criterion (BIC).
We also have the following parameter estimates:
\begin{align*}
    \hat{\alpha} &= \begin{bmatrix}
  -9.1 & -4.0\\
  1.4 & 0.7\\
  -3.7 & -2.3\\
  2.1 & 1.1\\
  -1.4 & -0.5\\
  -0.6 & 0.2\\
  -1.7 & -0.9\\
  -1.6 & -0.3\\
\end{bmatrix} \times 10^{-2}, \qquad
\hat{\beta} = \begin{bmatrix}
  1.0 & 0.0\\
  0.0 & 1.0\\
  -2.1 & 3.3\\
  3.5 & -9.5\\
  -0.6 & 3.1\\
  1.9 & -4.2\\
  -1.0 & 1.6\\
  1.3 & -3.1\\
\end{bmatrix}, \\
    \hat{C} &= \begin{bmatrix}
  -1.3 & 1.5 & -7.6 & 2.9 & 1.0 & 5.2 & -1.3 & 4.6\\
\end{bmatrix} \times 10^{-5}, \\
    \diag \hat{\Sigma}_u &= \begin{bmatrix}
  4.3 & 1.8 & 5.1 & 2.2 & 1.9 & 2.5 & 4.1 & 5.2\\
\end{bmatrix} \times 10^{-2},
    \\
    \widehat{\text{Corr}}_u &= \begin{bmatrix}
  1.0 & 0.1 & 0.7 & 0.2 & 0.4 & 0.3 & 0.4 & 0.4\\
  0.1 & 1.0 & 0.2 & 0.6 & 0.3 & 0.5 & 0.2 & 0.2\\
  0.7 & 0.2 & 1.0 & 0.3 & 0.5 & 0.4 & 0.6 & 0.5\\
  0.2 & 0.6 & 0.3 & 1.0 & 0.4 & 0.5 & 0.3 & 0.3\\
  0.4 & 0.3 & 0.5 & 0.4 & 1.0 & 0.5 & 0.4 & 0.5\\
  0.3 & 0.5 & 0.4 & 0.5 & 0.5 & 1.0 & 0.3 & 0.3\\
  0.4 & 0.2 & 0.6 & 0.3 & 0.4 & 0.3 & 1.0 & 0.4\\
  0.4 & 0.2 & 0.5 & 0.3 & 0.5 & 0.3 & 0.4 & 1.0\\
\end{bmatrix},
\end{align*}
where $\text{Corr}_u := (\diag{\Sigma_u})^{-\frac{1}{2}} \, \Sigma_u \, (\diag{\Sigma_u})^{-\frac{1}{2}}$.

\endgroup

\section*{Acknowledgments}
We are grateful to participants of the Victoria Seminar series.
\new{We also thank the anonymous referees for their thoughtful comments during the review process.}

\bibliographystyle{siamplain}
\bibliography{bib-files/references}

\begin{thebibliography}{10}

\bibitem{acciaio2011dynamic}
{\sc B.~Acciaio and I.~Penner}, {\em Dynamic risk measures}, in Advanced
  Mathematical Methods for Finance, Springer, 2011, pp.~1--34.

\bibitem{ahmadi2021constrained}
{\sc M.~Ahmadi, U.~Rosolia, M.~D. Ingham, R.~M. Murray, and A.~D. Ames}, {\em
  Constrained risk-averse {Markov} decision processes}, in The 35th AAAI
  Conference on Artificial Intelligence (AAAI-21), 2021.

\bibitem{artzner1999coherent}
{\sc P.~Artzner, F.~Delbaen, J.-M. Eber, and D.~Heath}, {\em Coherent measures
  of risk}, Mathematical Finance, 9 (1999), pp.~203--228.

\bibitem{bauerle2021markov}
{\sc N.~B{\"a}uerle and A.~Glauner}, {\em Markov decision processes with
  recursive risk measures}, European Journal of Operational Research,  (2021).

\bibitem{bauerle2021minimizing}
{\sc N.~B{\"a}uerle and A.~Glauner}, {\em Minimizing spectral risk measures
  applied to {Markov} decision processes}, Mathematical Methods of Operations
  Research, 94 (2021), pp.~35--69.

\bibitem{bielecki2016dynamic}
{\sc T.~R. Bielecki, I.~Cialenco, S.~Drapeau, and M.~Karliczek}, {\em Dynamic
  assessment indices}, Stochastics, 88 (2016), pp.~1--44.

\bibitem{cheng2022markov}
{\sc Z.~Cheng and S.~Jaimungal}, {\em {Markov} decision processes with
  {Kusuoka}-type conditional risk mappings}, arXiv preprint arXiv:2203.09612,
  (2022).

\bibitem{chu2014markov}
{\sc S.~Chu and Y.~Zhang}, {\em Markov decision processes with iterated
  coherent risk measures}, International Journal of Control, 87 (2014),
  pp.~2286--2293.

\bibitem{coache2021reinforcement}
{\sc A.~Coache and S.~Jaimungal}, {\em Reinforcement learning with dynamic
  convex risk measures}, arXiv preprint arXiv:2112.13414,  (2021).

\bibitem{cohen2011backward}
{\sc S.~N. Cohen and R.~J. Elliott}, {\em Backward stochastic difference
  equations and nearly time-consistent nonlinear expectations}, SIAM Journal on
  Control and Optimization, 49 (2011), pp.~125--139.

\bibitem{cont2010robustness}
{\sc R.~Cont, R.~Deguest, and G.~Scandolo}, {\em Robustness and sensitivity
  analysis of risk measurement procedures}, Quantitative Finance, 10 (2010),
  pp.~593--606.

\bibitem{cybenko1989approximation}
{\sc G.~Cybenko}, {\em Approximation by superpositions of a sigmoidal
  function}, Mathematics of Control, Signals and Systems, 2 (1989),
  pp.~303--314.

\bibitem{degris2012off}
{\sc T.~Degris, M.~White, and R.~S. Sutton}, {\em Off-policy actor-critic},
  arXiv preprint arXiv:1205.4839,  (2012).

\bibitem{di2019practical}
{\sc D.~Di~Castro, J.~Oren, and S.~Mannor}, {\em Practical risk measures in
  reinforcement learning}, arXiv preprint arXiv:1908.08379,  (2019).

\bibitem{dimitriadis2022characterizing}
{\sc T.~Dimitriadis, T.~Fissler, and J.~Ziegel}, {\em Characterizing
  {M}-estimators}, arXiv preprint arXiv:2208.08108,  (2022).

\bibitem{emmer2015best}
{\sc S.~Emmer, D.~Tasche, and M.~Kratz}, {\em What is the best risk measure in
  practice? {A} comparison of standard measures}, Journal of Risk, 18 (2015),
  pp.~31--60.

\bibitem{fissler2021deep}
{\sc T.~Fissler, M.~Merz, and M.~V. W{\"u}thrich}, {\em Deep quantile and deep
  composite model regression}, arXiv preprint arXiv:2112.03075,  (2021).

\bibitem{fissler2022sensitivity}
{\sc T.~Fissler and S.~M. Pesenti}, {\em Sensitivity measures based on scoring
  functions}, arXiv preprint arXiv:2203.00460,  (2022).

\bibitem{fissler2016higher}
{\sc T.~Fissler and J.~F. Ziegel}, {\em Higher order elicitability and
  {Osband}’s principle}, The Annals of Statistics, 44 (2016), pp.~1680--1707.

\bibitem{fissler2021elicitability}
{\sc T.~Fissler and J.~F. Ziegel}, {\em On the elicitability of range value at
  risk}, Statistics \& Risk Modeling, 38 (2021), pp.~25--46.

\bibitem{frittelli2004dynamic}
{\sc M.~Frittelli and E.~R. Gianin}, {\em Dynamic convex risk measures}, Risk
  measures for the 21st century,  (2004), pp.~227--248.

\bibitem{garcia2015comprehensive}
{\sc J.~Garc{\'i}a and F.~Fern{\'a}ndez}, {\em A comprehensive survey on safe
  reinforcement learning}, Journal of Machine Learning Research, 16 (2015),
  pp.~1437--1480.

\bibitem{gneiting2011making}
{\sc T.~Gneiting}, {\em Making and evaluating point forecasts}, Journal of the
  American Statistical Association, 106 (2011), pp.~746--762.

\bibitem{guillen2021joint}
{\sc M.~Guillen, L.~Berm{\'u}dez, and A.~Pitarque}, {\em Joint generalized
  quantile and conditional tail expectation regression for insurance risk
  analysis}, Insurance: Mathematics and Economics, 99 (2021), pp.~1--8.

\bibitem{hambly2021recent}
{\sc B.~Hambly, R.~Xu, and H.~Yang}, {\em Recent advances in reinforcement
  learning in finance}, arXiv preprint arXiv:2112.04553,  (2021).

\bibitem{han2020convergence}
{\sc J.~Han and J.~Long}, {\em Convergence of the deep bsde method for coupled
  fbsdes}, Probability, Uncertainty and Quantitative Risk, 5 (2020), pp.~1--33.

\bibitem{hu2022recent}
{\sc R.~Hu and M.~Lauriere}, {\em Recent developments in machine learning
  methods for stochastic control and games}, Recent Developments in Machine
  Learning Methods for Stochastic Control and Games (May 13, 2022),  (2022).

\bibitem{huang2021convergence}
{\sc A.~Huang, L.~Leqi, Z.~C. Lipton, and K.~Azizzadenesheli}, {\em On the
  convergence and optimality of policy gradient for {Markov} coherent risk},
  arXiv preprint arXiv:2103.02827,  (2021).

\bibitem{jaimungal2022reinforcement}
{\sc S.~Jaimungal}, {\em Reinforcement learning and stochastic optimisation},
  Finance and Stochastics, 26 (2022), pp.~103--129.

\bibitem{jaimungal2022robust}
{\sc S.~Jaimungal, S.~M. Pesenti, Y.~S. Wang, and H.~Tatsat}, {\em Robust
  risk-aware reinforcement learning}, SIAM Journal on Financial Mathematics, 13
  (2022), pp.~213--226.

\bibitem{kalogerias2020better}
{\sc D.~S. Kalogerias, L.~F. Chamon, G.~J. Pappas, and A.~Ribeiro}, {\em Better
  safe than sorry: Risk-aware nonlinear bayesian estimation}, in ICASSP
  2020-2020 IEEE International Conference on Acoustics, Speech and Signal
  Processing (ICASSP), IEEE, 2020, pp.~5480--5484.

\bibitem{kingma2014adam}
{\sc D.~P. Kingma and J.~Ba}, {\em Adam: A method for stochastic optimization},
  arXiv preprint arXiv:1412.6980,  (2014).

\bibitem{koenker1978regression}
{\sc R.~Koenker and G.~Bassett~Jr}, {\em Regression quantiles}, Econometrica,
  (1978), pp.~33--50.

\bibitem{konda2000actor}
{\sc V.~R. Konda and J.~N. Tsitsiklis}, {\em Actor-critic algorithms}, in
  Advances in Neural Information Processing Systems, Citeseer, 2000,
  pp.~1008--1014.

\bibitem{kose2021risk}
{\sc U.~Kose and A.~Ruszczynski}, {\em Risk-averse learning by temporal
  difference methods with {Markov} risk measures}, Journal of Machine Learning
  Research, 22 (2021).

\bibitem{kusuoka2001law}
{\sc S.~Kusuoka}, {\em On law invariant coherent risk measures}, in Advances in
  Mathematical Economics, Springer, 2001, pp.~83--95.

\bibitem{lambert2008eliciting}
{\sc N.~S. Lambert, D.~M. Pennock, and Y.~Shoham}, {\em Eliciting properties of
  probability distributions}, in Proceedings of the 9th ACM Conference on
  Electronic Commerce, 2008, pp.~129--138.

\bibitem{levine2016end}
{\sc S.~Levine, C.~Finn, T.~Darrell, and P.~Abbeel}, {\em End-to-end training
  of deep visuomotor policies}, Journal of Machine Learning Research, 17
  (2016), pp.~1334--1373.

\bibitem{lutkepohl2005new}
{\sc H.~L{\"u}tkepohl}, {\em New Introduction to Multiple Time Series
  Analysis}, Springer Science \& Business Media, 2005.

\bibitem{marbach2001simulation}
{\sc P.~Marbach and J.~N. Tsitsiklis}, {\em Simulation-based optimization of
  {Markov} reward processes}, IEEE Transactions on Automatic Control, 46
  (2001), pp.~191--209.

\bibitem{marzban2021deep}
{\sc S.~Marzban, E.~Delage, and J.~Y. Li}, {\em Deep reinforcement learning for
  equal risk pricing and hedging under dynamic expectile risk measures}, arXiv
  preprint arXiv:2109.04001,  (2021).

\bibitem{meinshausen2006quantile}
{\sc N.~Meinshausen and G.~Ridgeway}, {\em Quantile regression forests.},
  Journal of Machine Learning Research, 7 (2006).

\bibitem{milgrom2002envelope}
{\sc P.~Milgrom and I.~Segal}, {\em Envelope theorems for arbitrary choice
  sets}, Econometrica, 70 (2002), pp.~583--601.

\bibitem{mnih2015human}
{\sc V.~{Mnih}, K.~{Kavukcuoglu}, D.~{Silver}, A.~A. {Rusu}, J.~{Veness}, M.~G.
  {Bellemare}, A.~{Graves}, M.~{Riedmiller}, A.~K. {Fidjeland}, G.~{Ostrovski},
  et~al.}, {\em Human-level control through deep reinforcement learning},
  Nature, 518 (2015), pp.~529--533.

\bibitem{nass2019entropic}
{\sc D.~Nass, B.~Belousov, and J.~Peters}, {\em Entropic risk measure in policy
  search}, in 2019 IEEE/RSJ International Conference on Intelligent Robots and
  Systems (IROS), IEEE, 2019, pp.~1101--1106.

\bibitem{osband1985providing}
{\sc K.~Osband}, {\em Providing incentives for better cost forecasting}, PhD
  thesis, University of California, Berkeley, 1985.

\bibitem{peng1997backward}
{\sc S.~Peng}, {\em Backward {SDE} and related {G}-expectation}, Pitman
  Research Notes in Mathematics Series,  (1997), pp.~141--160.

\bibitem{pinkus1999approximation}
{\sc A.~Pinkus}, {\em Approximation theory of the mlp model in neural
  networks}, Acta numerica, 8 (1999), pp.~143--195.

\bibitem{richman2021mind}
{\sc R.~Richman}, {\em Mind the gap - {Safely} incorporating deep learning
  models into the actuarial toolkit}, Available at SSRN 3857693,  (2021).

\bibitem{riedel2004dynamic}
{\sc F.~Riedel}, {\em Dynamic coherent risk measures}, Stochastic Processes and
  their Applications, 112 (2004), pp.~185--200.

\bibitem{rockafellar2000optimization}
{\sc R.~T. Rockafellar, S.~Uryasev, et~al.}, {\em Optimization of conditional
  value-at-risk}, Journal of Risk, 2 (2000), pp.~21--42.

\bibitem{rodrigues2020beyond}
{\sc F.~Rodrigues and F.~C. Pereira}, {\em Beyond expectation: Deep joint mean
  and quantile regression for spatiotemporal problems}, IEEE Transactions on
  Neural Networks and Learning Systems, 31 (2020), pp.~5377--5389.

\bibitem{ruszczynski2010risk}
{\sc A.~Ruszczy{\'n}ski}, {\em Risk-averse dynamic programming for {Markov}
  decision processes}, Mathematical Programming, 125 (2010), pp.~235--261.

\bibitem{saerens2000building}
{\sc M.~Saerens}, {\em Building cost functions minimizing to some summary
  statistics}, IEEE Transactions on Neural Networks, 11 (2000), pp.~1263--1271.

\bibitem{savage1971elicitation}
{\sc L.~J. Savage}, {\em Elicitation of personal probabilities and
  expectations}, Journal of the American Statistical Association, 66 (1971),
  pp.~783--801.

\bibitem{shapiro2014lectures}
{\sc A.~Shapiro, D.~Dentcheva, and A.~Ruszczy{\'n}ski}, {\em Lectures on
  Stochastic Programming: Modeling and Theory}, SIAM, 2014.

\bibitem{shen2014risk}
{\sc Y.~Shen, M.~J. Tobia, T.~Sommer, and K.~Obermayer}, {\em Risk-sensitive
  reinforcement learning}, Neural Computation, 26 (2014), pp.~1298--1328.

\bibitem{silver2016mastering}
{\sc D.~Silver, A.~Huang, C.~J. Maddison, A.~Guez, L.~Sifre, G.~Van
  Den~Driessche, J.~Schrittwieser, I.~Antonoglou, V.~Panneershelvam,
  M.~Lanctot, et~al.}, {\em Mastering the game of go with deep neural networks
  and tree search}, Nature, 529 (2016), pp.~484--489.

\bibitem{smith2017cyclical}
{\sc L.~N. Smith}, {\em Cyclical learning rates for training neural networks},
  in 2017 IEEE Winter Conference on Applications of Computer Vision (WACV),
  IEEE, 2017, pp.~464--472.

\bibitem{sutton2018reinforcement}
{\sc R.~S. Sutton and A.~G. Barto}, {\em Reinforcement Learning: An
  Introduction}, MIT press, 2018.

\bibitem{sutton2000policy}
{\sc R.~S. Sutton, D.~A. McAllester, S.~P. Singh, and Y.~Mansour}, {\em Policy
  gradient methods for reinforcement learning with function approximation}, in
  Advances in Neural Information Processing Systems, 2000, pp.~1057--1063.

\bibitem{takeuchi2006nonparametric}
{\sc I.~Takeuchi, Q.~V. Le, T.~D. Sears, and A.~J. Smola}, {\em Nonparametric
  quantile estimation}, Journal of Machine Learning Research, 7 (2006),
  pp.~1231--1264.

\bibitem{tamar2016sequential}
{\sc A.~Tamar, Y.~Chow, M.~Ghavamzadeh, and S.~Mannor}, {\em Sequential
  decision making with coherent risk}, IEEE Transactions on Automatic Control,
  62 (2016), pp.~3323--3338.

\bibitem{thomson1979eliciting}
{\sc W.~Thomson}, {\em Eliciting production possibilities from a well-informed
  manager}, Journal of Economic Theory, 20 (1979), pp.~360--380.

\bibitem{van2016deep}
{\sc H.~{Van Hasselt}, A.~{Guez}, and D.~{Silver}}, {\em Deep reinforcement
  learning with double {Q}-learning}, in Proceedings of the 30th AAAI
  Conference on Artificial Intelligence, 2016.

\bibitem{weber2006distribution}
{\sc S.~Weber}, {\em Distribution-invariant risk measures, information, and
  dynamic consistency}, Mathematical Finance: An International Journal of
  Mathematics, Statistics and Financial Economics, 16 (2006), pp.~419--441.

\bibitem{zhang2019extending}
{\sc Z.~Zhang, S.~Zohren, and S.~Roberts}, {\em Extending deep learning models
  for limit order books to quantile regression}, arXiv preprint
  arXiv:1906.04404,  (2019).

\bibitem{ziegel2016coherence}
{\sc J.~F. Ziegel}, {\em Coherence and elicitability}, Mathematical Finance, 26
  (2016), pp.~901--918.

\end{thebibliography}

\end{document}